\newcolumntype{C}[1]{>{\centering\let\newline\\\arraybackslash\hspace{0pt}}m{#1}}
\DeclareMathOperator*{\argmin}{arg\,min}
\newtheorem{thm}{Theorem}
\newtheorem{Lem}{Lemma}
\begin{document}
\pagestyle{headings}
\mainmatter
\def\ECCVSubNumber{5812}  

\title{UniCR: Universally Approximated Certified Robustness via Randomized Smoothing} 


\titlerunning{UniCR}
%

\author{Hanbin Hong\inst{1} \and
Binghui Wang\inst{2} \and
Yuan Hong\inst{1}}
\authorrunning{Hanbin Hong et al.}
%
\institute{University of Connecticut, Storrs CT 06269, USA\\ 
\email{hanbin.hong@uconn.edu},\ \email{yuan.hong@uconn.edu}
\and
Illinois Institute of Technology, Chicago IL 60616, USA \\
\email{bwang70@iit.edu}}

\maketitle
\begin{abstract}
We study certified robustness of machine learning classifiers against adversarial perturbations. 
In particular, we propose the first universally approximated certified robustness (UniCR) framework, which can approximate the robustness certification of \emph{any} input on \emph{any} classifier against \emph{any}  $\ell_p$ perturbations with noise generated by \emph{any} continuous probability distribution. Compared with the state-of-the-art certified defenses, UniCR provides many significant benefits: (1) the first universal robustness certification framework for the above 4 “any”s; 
(2) automatic robustness certification that avoids case-by-case analysis, (3) tightness validation of certified robustness, and (4) optimality validation of noise distributions used by randomized smoothing.
We conduct extensive experiments to validate the above benefits of UniCR and the advantages of UniCR over state-of-the-art certified defenses against $\ell_p$ perturbations. 
\keywords{Adversarial Machine Learning; Certified Robustness; Randomized Smoothing}
\end{abstract}

\section{Introduction}
\label{sec:intro}
\vspace{-2mm}

Machine learning (ML) classifiers are vulnerable to adversarial perturbations~\cite{madry2018towards,carlini2017towards,chen2017zoo,chen2020hopskipjumpattack}). Certified defenses 
\cite{wong2018provable,katz2017reluplex,carlini2017provably,fischetti2018deep,gouk2021regularisation,raghunathan2018certified,croce2019provable,mirman2018differentiable} were recently proposed to ensure provable robustness against adversarial perturbations.
Typically, certified defenses aim to derive a certified radius such that an arbitrary $\ell_p$ (e.g., $\ell_1$, $\ell_2$ or $\ell_\infty$) perturbation, when added to a testing input, cannot fool the classifier, if the $\ell_p$-norm value of the perturbation does not exceed the radius. 
Among all certified defenses, randomized smoothing \cite{li2020second,lecuyer2019certified,cohen2019certified} based certified defense
has achieved the state-of-the-art certified radius and  
can be applied to 
\emph{any} classifier. 
Specifically, given a testing input and any classifier, randomized smoothing first defines a noise distribution and adds sampled noises to the testing input; then builds a smoothed classifier based on the noisy inputs, and finally derives certified radius for the smoothed classified, e.g., using the Neyman-Pearson Lemma~\cite{cohen2019certified}, against an $\ell_p$ perturbation. 

However, existing randomized smoothing based (and actually all) certified defenses 
only focus on specific settings 
and cannot universally certify a classifier against \emph{any} $\ell_p$ perturbation or \emph{any} noise distribution. 
For example, the certified radius derived by Cohen et al. \cite{cohen2019certified} is tied to the Gaussian noise and $\ell_2$ perturbation. 
Recent works~\cite{yang2020randomized,zhang2020black,croce2019provable} propose methods to certify the robustness for multiple norms/noises, e.g., Yang et al. \cite{yang2020randomized} propose the level set and differential method 
to derive the certified radii for multiple noise distributions. However, the  certified radius derivation for different norms is  still \emph{subject to case-by-case theoretical analyses}. 
These methods, although achieving somewhat generalized certified robustness, are still 
lack of universality (See Table \ref{table:related work} for the summary).

\begin{table*}[t]
	\small
	\centering
	\caption{Comparison with highly-related works.}
		\vspace{-0.1in}
		\resizebox{\linewidth}{!}{
        \begin{tabular}{|ccccccc|}	
			\hline
			 &Classifier   &Smoothing Noise & Perturbations &Tightness & Optimizable  & Analysis-free\\
            \hline
            Lecuyer et al. \cite{lecuyer2019certified} &Any  & Gaussian/Laplace & Any $\ell_p, p\in \mathbb{R}^+$ & Loose & No &No\\
            Cohen et al. \cite{cohen2019certified} & Any & Gaussian & $\ell_2$  &Strictly Tight & No  & No\\
			Teng et al. \cite{teng2020ell}  &Any & Laplace  & $\ell_1$  &Strictly Tight & No & No\\
            Dvijotham et al. \cite{DBLP:conf/iclr/DvijothamHBKQGX20}   &Any & f-divergence-constrained   & Any $\ell_p, p\in \mathbb{R}^+$ & Loose & No & No\\
            Croce et al. \cite{croce2019provable}   & ReLU-based & No & Any $\ell_p$ for $p>=1$ & Loose & No & No\\
			Yang et al. \cite{yang2020randomized}    &Any & Multiple types & Any $\ell_p, p\in \mathbb{R}^+$ &Strictly Tight & No & No\\
			Zhang et al. \cite{zhang2020black}   &Any &$\ell_p$-term-constrained & $\ell_1, \ell_2, \ell_\infty$  & Strictly Tight & No & Yes\\
			\hline
			Ours (UniCR)   &Any &Any continuous PDF & Any $\ell_p, p\in \mathbb{R}^+$  & Approx. Tight & Yes & Yes\\
			\hline
		\end{tabular}}
		\vspace{-4mm}
	\label{table:related work}
\end{table*}

In this paper, we develop the first 
\underline{Uni}versally Approximated  \underline{C}ertified \underline{R}obust\\
ness (UniCR) framework based on \emph{randomized smoothing}.
Our framework can automate 
the robustness certification for any input on any classifier against any $\ell_p$ perturbation with noises generated by any \emph{continuous probability density function (PDF)}. As shown in Figure \ref{fig:overview}, our UniCR framework provides four unique significant benefits to make certified robustness more universal, practical and easy-to-use with the above four ``any''s. Our key contributions are as follows:
\begin{figure}[!h]
    \vspace{-6mm}
    \centering
    \includegraphics[width=120mm]{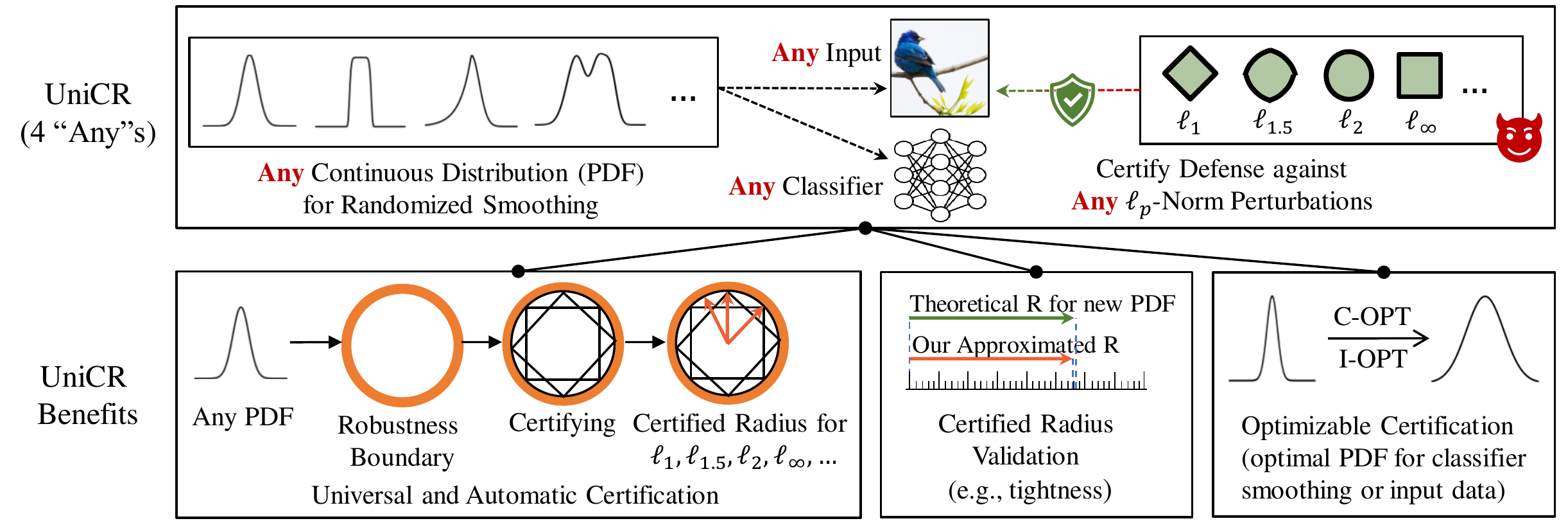}
    \vspace{-3mm}
    \caption{\textmd{Our Universally Approximated Certified Robustness (UniCR) framework.}}
    \label{fig:overview}
    \vspace{-2mm}
\end{figure}


\begin{enumerate}
\vspace{-6mm}
   \item \textbf{Universal Certification}. UniCR is the first universal robustness certification framework for the 4 ``any"s.
   
   
   
    \item \textbf{Automatic Certification}. UniCR 
    provides an automatic robustness certification for all cases. 
    It is easy-to-launch and avoids case-by-case analysis.
   
    \item \textbf{Tightness Validation of Certified Radius}. It is also the first framework that can validate the \emph{tightness} of the derived certified radius in existing certification methods \cite{li2020second,lecuyer2019certified,cohen2019certified} or future methods based on any continuous noise PDF. In Section \ref{sec:rbstudy}, we validate the tightness of the state-of-the-art certification methods (e.g., see Figure \ref{fig:P-R curve}).



    
    \item \textbf{Optimality Validation of Noise PDFs}.
    UniCR can also automatically tune the parameters in noise PDFs to strengthen the robustness certification against any $\ell_p$ perturbations. For instance, On CIFAR10 and ImageNet datasets, UniCR improves as high as $38.78\%$ overall performance over the state-of-the-art certified defenses against all $\ell_p$ perturbations. In Section \ref{sec:exp}, we show that Gaussian noise and Laplace noise are not the optimal randomization distribution against the $\ell_2$ and $\ell_1$ perturbation, respectively.
    
    
    

\end{enumerate}

\section{Universally Approximated Certified
Robustness}
\label{sec:model}
\vspace{-2mm}

In this section, we propose the theoretical foundation for universally certifying a testing input against any $\ell_p$ perturbations with noise from any continuous PDF. 

\vspace{-2mm}
\subsection{Universal Certified Robustness}

Consider a general classification problem that classifies input data in $\mathbb{R}^d$ to a class belonging to a set of classes $\mathcal{Y}$. 
Given an input $x \in \mathbb{R}^d$, an \emph{any} (base) classifier $f$ that maps $x$ to a class in $\mathcal{Y}$, and a random noise $\epsilon$ from \emph{any} continuous PDF $\mu_x$. 
We define a 
smoothed classifier $g$  as the most probable class over the noise-perturbed input:
\begin{equation}
    g(x)=\arg \max_{c\in \mathcal{Y}} \mathbb{P}(f(x+\epsilon)=c)
\label{general randomized smoothing}
\end{equation}
Then, we show that the input has a certified accurate prediction against any $l_p$ perturbation and its certified radius is given by the following theorem.

\begin{thm}{(\textbf{Universal Certified Robustness})}
Let $f:\mathbb{R}^d \rightarrow \mathcal{Y}$ be any deterministic or random classifier, and let $\epsilon$ be drawn from an arbitrary continuous PDF $\mu_x$. 
Denote $g$ as the smoothed classifier in Equation (\ref{general randomized smoothing}), the most probable and second probable classes for predicting a testing input $x$ via $g$ as $c_A, c_B \in \mathcal{Y}$, respectively. If the lower bound of the class $c_A$'s prediction probability $\underline{p_A} \in [0, 1]$, and the upper bound of the class $c_B$'s prediction probability $\overline{p_B}\in [0,1] $ satisfy:
{
\begin{equation}
    \mathbb{P}(f(x+\epsilon)=c_A) \ge \underline{p_A} \ge \overline{p_B} \ge \max_{c \neq c_A} \mathbb{P}(f(x+\epsilon)=c)
\label{eq2}
\end{equation}
}%
Then, we guarantee that $g(x+\delta)=c_A$ for all $||\delta||_p \leq R$,  
where $R$ is called the {\bf certified radius} and it is the minimum $\ell_p$-norm of all the adversarial perturbations $\delta$ that satisfies the {\bf robustness boundary conditions} as below:
\vspace{-2mm}
{\small
\begin{align}
    &\mathbb{P}(\frac{\mu_x(x-\delta)}{\mu_x(x)}\leq t_A)=\underline{p_A}, \quad 
    \mathbb{P}(\frac{\mu_x(x-\delta)}{\mu_x(x)}\ge t_B)=\overline{p_B}, \nonumber\\
    &\mathbb{P}(\frac{\mu_x(x)}{\mu_x(x+\delta)}\leq t_A) = \mathbb{P}(\frac{\mu_x(x)}{\mu_x(x+\delta)}\ge t_B) \label{thm2: conditions}
\end{align}
}%
where 
$t_A$ and $t_B$ are auxiliary parameters to satisfy the above conditions.
\label{thm2}
\end{thm}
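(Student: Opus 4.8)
The plan is to generalize the Neyman--Pearson argument of Cohen et al.\ \cite{cohen2019certified} from the Gaussian/$\ell_2$ case to an \emph{arbitrary} continuous noise density and \emph{arbitrary} $\ell_p$. First I would reduce the claim $g(x+\delta)=c_A$ to the single inequality $\mathbb{P}(f(x+\delta+\epsilon)=c_A) > \max_{c\neq c_A}\mathbb{P}(f(x+\delta+\epsilon)=c)$, since by definition $g$ returns the class with the strictly largest perturbed-prediction probability. Throughout I would exploit the location-family structure of additive smoothing noise: the density of $x+\delta+\epsilon$ at a point $z$ equals $\mu_x(z-\delta)$, so the likelihood ratio between the shifted and unshifted smoothing distributions at $z$ is exactly $\mu_x(z-\delta)/\mu_x(z)$.

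Next I would lower-bound the numerator probability. Among all classifiers consistent with $\mathbb{P}(f(x+\epsilon)=c_A)\ge \underline{p_A}$, the Neyman--Pearson lemma singles out the worst case as the indicator of the likelihood-ratio sublevel set $\{z:\mu_x(z-\delta)/\mu_x(z)\le t_A\}$, with $t_A$ pinned by the first boundary condition so that this set has $\mu_x$-mass exactly $\underline{p_A}$. Evaluating the worst-case mass under the shifted distribution and applying the change of variables $w=z-\delta$ (which turns $z\sim\mu_{x+\delta}$ into $w\sim\mu_x$) rewrites the bound as $\mathbb{P}(\mu_x(w)/\mu_x(w+\delta)\le t_A)$, the left-hand side of the third condition. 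Symmetrically, I would upper-bound $\max_{c\neq c_A}\mathbb{P}(f(x+\delta+\epsilon)=c)$ using the constraint $\overline{p_B}$ and the likelihood-ratio superlevel set $\{z:\mu_x(z-\delta)/\mu_x(z)\ge t_B\}$ of $\mu_x$-mass $\overline{p_B}$ (the second condition), obtaining $\mathbb{P}(\mu_x(w)/\mu_x(w+\delta)\ge t_B)$, the right-hand side of the third condition.

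Combining the two bounds, $c_A$ is certified to win at $\delta$ precisely when the lower bound strictly exceeds the upper bound, i.e.\ when $\mathbb{P}(\mu_x(w)/\mu_x(w+\delta)\le t_A) > \mathbb{P}(\mu_x(w)/\mu_x(w+\delta)\ge t_B)$; equality is exactly the third robustness-boundary condition and therefore marks the decision boundary of the worst-case smoothed classifier. Reading the two sides as monotone in $\|\delta\|_p$ (the shifted and unshifted distributions separate as $\delta$ grows, shrinking the $c_A$-mass while inflating the competing mass), I would conclude that every $\delta$ with $\|\delta\|_p$ strictly below the smallest norm $R$ among boundary-condition solutions yields the strict inequality, hence $g(x+\delta)=c_A$.

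The main obstacle will be making the Neyman--Pearson step rigorous for a completely general density rather than a fixed parametric family. Two points need care: (i) existence and exactness of the thresholds $t_A,t_B$ solving the first two equalities, which is where continuity of $\mu_x$ is essential---it guarantees the likelihood ratio has no atoms, so its distribution function is continuous and attains every value in $[0,1]$, letting the sublevel/superlevel sets hit the prescribed masses $\underline{p_A},\overline{p_B}$ exactly; and (ii) justifying that the crossing of the two worst-case curves is attained at a minimal-norm $\delta$ and that below that norm the inequality is strict, which requires continuity of the boundary quantities in $\delta$ and handling possible non-uniqueness of boundary solutions by defining $R$ as the minimum-norm one.
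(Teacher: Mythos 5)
Your proposal follows essentially the same route as the paper's proof: it invokes the Neyman--Pearson lemma on the likelihood-ratio sublevel set $\{z:\mu_x(z-\delta)/\mu_x(z)\le t_A\}$ and superlevel set $\{z:\mu_x(z-\delta)/\mu_x(z)\ge t_B\}$, pins $t_A,t_B$ by the mass conditions, and uses the change of variables $y=x+\delta$ to rewrite the worst-case comparison as the third boundary condition, exactly as in Appendix B.1. Your added remarks on the existence of the thresholds under continuity of $\mu_x$ and on why the minimum-norm boundary solution certifies the whole $\ell_p$ ball are reasonable refinements of points the paper treats only informally, but they do not change the argument.
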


\begin{proof}
See the detailed proof in Appendix \ref{apd: thm2 proof}. 
\end{proof}

\noindent\textbf{Robustness Boundary}. Theorem \ref{thm2} provides a novel insight that meeting certain conditions is equivalent to deriving the certified robustness. 
The conditions in Equation (\ref{thm2: conditions}) construct a boundary in the perturbation $\delta$ space, which is defined as the ``\emph{robustness boundary}''. Within this robustness boundary, the  prediction outputted by the smoothed classifier $g$  is certified to be consistent and correct. The robustness boundary, rather than the certified radius, is actually more general to measure the certified robustness since the space constructed by each certified radius (against any specific $\ell_p$ perturbation) is only a subset of the space inside the robustness boundary. Without loss of generality, the traditional certified radius can be alternatively defined as the radius that maximizes the $\ell_p$ ball inside the robustness boundary, which is also the perturbation $\delta$ on the boundary that minimizes $||\delta||_p$. 
Figure \ref{fig:insight} illustrates the relationship between certified radius and the robustness boundary against $\ell_1$, $\ell_2$ and $\ell_\infty$ perturbations. 

Notice that, given any continuous noise PDF, the corresponding robustness boundary for all the $\ell_p$-norms would naturally exist. Each maximum $\ell_p$ ball is a subspace of the robustness boundary, and gives the certified radius for that specific $\ell_p$-norm. Thus, all the certified radii can be universally derived, and Theorem \ref{thm2} provides a theoretical foundation to certify any input against any $\ell_p$ perturbations with any continuous noise PDF.

\begin{figure}[!t]
    \centering
    \includegraphics[width=85mm]{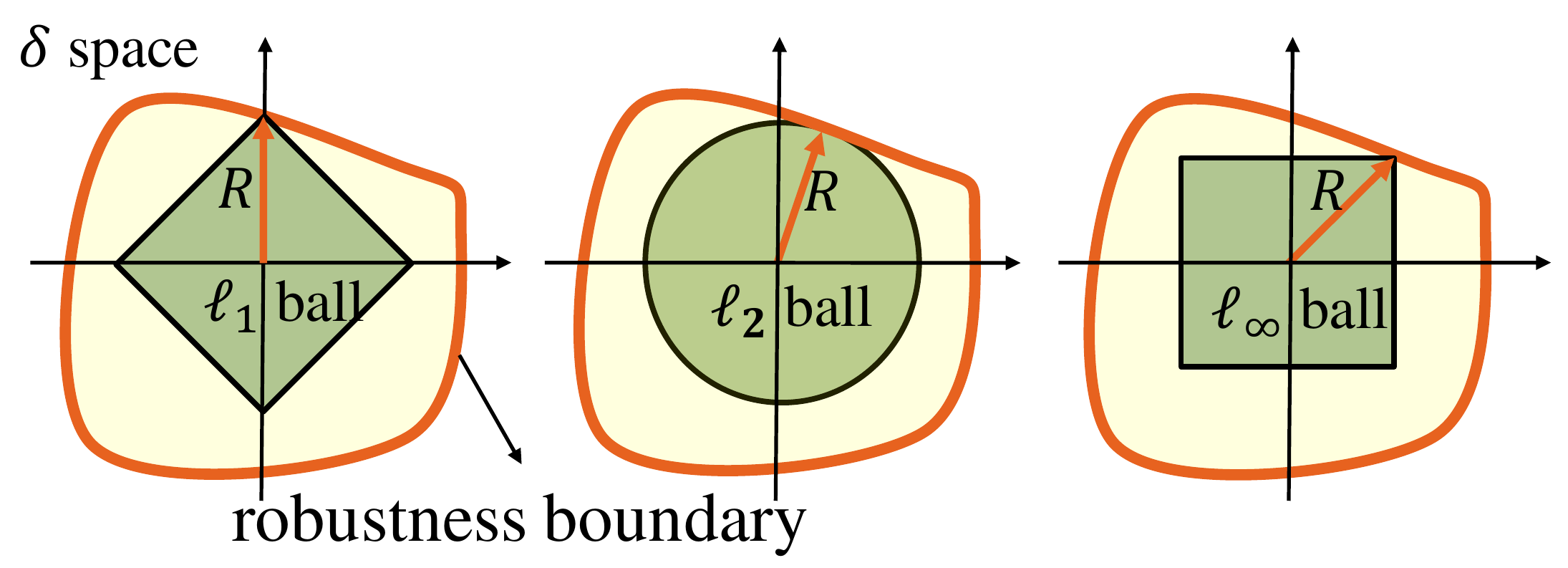}
    \vspace{-3mm}
    \caption{\textmd{An illustration to Theorem \ref{thm2}. The conditions in Theorem \ref{thm2} construct a ``{\color{red}\textbf{Robustness Boundary}}" in $\delta$ space. In case of a perturbation inside the robustness boundary, the smoothed prediction can be certifiably correct.  From left to right, the figures show that the minimum $||\delta||_1$, $||\delta||_2$ and $||\delta||_\infty$ on the robustness boundary are exactly the certified radius $R$ in $\ell_1$, $\ell_2$ and $\ell_\infty$-norm, respectively.}} 
    \vspace{-6mm}
    \label{fig:insight}
\end{figure}


\vspace{0.05in}
\noindent\textbf{All $\ell_p$ Perturbations}. 
Although we mainly introduce UniCR against  $\ell_1$, $\ell_2$ and $\ell_\infty$ 
perturbations, our UniCR is not limited to these three norms. 
We emphasize that any $p\in \mathbb{R}^+$ (See Appendix \ref{sec: dif p}) 
can be used 
and our UniCR can derive the corresponding certified radius since our robustness boundary gives a general boundary in the $\delta$ perturbation space.

\vspace{-3mm}
\subsection{Approximating Tight Certified Robustness}
\label{sec: UCR framework}



The tight certified radius can be derived by finding a perturbation $\delta$ on the robustness boundary that has a minimum $\|\delta\|_p$
(for any $p \in \mathbb{R}^+$). 
However, it is challenging to either 
find a perturbation $\delta$ that is exactly on the robustness boundary, or find the minimum $||\delta||_p$. Here, we design an alternative two-phase optimization scheme to accurately approximate the tight certification in practice.
In particular, Phase I is to suffice the conditions such that $\delta$ is on the robustness boundary, and Phase II is to minimize the $\ell_p$-norm.

 We perform Phase I by the ``scalar optimization'', where any perturbation $\delta$ will be $\lambda$-scaled to the robustness boundary (see \textcircled{1} in Figure \ref{fig:ucr_opt}). We perform Phase II by the ``direction optimization'', where the direction of $\delta$ will be optimized towards a minimum $\|\lambda  \delta\|_p$ (see \textcircled{2} in Figure \ref{fig:ucr_opt}). In the two-phase optimization, the direction optimization will be iteratively executed until finding the minimum $||\lambda \delta||_p$, where the perturbation $\delta$  will be scaled to the robustness boundary beforehand in every iteration. 
 Thus, the intractable optimization problem in Equation~\ref{thm2: conditions} can be converted to:
{\small
\begin{align}
    &R=||\lambda \delta||_p, 
    \nonumber\\
    & s.t.~~~~  \delta \in \argmin_{\delta} ||\lambda \delta||_p, \quad 
    \lambda = \argmin_{\lambda} |K| 
    , \nonumber \\ 
    &\qquad{\mathbb{P}(\frac{\mu_x(x-\lambda\delta)}{\mu_x(x)}\leq t_A)=\underline{p_A}}, \quad 
    \mathbb{P}(\frac{\mu_x(x-\lambda\delta)}{\mu_x(x)}\ge t_B)=\overline{p_B}, \nonumber\\
    &\qquad K=\mathbb{P}(\frac{\mu_x(x)}{\mu_x(x+\lambda\delta)}\leq t_A) 
    -\mathbb{P}(\frac{\mu_x(x)}{\mu_x(x+\lambda\delta)}\ge t_B).
\label{opt 2}
\vspace{-0.2in}
\end{align}
}%
The scalar optimization in Equation (\ref{opt 2}) aims to find the scale factor $\lambda$ that scales a perturbation $\delta$ to the boundary so that $|K|$ approaches $0$. With the scalar $\lambda$ for ensuring that the scaled $\delta$ is nearly on the boundary, the direction optimization optimizes the perturbation $\delta$'s direction to find the certified radius $R=||\lambda \delta||_p$. We also present the theoretical analysis on the certification confidence and the optimization convergence in Appendix \ref{apd: confidence} and \ref{apd:convergence}, respectively.







\begin{figure}[!t]
    \centering
    \includegraphics[width=85mm]{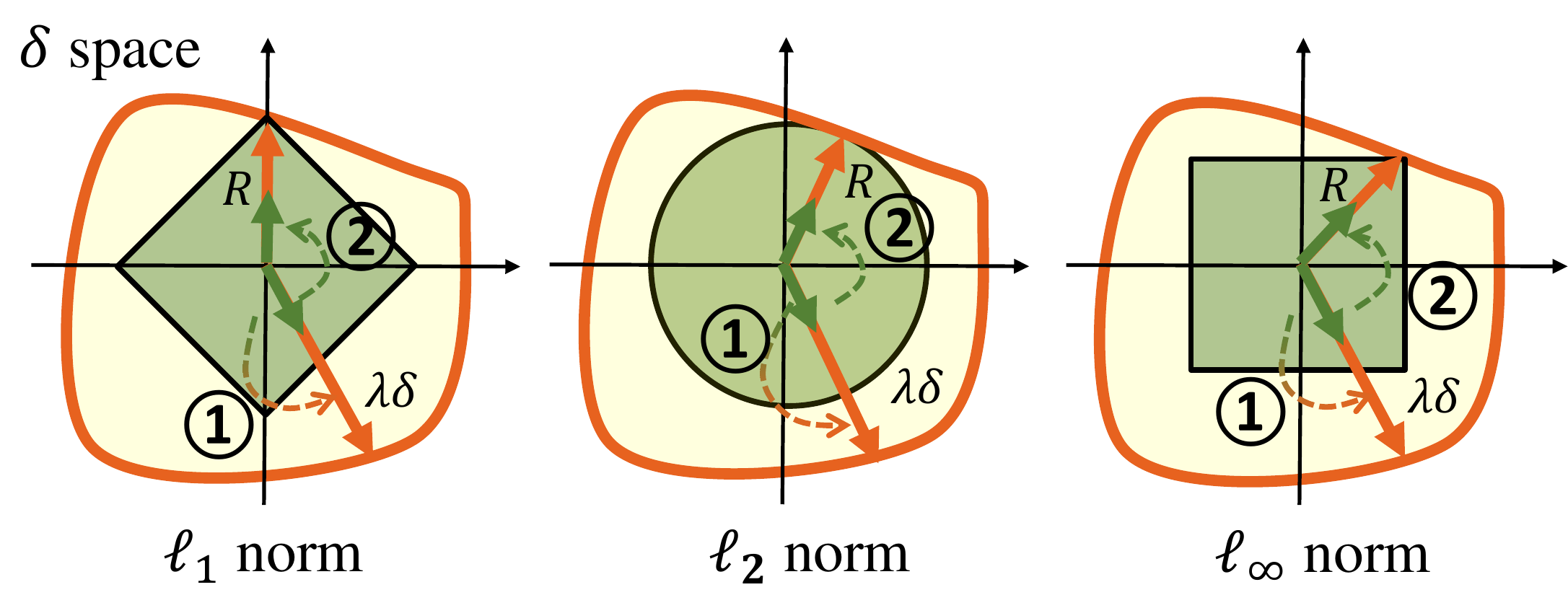}
    \centering
    \vspace{-3mm}
    \caption{\textmd{An illustration to estimating the certified radius. The scalar optimization (\textcircled{1}) and direction optimization (\textcircled{2}) effectively find the minimum $||\delta||_p$ within the robustness boundary, which is the certified radius $R$. 
    }}
    \vspace{-4mm}
    \label{fig:ucr_opt}
\end{figure}

\section{Deriving Certified Radius within Robustness Boundary}
\label{sec:rbstudy}
\vspace{-2mm}
In this section, we will introduce how to universally and automatically derive the certified radius against any $\ell_p$ perturbations within the robustness boundary constructed by any noise PDF. In particular, we will present practical algorithms for solving the two-phase optimization problem to approximate the certified radius, empirically validate that our UniCR approximates the tight certified radius derived by recent works \cite{cohen2019certified,yang2020randomized,teng2020ell}, and finally discuss how to apply UniCR to validate the radius of existing certified defenses.

\vspace{-2mm}
\subsection{Calculating Certified Radius in Practice}
\label{sec:two-phase algorithms}
Following the existing randomized smoothing based defenses~\cite{cohen2019certified,teng2020ell}, we first use the Monte Carlo method to estimate the probability bounds ($\underline{p_A}$ and $\overline{p_B}$). Then, we use them in our two-phase optimization scheme to derive the certified radius. 


 \vspace{0.05in}

\noindent\textbf{Estimating Probability Bounds}. The two-phase optimization needs to estimate the probabilities bounds $\underline{p_A}$ and $\overline{p_B}$ and compute two auxiliary parameters $t_A$ and $t_B$ (required by the certified robustness based on the Neyman-Pearson Lemma in Appendix \ref{sec:preli}). Identical to existing works \cite{cohen2019certified,teng2020ell}, the probabilities bounds $\underline{p_A}$ and $\overline{p_B}$ are commonly estimated by the Monte Carlo method \cite{cohen2019certified}. Given the estimated $\underline{p_A}$ and $\overline{p_B}$ as well as any given noise PDF and a perturbation $\delta$, we also use the Monte Carlo method to estimate the cumulative density function (CDF) of fraction $\mu_x(x-\lambda\delta)/\mu_x(x)$. Then, we can compute the auxiliary parameters $t_A$ and $t_B$. Specifically, the auxiliary parameters $t_A$ and $t_B$ can be computed by $t_A=\Phi^{-1}(\underline{p_A})$ and $t_B=\Phi^{-1}(\overline{p_B})$, where $\Phi^{-1}$ is the inverse CDF of the fraction $\mu_x(x-\lambda\delta)/\mu_x(x)$. The procedures for computing $t_A$ and $t_B$ are detailed in Algorithm \ref{alg1} in Appendix \ref{apd: algs}.


\vspace{0.05in}

\noindent\textbf{Scalar Optimization}. Finding a perturbation $\delta$ that is exactly on the robustness boundary is computationally challenging. 
Thus, we alternatively scale the $\delta$ to approach the boundary. We use the binary search algorithm to find a scale factor that minimizes $|K|$ (\emph{the distance between $\delta$ and the robustness boundary}). The algorithm and detailed description are presented in Appendix \ref{apd: scalarAlg}. 




\vspace{0.05in}

\noindent\textbf{Direction Optimization}. We use the Particle Swarm Optimization (PSO) method \cite{kennedy1995particle} to find $\delta$ that minimizes the $\ell_p$-norm after scaling to the robustness boundary. In each iteration of PSO, the particle's position represents $\delta$, and the cost function is $f_{PSO}(\delta)=||\lambda \delta||_p$, where the scalar $\lambda$ is found by the scalar optimization. The PSO aims to find the position $\delta$ that can minimize the cost function. To pursue convergence, we choose some initial positions in symmetry for different $\ell_p$-norms.
Empirical results show that the radius obtained by PSO with these initial positions can accurately approximate the tight certified radius. We show how to set the initial positions in Appendix \ref{apd: DirectAlg}.

In our experiments, the certification (deriving the certified radius) can be efficiently completed on MNIST \cite{lecun2010mnist}, CIFAR10 \cite{krizhevsky2009learning} and ImageNet \cite{ILSVRC15} datasets (less than 10 seconds per image), as shown in Appendix \ref{sec:eff}.




\vspace{0.05in}


\noindent {\bf Certified Radius Comparison with State-of-The-Arts.} We compare the certified radius obtained by our two-phase optimization method and that by the state-of-the-arts~\cite{cohen2019certified,yang2020randomized,teng2020ell} and the comparison results are shown in Figure \ref{fig:P-R curve}.  
Note that the certified radius is a function of $p_A$ (the prediction probability of the top-1 class). The $p_A$-$R$ curve can well depict the certified radius $R$ w.r.t. $p_A$. 
We observe that 
our $p_A$-$R$ curve highly approximates the tight theoretical curves in existing works, e.g., the Gaussian noise against $\ell_2$ and $\ell_\infty$ perturbations \cite{cohen2019certified,yang2020randomized}, Laplace noise against $\ell_1$ perturbations \cite{teng2020ell}, as well as General Normal noise and General Exponential noise derived by Yang et al. \cite{yang2020randomized}'s method. 

\begin{figure}[!t]
\centering
\subfigure[Laplace vs. $\ell_1$ ]{
\begin{minipage}[t]{0.3\linewidth}
\centering
\includegraphics[width=38 mm]{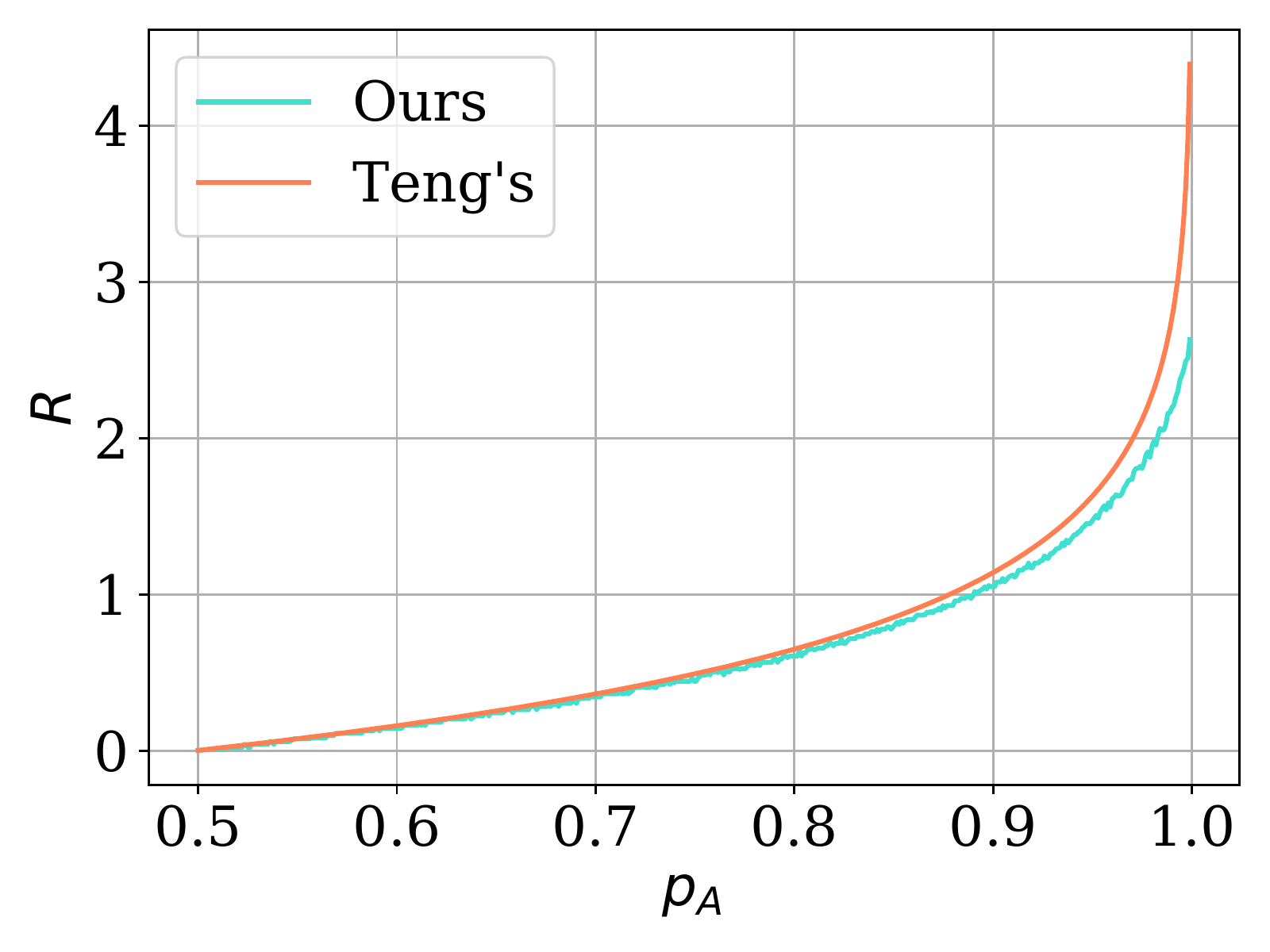}
\end{minipage}
}
\subfigure[ Gaussian vs. $\ell_2$ ]{
\begin{minipage}[t]{0.3\linewidth}
\centering
\includegraphics[width=38 mm]{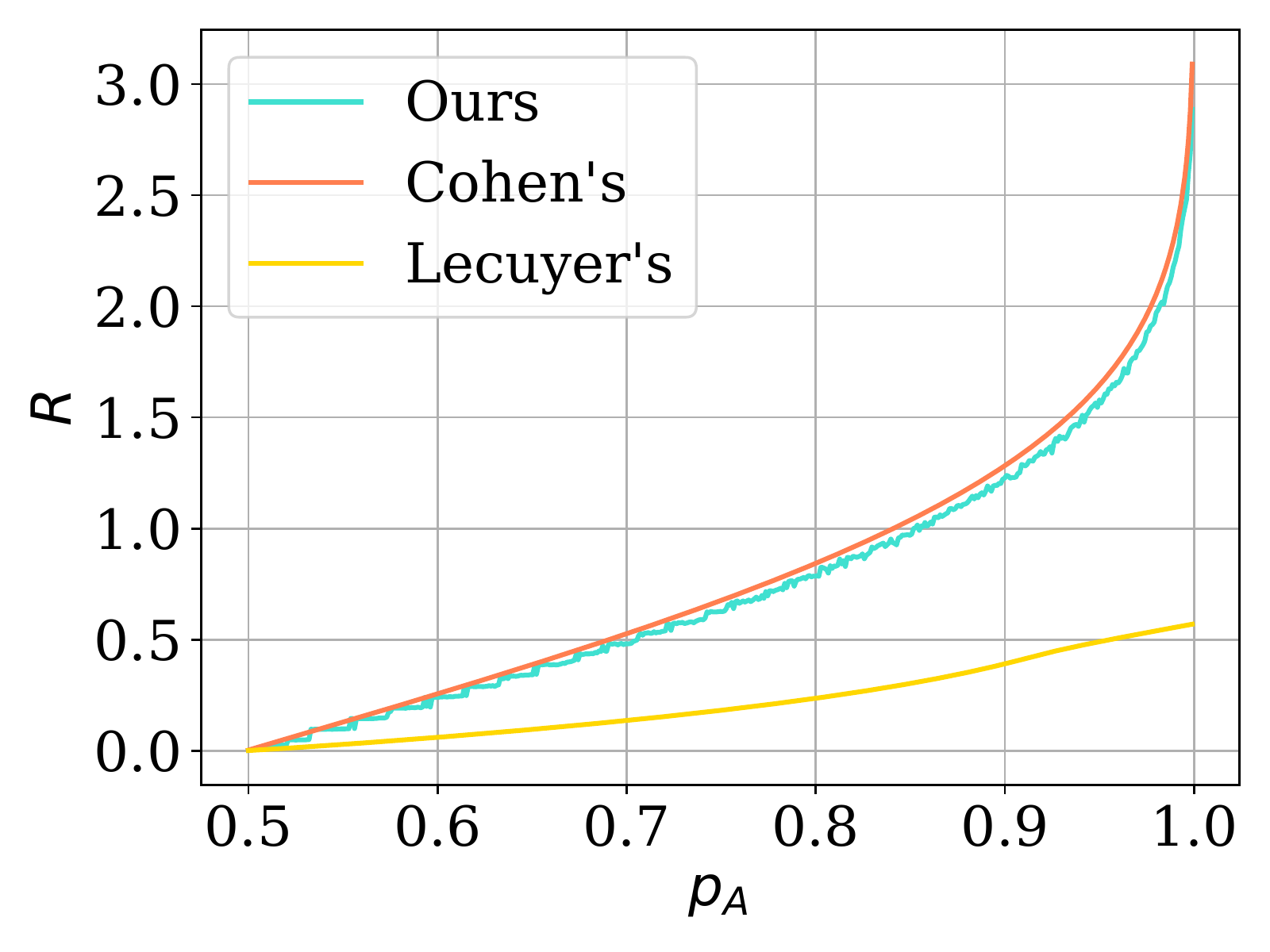}
\end{minipage}
}
\subfigure[Gaussian vs. $\ell_\infty$]{
\begin{minipage}[t]{0.3\linewidth}
\centering
\includegraphics[width=38 mm]{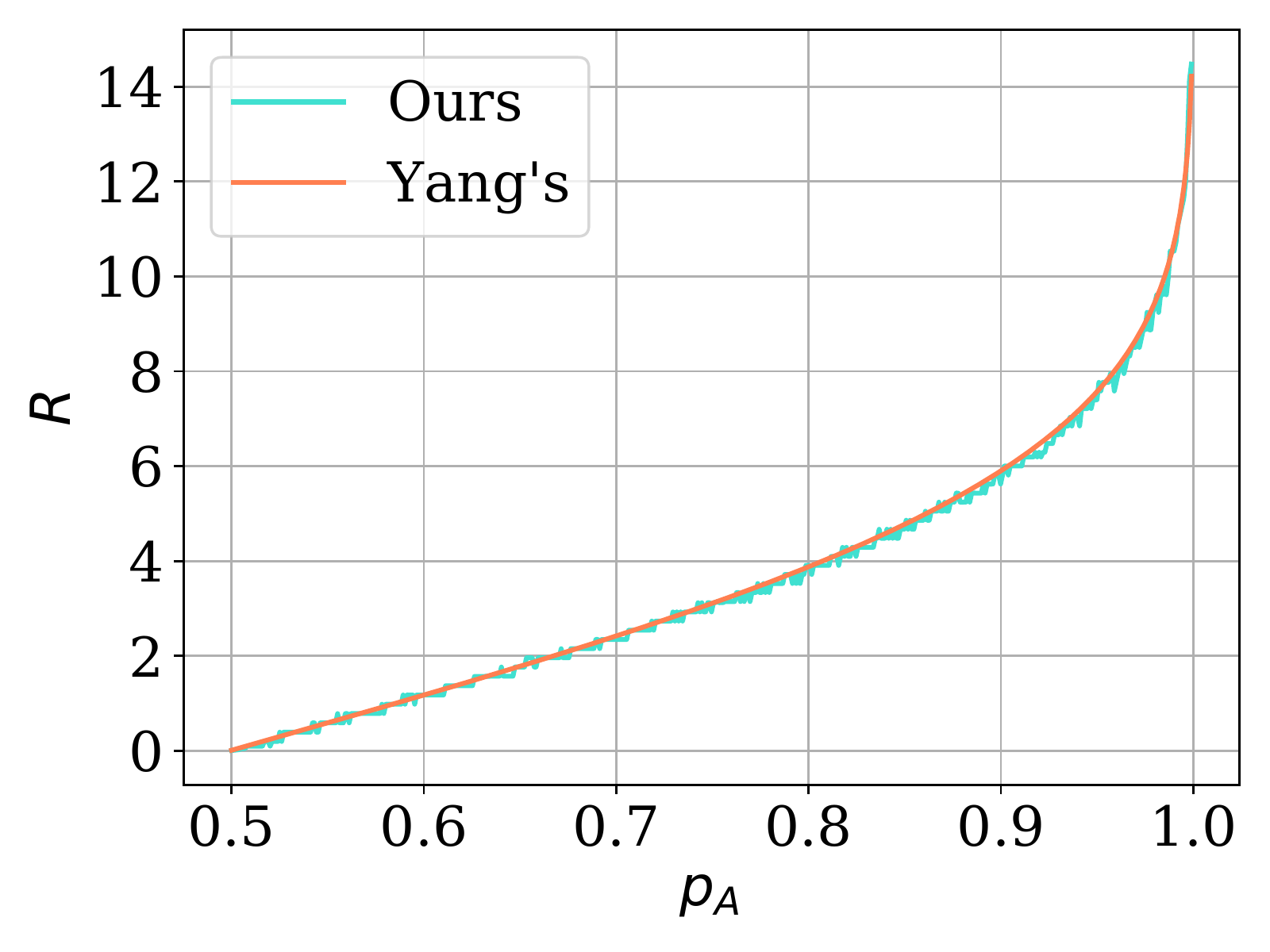}
\end{minipage}
}
\subfigure[Pareto vs. $\ell_1$]{
\begin{minipage}[t]{0.3\linewidth}
\centering
\includegraphics[width=38 mm]{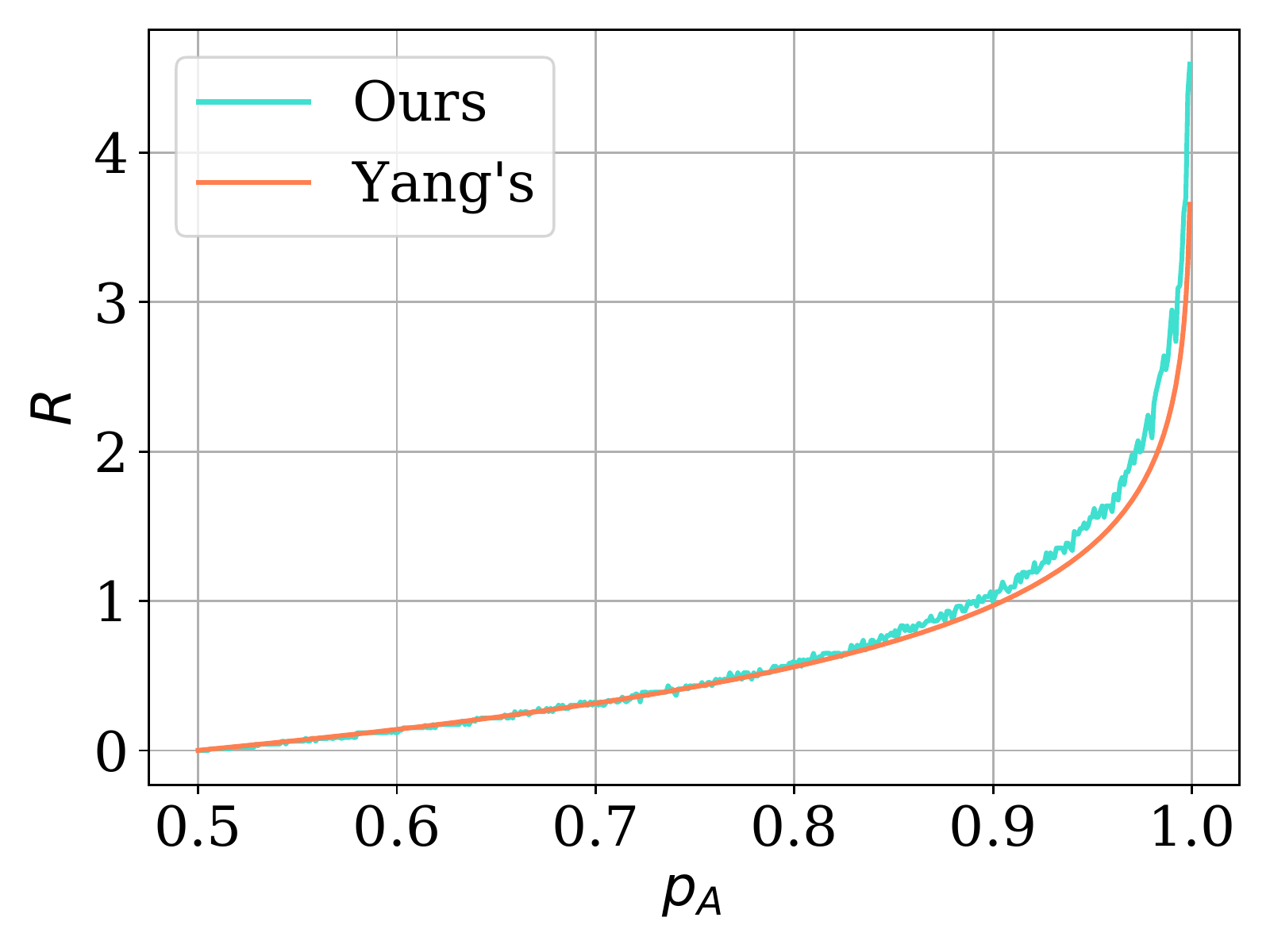}
\end{minipage}
}
\subfigure[General Normal vs. $\ell_1$]{
\begin{minipage}[t]{0.3\linewidth}
\centering
\includegraphics[width=38 mm]{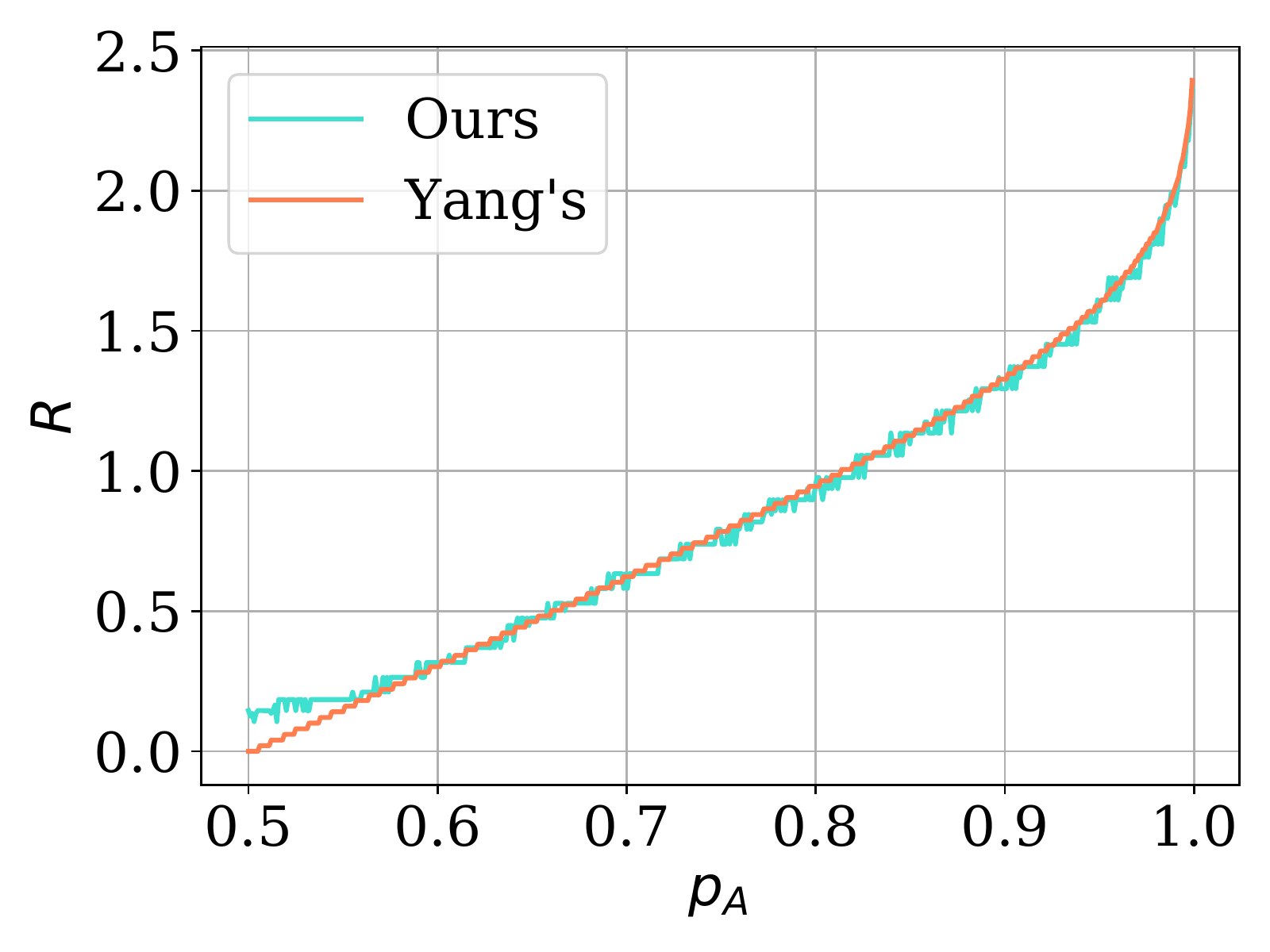}
\end{minipage}
}
\subfigure[Exponential Mix. vs. $\ell_1$]{
\begin{minipage}[t]{0.3\linewidth}
\centering
\includegraphics[width=38 mm]{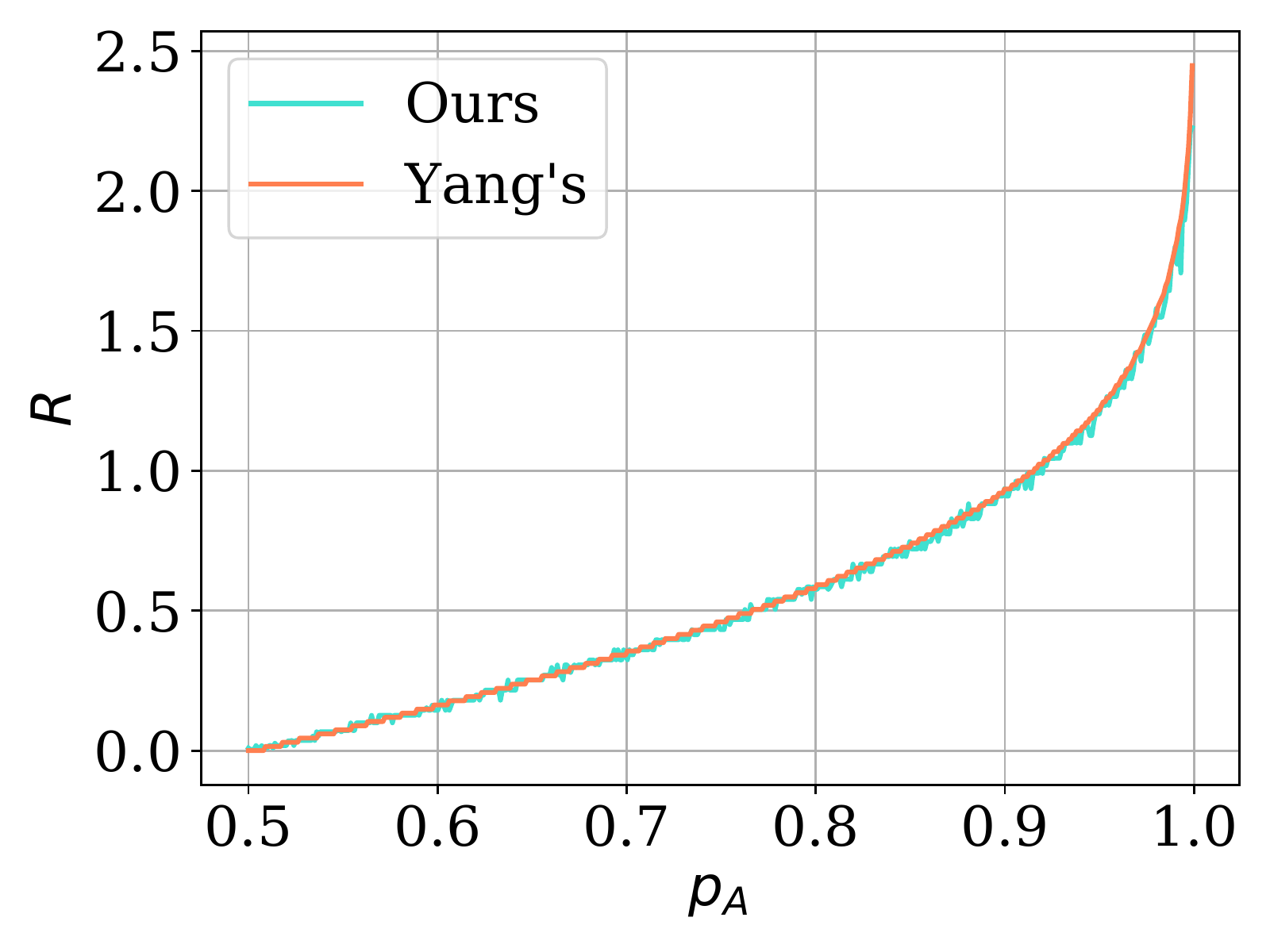}
\vspace{-6mm}
\end{minipage}
}
\centering
\vspace{-3mm}
\caption{$p_A$-$R$ curve comparison of our method and state-of-the-arts (i.e., Teng et al. \cite{teng2020ell}, Cohen et al. \cite{cohen2019certified}, Lecuyer et al. \cite{lecuyer2019certified} and Yang et al. \cite{yang2020randomized}). {We observe that the certified radius obtained by our UniCR is close to that obtained by the state-of-the-arts. These results demonstrate that our UniCR can approximate the tight certification to any input in any $\ell_p$ norm with any continuous noise distribution. 
We also evaluate our UniCR's defense accuracy against a diverse set of attacks, including universal attacks \cite{co2019procedural}, white-box attacks \cite{croce2020reliable,wong2019wasserstein}, and black-box attacks \cite{andriushchenko2020square,chen2020hopskipjumpattack}, and against $\ell_1,\ell_2$ and $\ell_\infty$ perturbations. The experimental results show that UniCR is as robust as the state-of-the-arts ($100\%$ defense accuracy) against all the types of the real attack. The detailed experimental settings and results are presented in Appendix \ref{apd: realattack}.
}}
\label{fig:P-R curve}
\vspace{-3mm}
\end{figure}

\vspace{0.05in}

\noindent\textbf{Tightness Validation of Certified Radius}. Since our UniCR accurately approximates the tight certified radius, it can be used as an auxiliary tool to validate whether an obtained certified radius is tight or not. For example, the certified radius derived by PixelDP \cite{lecuyer2019certified}\footnote{PixelDP \cite{lecuyer2019certified} adopts differential privacy \cite{DworkR14}, e.g., Gaussian mechanism to generate noises for each pixel such that certified robustness can be achieved for images.} is loose, because \cite{lecuyer2019certified}'s $p_A$-$R$ curve in Figure \ref{fig:P-R curve}(b) is far below ours. Also, Yang et al. \cite{yang2020randomized} derives a low bound certified radius for Pareto Noise (Figure \ref{fig:P-R curve}(d))--- It shows that this certified radius is not tight either since it is below ours. For those theoretical radii that are slightly above our radii, they are likely to be tight.

Moreover, due to the high university, our UniCR can even derive the certified radii for complicated noise PDFs, e.g., mixture distribution in which the certified radii are difficult to be theoretically derived. In Section \ref{exp: dif noises}, we show some examples of deriving radii using UniCR on a wide variety of noise distributions in Figure \ref{fig:p-r curve l1}-\ref{fig:p-r curve linf}. In most examples, the certified radii have not been studied before. 





\vspace{-1mm}
\section{Optimizing Noise PDF for Certified Robustness}
\label{sec:optimal}
\vspace{-2mm}
UniCR can derive the certified radius using any continuous noise PDF for randomized smoothing. This provides the flexibility to optimize a noise PDF for enlarging the certified radius.
In this section, we will optimize the noisy PDF in our UniCR framework for obtaining better certified robustness.


\vspace{-3mm}
\subsection{Noise PDF Optimization}
\vspace{-2mm}
All the existing randomized smoothing methods \cite{cohen2019certified,teng2020ell,yang2020randomized,zhang2020black} use the same noise for training the smoothed classifier and certifying the robustness of testing inputs. The motivation is that: the training can improve the lower bound of the prediction probability over the the same noise as the certification. 
Here, the question we ask is: Must we necessarily use the same noise PDF to train the smoothed classifier and derive the certified robustness? Our answer is No!


    


\begin{wrapfigure}{R}{0.6\textwidth}
\vspace{-10mm}
    \centering
    \includegraphics[width=60 mm]{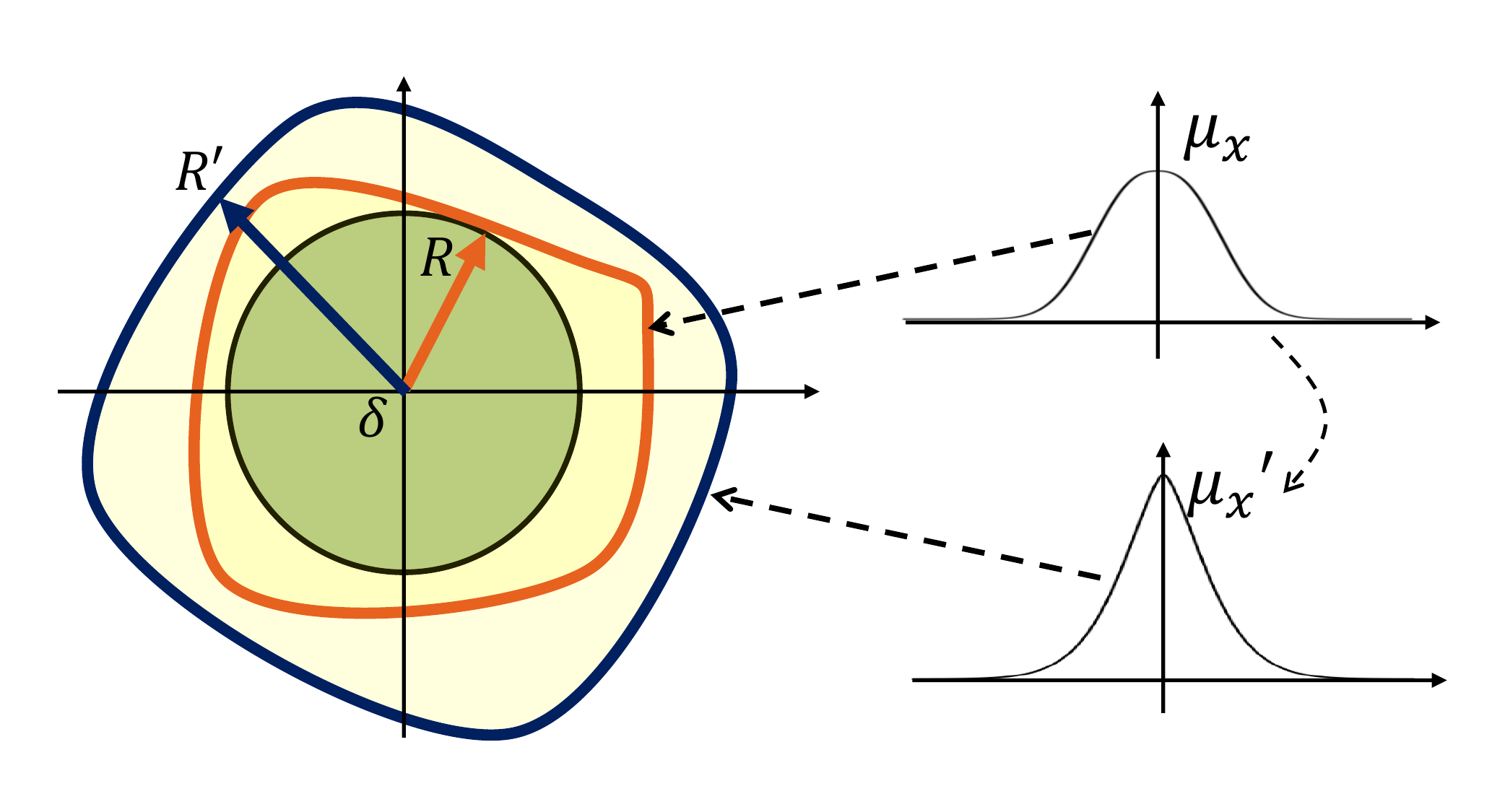}
    \vspace{-4mm}
    \caption{An illustration to noise PDF optimization (take $\ell_2$-norm perturbation as an example). The noise distribution is tuned from $\mu_x$ to $\mu_x'$, which enlarges the robustness boundary. Thus,  UniCR can find a larger certified radius $R'$.}
    \vspace{-8mm}
    \label{fig:optimization}
\end{wrapfigure}

We study the master optimization problem that uses UniCR as a function to maximize the certified radius by tuning the noise PDF (different randomization), as shown in Figure \ref{fig:optimization}. To defend against certain $\ell_p$ perturbations for a classifier, we consider the noise PDF as a variable (Remember that UniCR can provide a certified radius for each noise PDF), and study the following two master optimization problems with two different strategies:


\begin{enumerate}
\vspace{-1mm}

    \item \emph{Classifier-Input Noise Optimization} (``\textbf{C-OPT}''): finding the optimal noise PDF and injecting \emph{the same noise} from this noise PDF into both the training data to train a classifier and testing input to build a smoothed classifier. 
    
    \item \emph{Input Noise Optimization} (``\textbf{I-OPT}''): Training a classifier with the standard noise (e.g., Gaussian noise), while finding the optimal noise PDF for the testing input and injecting noise from this PDF into the testing input only. 
\end{enumerate}



\vspace{-3mm}
\subsection{C-OPT and I-OPT}
\vspace{-2mm}

Before optimizing the certified robustness, we need to define metrics for them. First, since I-OPT only optimizes the noise PDF when certifying each testing input, a ``better'' randomization in I-OPT can be directly indicated by a larger certified radius for a specific input. Second, since C-OPT optimizes the noise PDF for the entire dataset in both training and robustness certification, a new metric for the performance on the entire dataset need to be defined. 

Existing works \cite{zhang2020black,yang2020randomized} draw several certified accuracy vs. certified radius curves computed by noise with different variances (See Figure \ref{fig:Rscore} in Appendix \ref{apd: Metrics}). These curves 
represent the certified accuracy at a range of certified radii, where the certified accuracy at radius $R$ is defined as the percent of the testing samples with a derived certified radius larger than $R$ (and correctly predicted by the smoothed classifier). To simply measure the overall performance, we use the area under the curve 
as an overall metric to the certified robustness, namely ``robustness score''. Then, we design the C-OPT method based on this metric. Specifically, the robustness score $R_{score}$ is formally defined as below:
{\small
\begin{equation}
\label{eq: robustness score}
    R_{score}=\int_{0}^{+\infty} \max_{\sigma}(Acc_{\sigma}(R)) dR, \sigma \in \Sigma,
\end{equation}
}%
where $Acc_{\sigma}(R)$ is the certified accuracy at radius $R$ computed by the noises with variance $\sigma$, and $\Sigma$ is a set of candidate $\sigma$. 

Notice that our UniCR can automatically approximate the certified radius and compute the robustness score w.r.t. different noise PDFs, thus we can tune the noise PDF towards a better robustness score. From the perspective of optimization, denoting the noise PDF as $\mu$, the C-OPT and the I-OPT problems are defined as $\max_\mu R_{score}$ for a classifier and $\max_\mu R$ for an input, respectively.



\vspace{0.05in}

\noindent \textbf{Algorithms for Noise PDF Optimization.} 
We use grid-search in C-OPT to search the best parameters of the noise PDF. We use Hill-Climbing algorithms in I-OPT to find the best parameters of the noise PDF around the noise distribution used in training while maintaining the certified accuracy. 

With the UniCR for estimating the certified radius, we can further tune the noise PDF to each input or the classifier. Specifically, let $\mu(x,\bm{\alpha})$ denote the noise PDF, where $\bm{\alpha}$ is a set of hyper-parameters in the function, i.e., $\bm{\alpha} =[\alpha_1, \alpha_2,...,\alpha_m]$. We simply use grid-search algorithm to find the best hyper-parameters in the classifier smoothing (C-OPT). For the input noise optimization (I-OPT), we use the Hill-Climbing algorithm to find the optimal hyper-parameters in the function for each input. During the algorithm execution, hyper-parameter for the input is iteratively updated if a better solution is found in each round, until convergence. The procedures for the Hill Climbing algorithm are summarized in Algorithm \ref{alg3} in Appendix \ref{apd: I-OPT alg}.

\vspace{0.05in}

\noindent\textbf{Optimality Validation of Noise PDF}. 
Finding an optimal noise PDF 
against a specific $\ell_p$ perturbation is important. Although Gaussian distribution can be used for defending against $\ell_2$ perturbations with tight certified radius, there is no evidence showing that Gaussian distribution is the optimal distribution against $\ell_2$ perturbations. Our UniCR can also somewhat validate the optimality of using different noise PDFs against different $\ell_p$ perturbations. 
For instance, Cohen et al. \cite{cohen2019certified}'s certified radius is tight for Gaussian noise against $\ell_2$ perturbations (see Figure \ref{fig:P-R curve}(b)). However, it is validated as not-optimal distribution against $\ell_2$ perturbations in our experiments (see Table \ref{tab: Classifier Opt}).







\section{Experiments}
\label{sec:exp}
\vspace{-2mm}

In this section, we thoroughly evaluate our UniCR framework, and benchmark with state-of-the-art certified defenses. First, we evaluate the \emph{universality} of UniCR by approximating the certified radii w.r.t. the probability $p_A$ using a variety of noise PDFs against $\ell_1$, $\ell_2$ and $\ell_\infty$ perturbations. 
Second, we validate the certified radii in existing works (results have been discussed and shown in Section \ref{sec:rbstudy}). Third, we evaluate our noise PDF optimization on three real-world datasets. 
Finally, we compare our best certified accuracy on CIFAR10 \cite{krizhevsky2009learning} and ImageNet \cite{ILSVRC15} with the state-of-the-art methods.

\vspace{-4mm}
\subsection{Experimental Setting}
\vspace{-2mm}
\noindent\textbf{Datasets}. We evaluate our performance on MNIST \cite{lecun2010mnist}, CIFAR10 \cite{krizhevsky2009learning} and ImageNet \cite{ILSVRC15}, which are common datasets to evaluate the certified robustness for image classification. 



\noindent\textbf{Metrics}. Following most existing works (e.g., Cohen et al. \cite{cohen2019certified}), we use the ``\textbf{approximate certified test set accuracy}'' at radius $R$ to evaluate the performance of certified robustness, which is defined as the fraction of the test set that the smoothed classifier will predict correctly against the perturbation within the radius $R$. The certified accuracy at different radii varies w.r.t. different noise variances, since the variance decides the trade-off between the radius and accuracy. A common way to compare the certified robustness of a noise on a dataset is to present the certified accuracy over a range of variances. However, there does not exist a unified metric used for measuring and comparing the certified accuracy. 
Therefore, to ensure a fair comparison, we also present the \textbf{robustness score} (Section 4.2) to measure the overall certified robustness across a range of noise variances.

\vspace{0.05in}

\noindent\textbf{Experimental Environment}. All the experiments were performed on the NSF Chameleon Cluster \cite{keahey2020lessons} with Intel(R) Xeon(R) Gold 6126 2.60GHz CPUs, 192G RAM, and NVIDIA Quadro RTX 6000 GPUs.


\vspace{-3mm}
\subsection{Universality Evaluation}
\label{exp: dif noises}
\vspace{-2mm}
As randomized smoothing derives certified robustness for any input and any classifier, our evaluation targets ``any noise PDF'' and ``any $\ell_p$ perubations''.

The certified radii of some noise PDFs, e.g., Gaussian noise against $\ell_2$ perturbations \cite{cohen2019certified}, Laplace noise against $\ell_1$ perturbations \cite{teng2020ell}, Pareto noise against $\ell_1$ perturbations \cite{yang2020randomized}, have been derived. These distributions have been verified by our UniCR framework in Figure \ref{fig:P-R curve}, where our certified radii highly approximate these theoretical radii. However, there are numerous noise PDFs of which the certified radii have not been theoretically studied, or they are difficult to derive. It is important to derive the certified radii of these distributions in order to find the optimal PDF against each of the $\ell_p$ perturbations. Therefore, we use our UniCR to approximately compute the certified radii of numerous distributions (including some mixture distributions, see Table \ref{tab:pdfs} in Appendix \ref{apd: pdflist}), some of which have not been studied before. Specifically, 
we evaluate different noise PDFs with the same variance, i.e.,  $\sigma=\mathbb{E}_{\epsilon \sim \mu}[\sqrt{\frac{1}{d} ||\epsilon||_2^2}]=1$. For those PDFs with multiple parameters, we set $\beta$ as $1.5$, $1.0$ and $0.5$ for General Normal, Pareto, and mixture distributions, respectively. Following Cohen et al. \cite{cohen2019certified}, and Yang et al. \cite{yang2020randomized}, we consider the binary case (Theorem \ref{thm2 binary}) and only compute the certified radius when $p_A \in (0.5,1.0]$.


\begin{figure}[!t]
    \begin{minipage}[t]{0.3\linewidth}
    \centering
    \includegraphics[width=38 mm]{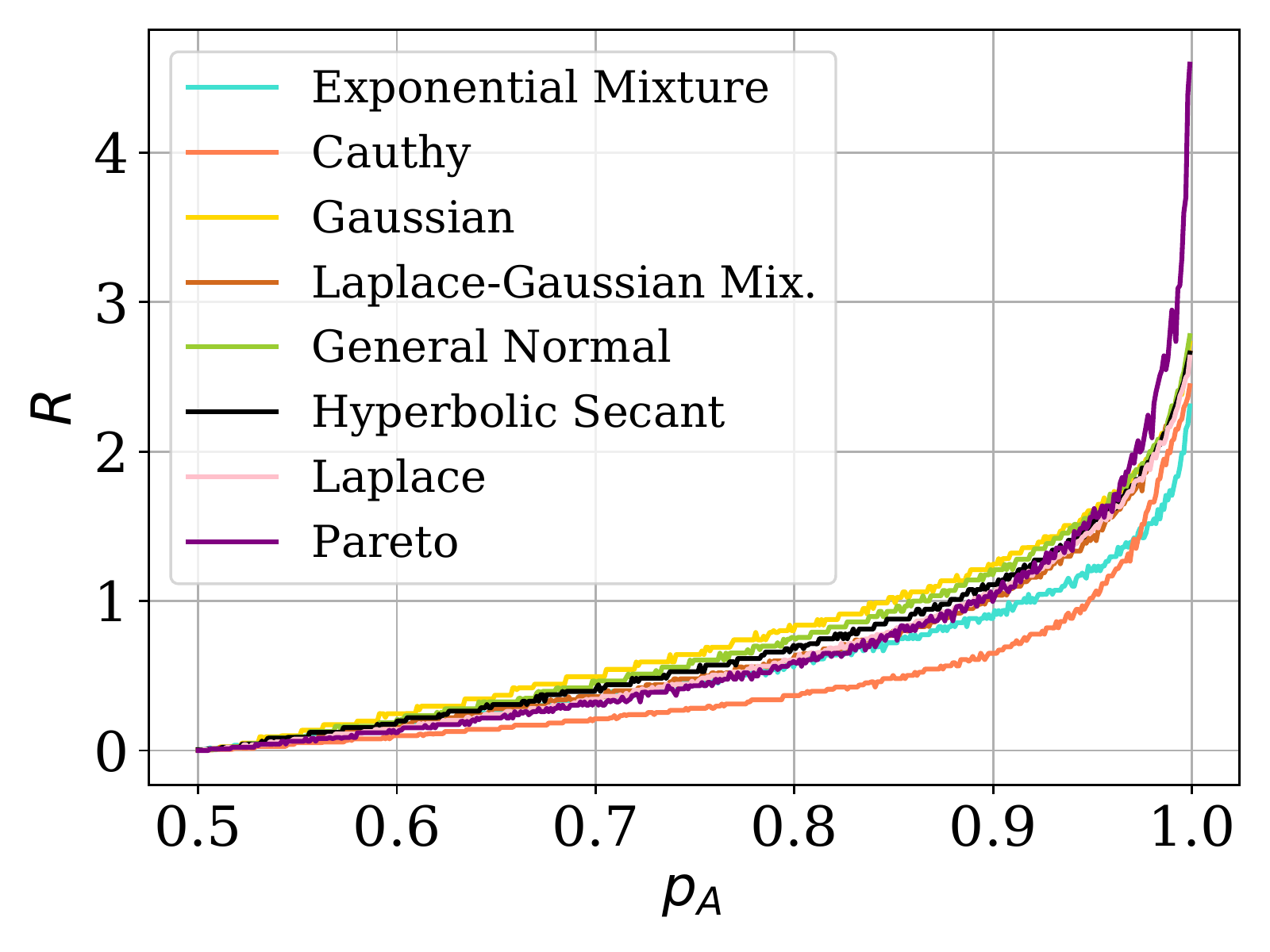}
    \vspace{-7mm}
    \caption{$R$-$p_A$ curve vs. $\ell_1$}
    \label{fig:p-r curve l1}
    \end{minipage}
    \hfill
    \begin{minipage}[t]{0.3\linewidth}
    \centering
    \includegraphics[width=38 mm]{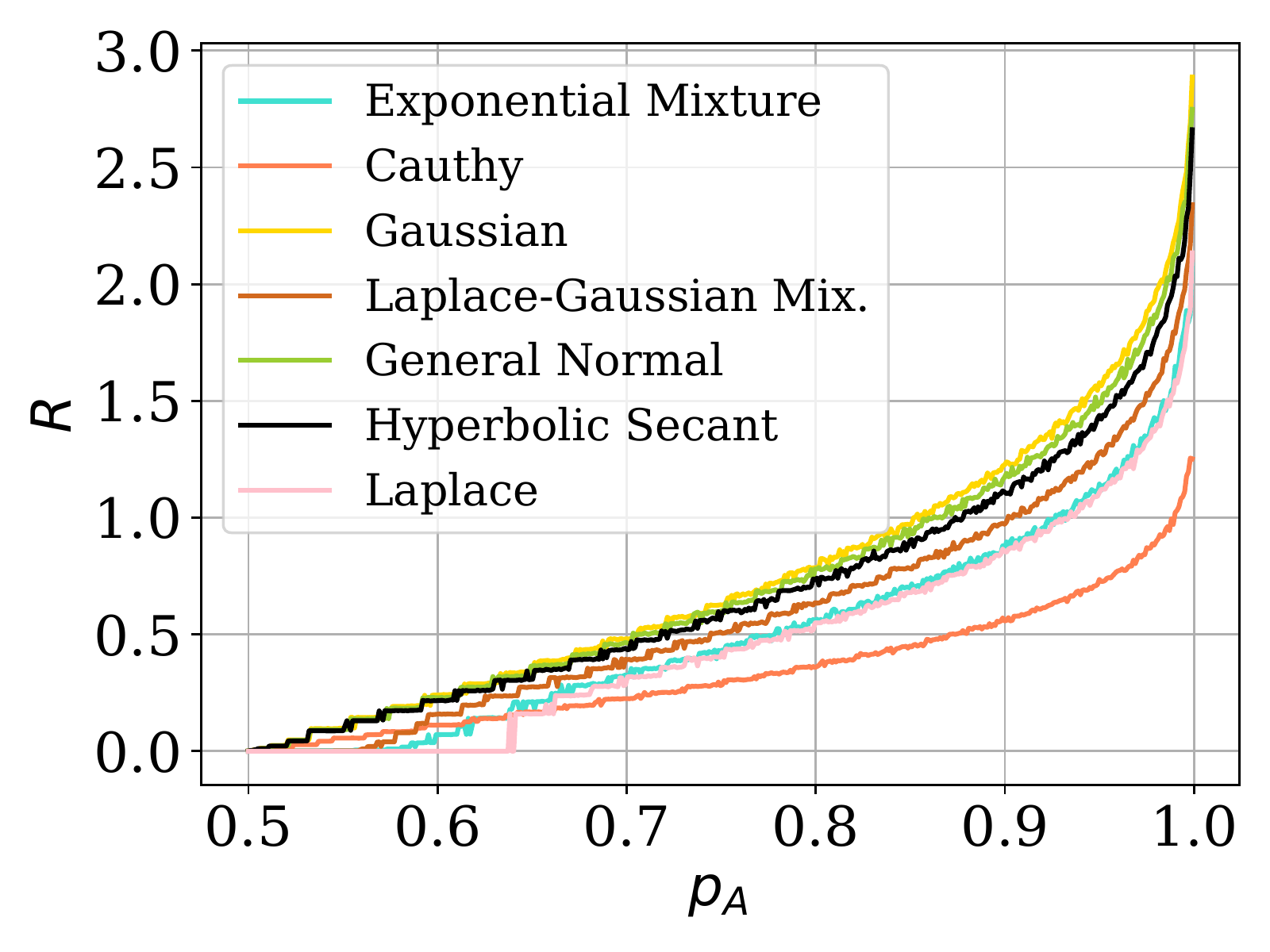}
    \vspace{-7mm}
    \caption{$R$-$p_A$ curve vs. $\ell_2$}
    \label{fig:p-r curve l2}
    \end{minipage}
    \hfill
    \begin{minipage}[t]{0.3\linewidth}
    \centering
    \includegraphics[width=38 mm]{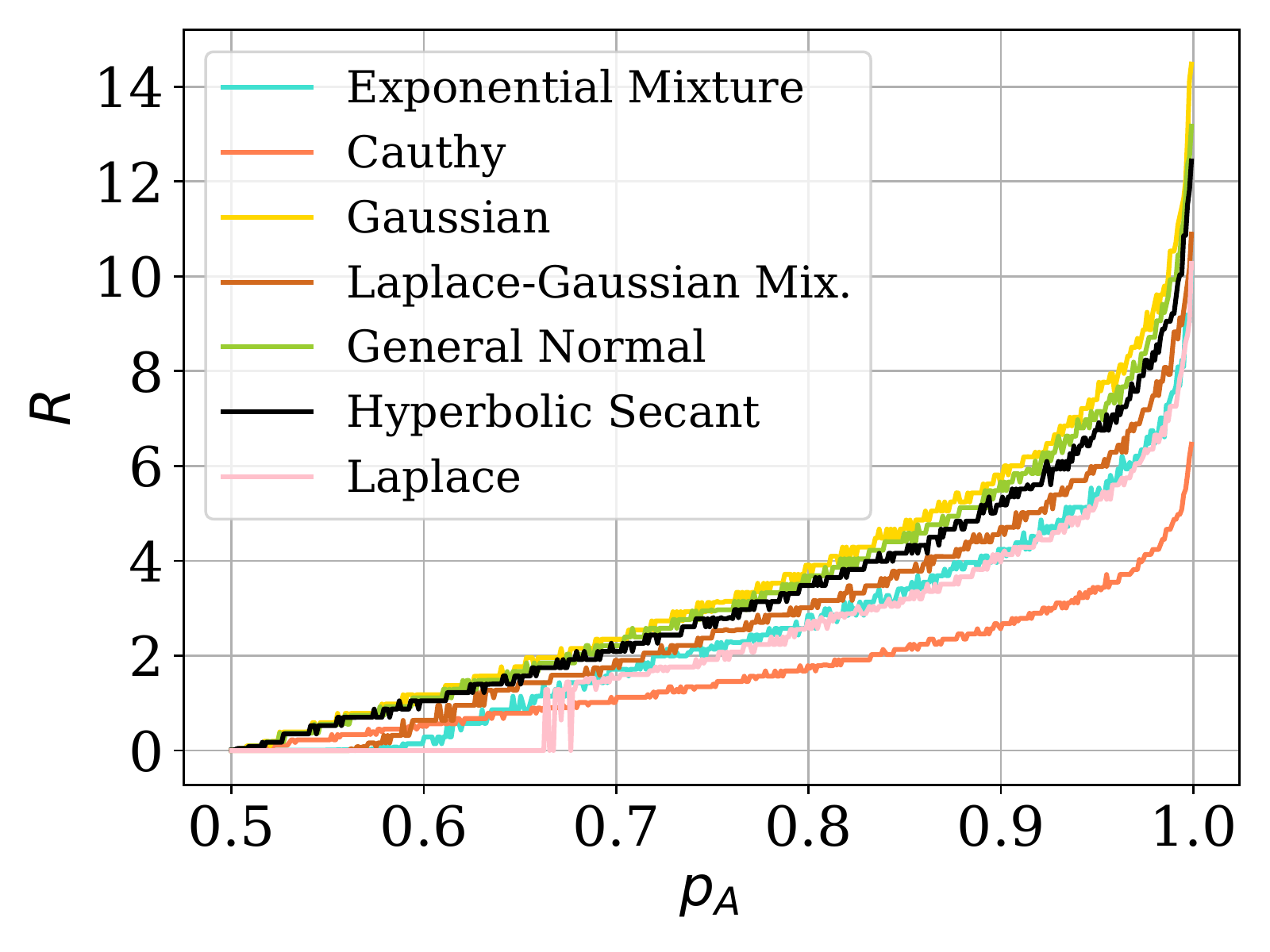}
    \vspace{-7mm}
    \caption{$R$-$p_A$ curve vs. $\ell_\infty$}
    \label{fig:p-r curve linf}
    \end{minipage}
    \vspace{-4mm}
\end{figure}





In Figure \ref{fig:p-r curve l1}-\ref{fig:p-r curve linf}, we plot the $R$-$p_A$ curves for the noise distributions listed in Table \ref{tab:pdfs} in Appendix \ref{apd: pdflist} against $\ell_1$, $\ell_2$ and $\ell_\infty$ perturbations. Specifically, we present the $\ell_\infty$  radius scaled by $\times 255$ to be consistent with the existing works \cite{zhang2020black}. We observe that for all $\ell_p$ perturbations, the Gaussian noise generates the largest certified radius for most of the $p_A$ values. All the noise distribution has very close $R$-$p_A$ curves except the Cauthy distribution. We also notice that when $p_A$ is low against $\ell_2$ and $\ell_\infty$ perturbations, our UniCR cannot find the certified radius for the Laplace-based distributions, e.g., Laplace distribution, and Gaussian-Laplace mixture distribution. This matches the findings on injecting Laplace noises for certified robustness in Yang et al. \cite{yang2020randomized}---The certified radii for Laplace noise against $\ell_2$ and $\ell_\infty$ perturbations are difficult to derive. 

We also conduct experiments to illustrate UniCR's universality in deriving $\ell_p$ norm certified radius for any real number $p>0$ in Appendix \ref{sec: dif p}. Besides, we also conduct fine-grained evaluations on General Normal, Laplace-Gaussian Mixture, and Exponential Mixture noises with various $\beta$ parameters (See Figure \ref{fig:P-R galary} in Appendix \ref{apd:fine-grain evaluation}), and we can draw similar observations from such results.



\begin{table*}[!t]
    \centering
    \caption{\textmd{Classifier-input noise optimization (C-OPT). We show the Robustness Score w.r.t. different $\beta$ settings of General Normal distribution ($\propto e^{-|x/\alpha|^{\beta}}$). The~$\sigma$ is set to $1.0$ for all distributions by adjusting the $\alpha$ parameter in General Normal.}}
    \vspace{-2mm}
    \resizebox{\linewidth}{!}{
    \begin{tabular}{c c c c c c c c c c c c c c c c c}
    \hline
         $\beta$ & 0.25 & 0.50 & 0.75 & 1.00 & 1.25& 1.50& 1.75& 2.00 & 2.25& 2.50& 2.75& 3& 4.00 & 5.00  \\
    \hline
        vs. $\ell_1$&1.8999 &2.6136 &\textbf{2.8354} &2.7448 &2.5461 &2.4254 &2.3434 &2.2615 &2.2211 &2.1730 &2.1081 &2.0679 & 1.9610 &1.8925   \\
        vs. $\ell_2$&0.0000 &0.0003 &1.0373 &1.5954 &1.9255 &2.0882 &2.1746 &2.1983 &\textbf{2.2081} &2.1771 &2.1184 &2.0655 &1.8857  &1.7296  \\
        vs. $\ell_\infty $&0.0000 &0.0109 &0.0420 &0.0641 &0.0771 &0.0839 &0.0871 &0.0879 &\textbf{0.0880} &0.0870 &0.0847 &0.0825 &0.0758  &0.0693  \\ 
    \hline
    \end{tabular}}
    \label{tab: Classifier Opt}
    \vspace{-4mm}
\end{table*}

\vspace{-3mm}
\subsection{Optimizing Certified Radius with C-OPT}
\vspace{-2mm}
\label{exp: global noise optimization}
We next show how C-OPT uses UniCR to  improve the certification against any $\ell_p$ perturbations. 
Recall that tight certified radii against $\ell_1$ and $\ell_2$ perturbations can be derived by the Laplace \cite{teng2020ell} and Gaussian \cite{cohen2019certified} noises, respectively. However, there does not exist any theoretical study showing that Laplace and Gaussian noises are the optimal noises against $\ell_1$ and $\ell_2$ perturbations, respectively. \cite{yang2020randomized,zhang2020black} have identified that there exists other better noise for $\ell_1$ and $\ell_2$ perturbations. Therefore, we use our C-OPT to explore the optimal distribution for each $\ell_p$ perturbation. Since the commonly used noise, e.g., Laplace and Gaussian noises, are only special cases of the General Normal Distribution ($\propto e^{-|x/\alpha|^\beta}$), we will find the optimal parameters $\alpha$ and $\beta$ that generate the best noises for maximizing certified radius against each $\ell_p$ perturbation. 

In the experiments, we use the grid search method to search the best parameters. We choose $\beta$ as the main parameter, and $\alpha$ will be set to satisfy $\sigma=1$. Specifically, we evaluate C-OPT on the MNIST dataset, where 
we train a model on the training set for each round of the grid search and certify $1,000$ images in the test set. Specifically, for each pair of parameters $\alpha$ and $\beta$ in the grid search, we train a Multiple Layer Perception on MNIST with the smoothing noise. Then, we compute the robustness score over a set of $\sigma=[0.12, 0.25, 0.50, 1.00]$. When approximating the certified radius with UniCR, we set the sampling number as $1,000$ in the Monte Carlo method. The results are shown in Table \ref{tab: Classifier Opt}. 

We observe that the best $\beta$ for $\ell_1$-norm is $0.75$ in the grid search. It indicates that the Laplace noise ($\beta=1$) is not the optimal noise against $\ell_1$ perturbations. A slightly smaller $\beta$ can provide a better trade-off between the certified radius and accuracy (measured by the robustness score). When $\beta<1.0$, the radius is observed to be larger than the radius derived with Laplace noise at $p_A \thickapprox 1$ (see Figure \ref{fig:P-R galary}(a)). Since $p_A$ on MNIST is always high, the noise distribution with $\beta=0.75$ will give a larger radius at most cases. Furthermore, we observe that the best performance against $\ell_2$ and $\ell_\infty$ are given by $\beta=2.25$, showing that the Gaussian noise is not the optimal noise against $\ell_2$ and $\ell_\infty$ perturbations, either. 

\vspace{-2mm}
\subsection{Optimizing Certified Radius with I-OPT}
\label{exp: local noise optimization}
\vspace{-2mm}
The optimal noises for different inputs are different. We customize the noise for each input using the I-OPT. Specifically, we adapt the hyper-parameters in the noise PDF to find the optimal noise distribution for each input (the classifier is smoothed by a standard method such as Cohen's \cite{cohen2019certified}). 

We perform I-OPT for noise PDF optimization with a Gaussian-trained ResNet50 classifier ($\sigma=1$) on ImageNet. We compare our derived radius with the theoretical radius in \cite{yang2020randomized,cohen2019certified}. We use the General Normal distribution to generate the noise for input certification since it provides a new parameter dimension for tuning. We tune the parameters $\alpha$ and $\beta$ in $e^{-|x/\alpha|^\beta}$. The Gaussian distribution is only a specific case of the General Normal distribution with $\beta=2$. In the two baselines \cite{yang2020randomized,cohen2019certified}, they set $\sigma=1$ and $\beta=2$, respectively. 
In the I-OPT, we initialize the noise with the same setting, but optimize the noise for each input. When approximating the certified radius with UniCR, we generate $1,000$ Monte Carlo samples for ImageNet.

\begin{table}[!t]
    \centering
    \scriptsize
      \caption{\textmd{Average Certified Radius with 
      Input Noise Optimization (I-OPT) 
      against $\ell_1$, $\ell_2$ and $\ell_\infty$ perturbations on ImageNet.}}
      \vspace{-2mm}
    \begin{tabular}{C{3 cm}C{1.5 cm}C{1.5 cm}C{1.5 cm}C{1.5 cm}C{1.5 cm}  }
    \hline
    Top $\ell_1$ radius  &$20\%$ &$40\%$ &$60\%$ &$80\%$ &$100\%$\\
    \hline
         Yang's Gaussian \cite{yang2020randomized}     &\textbf{2.44} &2.10 &1.59 &1.19 &0.95\\
         Ours with I-OPT   &2.36 &\textbf{2.11} &\textbf{1.64} &\textbf{1.23} &\textbf{0.98}  \\
    \hline
    Top $\ell_2$ radius &$20\%$ &$40\%$ &$60\%$ &$80\%$ &$100\%$ \\
    \hline
         Cohen's Gaussian \cite{cohen2019certified}    &\textbf{2.43} &2.10 &1.58 &1.19 &0.95 \\
         Ours with I-OPT   &2.36 &\textbf{2.11} &\textbf{1.64} &\textbf{1.23} &\textbf{0.98}\\
    \hline

    Top $\ell_\infty$ radius $\times 255$ &$20\%$ &$40\%$ &$60\%$ &$80\%$ &$100\%$ \\
    \hline
         Yang's Gaussian \cite{yang2020randomized} &1.60 &1.38 &1.04 & 0.78 & 0.63\\
         Ours with I-OPT &\textbf{1.75} &\textbf{1.54} &\textbf{1.20} & \textbf{0.90}& \textbf{0.72} \\
    \hline
    \end{tabular}
    \label{tab:Input Optimization}
    \vspace{-5mm}
\end{table}

Table \ref{tab:Input Optimization} presents the average values of the top $20\%$-$100\%$ certified radius (the higher the better). It shows that our method with I-OPT significantly improves the certified radius over the tight certified radius. This is because our I-OPT provides a personalized noise optimization to each input (see Figure \ref{fig:an example} in Appendix \ref{apd:examples} for the illustration). 


\vspace{-3mm}
\subsection{Best Performance Comparison}
\vspace{-2mm}
In this section, we compare our best performance with the state-of-the-art certified defense methods on the CIFAR10 and ImageNet datasets. Following the setting in \cite{cohen2019certified}, we use a ResNet110 \cite{he2016deep} classifier for the CIFAR10 dataset and a ResNet50 \cite{he2016deep} classifier for the ImageNet dataset. We evaluate the certification performance with the noise PDF of a range of variances $\sigma$. The $\sigma$ is set to vary in $[0.12, 0.25, 0.5, 1.0]$ for CIFAR10 and $[0.25, 0.5, 1.0]$ for ImageNet. We also present the Robustness Score based on this set of variances. We use the General Normal distribution and perform the I-OPT. The distribution is initialized with the same setting in the baselines, e.g., $\beta=1$ (or 2) for Laplace (Gaussian) baseline.
We benchmark it with the Laplace noise \cite{teng2020ell} on CIFAR10 when against $\ell_1$ perturbations; and the Gaussian noise \cite{cohen2019certified,yang2020randomized} on both CIFAR10 and ImageNet against all $\ell_p$ perturbations. For both our method and baselines, we use $1,000$ and $4,000$ Monte Carlo samples on ImageNet and CIFAR10, respectively, due to different scales, and the certified accuracy is computed over the certified radius of $500$ images randomly chosen in the test set for both CIFAR10 and ImageNet.

\begin{table}[!t]
    \centering\scriptsize
        \caption{\textmd{Certified accuracy and robustness score against $\ell_1$, $\ell_2$ and $\ell_\infty$ perturbations on CIFAR10. Ours: General Normal with I-OPT.}}
        \vspace{-2mm}
    \begin{tabular}{C{3 cm}C{1.3 cm}C{1.3 cm}C{1.3 cm}C{1.3 cm}C{1.3 cm}C{1.3cm}}
    \hline
    $\ell_1$  radius 																				&0.50   	&1.00  	&1.50 	&2.00  &2.50   & $R_{score}$   \\
    \hline
         Teng's Laplace \cite{teng2020ell}   							 	&39.2    	&17.2   	 &10.0        &6.0     &2.8     & 0.5606  \\
         Ours        								   &\textbf{45.8}     &\textbf{22.4}     &\textbf{14.8}      &\textbf{8.2}    &\textbf{3.6}    &\textbf{0.7027}  \\
    \hline
    $\ell_2$  radius 																			 	&0.50   	&1.00  	&1.50  	&2.00   &2.50   & $R_{score}$   \\
    \hline
         Cohen's Gaussian \cite{cohen2019certified}    			  	&38.6   	&17.4   	&8.6      &3.4      & 1.6  & 0.5392  \\
         Ours     								   &\textbf{48.4}   &\textbf{26.8}     &\textbf{16.6}    &\textbf{6.8}   &\textbf{2.0}    &\textbf{0.7141}  \\
    \hline
    $\ell_\infty$  radius 																		  &$\frac{2}{255}$   &$\frac{4}{255}$  & $\frac{6}{255}$ & $\frac{8}{255}$  & $\frac{10}{255}$    &$R_{score}$  \\
    \hline
         Yang's Gaussian \cite{yang2020randomized}  	   &43.6     &21.8   &10.8       &5.6    &2.6    &0.0098  \\
         Ours  							  &\textbf{53.4}     &\textbf{30.4}     &\textbf{21.2}   &\textbf{13.2}      &\textbf{5.6}    &\textbf{0.0136}  \\
    \hline
    \end{tabular}
    \label{tab:CIFAR10_comparison}
    \vspace{-2mm}
\end{table}

\begin{table}[!t]
    \centering
    \scriptsize
        \caption{\textmd{Certified accuracy and robustness score against $\ell_1$, $\ell_2$ and $\ell_\infty$ perturbations on ImageNet (Teng's Laplace \cite{teng2020ell} is not available). Ours: General Normal with I-OPT.}}
        \vspace{-2mm}
    \begin{tabular}{C{3 cm}C{1.3 cm}C{1.3 cm}C{1.3 cm}C{1.3 cm}C{1.3 cm}C{1.3cm}  }
    \hline
    $\ell_1$  radius &0.50 &1.00 &1.50 &2.00 &2.50 & $R_{score}$\\
    \hline
         Yang's Gaussian \cite{yang2020randomized}    &58.8 &45.6 &34.6 &27.0 &0.0  &1.0469  \\
         Ours&\textbf{63.4} &\textbf{49.6} &\textbf{36.8} &\textbf{29.6} &\textbf{6.6}     &\textbf{1.1385}  \\
    \hline
    $\ell_2$  radius &0.50 &1.00 &1.50 &2.00 &2.50 & $R_{score}$\\
    \hline
         Cohen's Gaussian \cite{cohen2019certified}     &58.8 &44.2 &34.0 &27.0 &0.0 &1.0463  \\
         Ours       &\textbf{62.6} &\textbf{49.0} &\textbf{36.6} &\textbf{28.6} &\textbf{2.0}  &\textbf{1.0939}  \\
    \hline

    $\ell_\infty$  radius & $\frac{0.25}{255}$ &$\frac{0.50}{255}$ &$\frac{0.75}{255}$ &$\frac{1.00}{255}$ &$\frac{1.25}{255}$  & $R_{score}$ \\
    \hline
         Yang's Gaussian \cite{yang2020randomized} &63.6 &52.4 &39.8 &34.2 &28.0  &0.0027  \\
         Ours &\textbf{69.2}  &\textbf{57.4}  &\textbf{47.2}  &\textbf{38.2}  &\textbf{33.0}   &\textbf{0.0031} \\
    \hline
    \end{tabular}
    \label{tab:ImageNet_comparison}
    \vspace{-5mm}
\end{table}

The results are shown in Table \ref{tab:CIFAR10_comparison} and \ref{tab:ImageNet_comparison}. Both on CIFAR10 and ImageNet, we observe a significant improvement on the certified accuracy and robustness score. Specifically, on CIFAR10, our robustness score outperforms the state-of-the-arts by $25.34\%$, $32.44\%$ and $38.78\%$ against $\ell_1$, $\ell_2$ and $\ell_\infty$ perturbations, respectively. On ImageNet, our robustness score outperforms the state-of-the-arts by $8.75\%$, $4.55\%$ and $14.81\%$ against $\ell_1$, $\ell_2$ and $\ell_\infty$ perturbations, respectively.

\section{Related Work}
\label{sec:related}
\vspace{-2mm}






\noindent \textbf{Certified Defenses.} 
They aim to derive the certified robustness of machine learning classifiers against adversarial perturbations. 
Existing certified defenses methods can be classified into leveraging Satisfiability Modulo Theories~\cite{scheibler2015towards,carlini2017provably,ehlers2017formal,katz2017reluplex}, 
mixed integer-linear programming~\cite{cheng2017maximum,fischetti2018deep,bunel2018unified}, linear programming~\cite{wong2018provable,wong2018scaling}, semidefinite programming~\cite{raghunathan2018certified,raghunathan2018semidefinite}, dual optimization~\cite{dvijotham2018training,dvijotham2018dual}, 
 global/local Lipschitz constant methods \cite{gouk2021regularisation,tsuzuku2018lipschitz,anil2019sorting,DBLP:conf/icml/CisseBGDU17,DBLP:conf/nips/HeinA17}, 
abstract interpretation~\cite{gehr2018ai2,mirman2018differentiable,singh2018fast}, and layer-wise certification~\cite{mirman2018differentiable,singh2018fast,DBLP:journals/corr/abs-1810-12715,DBLP:conf/icml/WengZCSHDBD18,DBLP:conf/nips/ZhangWCHD18}, etc. However, none of these methods is able to scale to large models (e.g., deep neural networks) or is limited to specific type of network architecture, e.g., ReLU based networks.

Randomized smoothing was recently proposed certified defenses~\cite{lecuyer2019certified,li2020second,cohen2019certified,jia2019certified,wang2020certifying} that is scalable to large models and applicable to arbitrary classifiers.  
Lecuyer et al. \cite{lecuyer2019certified} proposed the first randomized smoothing-based certified defense via differential privacy \cite{DworkR14}. 
Li et al. \cite{li2020second} proposed a stronger guarantee for Gaussian noise using information theory. The first tight robustness guarantee 
against $l_2$-norm perturbation for Gaussian noise was developed by Cohen et al. \cite{cohen2019certified}. After that, a series follow-up works have been proposed for other $\ell_p$-norms, e.g., $\ell_1$-norm \cite{teng2020ell}, $\ell_0$-norm \cite{DBLP:conf/aaai/LevineF20,DBLP:conf/nips/LeeYCJ19,jia2022almost}, etc. However, all these methods are limited to guarantee the robustness against only a specific $\ell_p$-norm perturbation.

\vspace{0.05in}

\noindent \textbf{Universal Certified Defenses.} More recently, several works \cite{zhang2020black,yang2020randomized} aim to provide more universal certified robustness schemes for all $\ell_p$-norms. Yang et al. \cite{yang2020randomized} proposed a level set method and a differential method to derive the upper bound and lower bound of the certified radius, while the derivation is relying on the case-by-case theoretical analysis. Zhang et al. \cite{zhang2020black} proposed a black-box optimization scheme that automatically computes the certified radius, but the solvable distribution is limited to $\ell_p$-norm. Dvijotham et al. \cite{DBLP:conf/iclr/DvijothamHBKQGX20} proposes a general certified defense based on the $f$-divergence, but fails to provide a tight certification. Croce et al. \cite{croce2019provable} derived the certified radius for any $\ell_p$-norm ($p\geq 1$),  
but the robustness guarantee can only be applied to ReLU classifiers. 
Our UniCR framework can automate the robustness certification for any classifier against any $l_p$-norm perturbation with any noise PDF. 

\vspace{0.05in}

\noindent \textbf{Certified Defenses with Optimized Noise PDFs/Distributions.} 
Yang et al.~\cite{yang2020randomized} proposed to use the Wulff Crystal theory \cite{wul1901frage} to find optimal noise distributions.  Zhang et al. \cite{zhang2020black} claimed that the optimal noise should have a more central-concentrated distribution from the optimization perspective. However, no existing works provide quantitative solutions to find optimal noise distributions. 
We propose the {\bf C-Opt} and  {\bf I-Opt} schemes to quantitatively optimize the noisy PDF in our UniCR framework and provide better certified robustness. 
Table \ref{table:related work} summarizes the differences in all the closely-related works. 
\section{Conclusion}
\label{sec:concl}
\vspace{-2mm}
Building effective certified defenses for neural networks against adversarial perturbations have attracted significant interests recently. However, the state-of-the-art methods lack universality to certify robustness. We propose the first randomized smoothing-based universal certified robustness approximation framework against any $\ell_p$ perturbations with any continuous noise PDF.
Extensive evaluations on multiple image datasets demonstrate the effectiveness of our UniCR framework and its advantages over the state-of-the-art certified defenses against any $\ell_p$ perturbations. 
%
%
\vspace{-4mm}
\section*{Acknowledgement}
\vspace{-4mm}
This work is partially supported by the National Science Foundation (NSF) under the Grants No. CNS-2046335 and CNS-2034870, as well as the Cisco Research Award. In addition, results presented in this paper were obtained using the Chameleon testbed supported by the NSF. Finally, the authors would like to thank the anonymous reviewers for their constructive comments.

\bibliographystyle{splncs04}
\bibliography{UCR}

\begin{thebibliography}{10}
\providecommand{\url}[1]{\texttt{#1}}
\providecommand{\urlprefix}{URL }
\providecommand{\doi}[1]{https://doi.org/#1}

\bibitem{andriushchenko2020square}
Andriushchenko, M., Croce, F., Flammarion, N., Hein, M.: Square attack: a
  query-efficient black-box adversarial attack via random search. In: European
  Conference on Computer Vision. pp. 484--501. Springer (2020)

\bibitem{anil2019sorting}
Anil, C., Lucas, J., Grosse, R.: Sorting out lipschitz function approximation.
  In: International Conference on Machine Learning. pp. 291--301. PMLR (2019)

\bibitem{bunel2018unified}
Bunel, R.R., Turkaslan, I., Torr, P.H., Kohli, P., Mudigonda, P.K.: A unified
  view of piecewise linear neural network verification. In: NeurIPS (2018)

\bibitem{carlini2017provably}
Carlini, N., Katz, G., Barrett, C., Dill, D.L.: Provably minimally-distorted
  adversarial examples. arXiv preprint arXiv:1709.10207  (2017)

\bibitem{carlini2017towards}
Carlini, N., Wagner, D.: Towards evaluating the robustness of neural networks.
  In: 2017 ieee symposium on security and privacy (sp). pp. 39--57. IEEE (2017)

\bibitem{chen2020hopskipjumpattack}
Chen, J., Jordan, M.I., Wainwright, M.J.: Hopskipjumpattack: A query-efficient
  decision-based attack. In: 2020 IEEE Symposium on Security and Privacy (2020)

\bibitem{chen2017zoo}
Chen, P.Y., Zhang, H., Sharma, Y., Yi, J., Hsieh, C.J.: Zoo: Zeroth order
  optimization based black-box attacks to deep neural networks without training
  substitute models. In: 10th ACM workshop on artificial intelligence and
  security (2017)

\bibitem{cheng2017maximum}
Cheng, C.H., N{\"u}hrenberg, G., Ruess, H.: Maximum resilience of artificial
  neural networks. In: International Symposium on Automated Technology for
  Verification and Analysis. pp. 251--268. Springer (2017)

\bibitem{DBLP:conf/icml/CisseBGDU17}
Ciss{\'{e}}, M., Bojanowski, P., Grave, E., Dauphin, Y.N., Usunier, N.:
  Parseval networks: Improving robustness to adversarial examples. In:
  Proceedings of the 34th International Conference on Machine Learning (2017)

\bibitem{co2019procedural}
Co, K.T., Mu{\~n}oz-Gonz{\'a}lez, L., de~Maupeou, S., Lupu, E.C.: Procedural
  noise adversarial examples for black-box attacks on deep convolutional
  networks. In: ACM SIGSAC conference on computer and communications security
  (2019)

\bibitem{cohen2019certified}
Cohen, J., Rosenfeld, E., Kolter, Z.: Certified adversarial robustness via
  randomized smoothing. In: International Conference on Machine Learning (2019)

\bibitem{croce2019provable}
Croce, F., Hein, M.: Provable robustness against all adversarial
  {\textdollar}l{\_}p{\textdollar}-perturbations for
  {\textdollar}p{\textbackslash}geq 1{\textdollar}. In: {ICLR}. OpenReview.net
  (2020)

\bibitem{croce2020reliable}
Croce, F., Hein, M.: Reliable evaluation of adversarial robustness with an
  ensemble of diverse parameter-free attacks. In: International conference on
  machine learning. pp. 2206--2216. PMLR (2020)

\bibitem{dvijotham2018training}
Dvijotham, K., Gowal, S., Stanforth, R., et~al.: Training verified learners
  with learned verifiers. arXiv  (2018)

\bibitem{dvijotham2018dual}
Dvijotham, K., Stanforth, R., Gowal, S., Mann, T.A., Kohli, P.: A dual approach
  to scalable verification of deep networks. In: UAI (2018)

\bibitem{DBLP:conf/iclr/DvijothamHBKQGX20}
Dvijotham, K.D., Hayes, J., Balle, B., Kolter, J.Z., Qin, C., Gy{\"{o}}rgy, A.,
  Xiao, K., Gowal, S., Kohli, P.: A framework for robustness certification of
  smoothed classifiers using f-divergences. In: ICLR (2020)

\bibitem{dvoretzky1956asymptotic}
Dvoretzky, A., Kiefer, J., Wolfowitz, J.: Asymptotic minimax character of the
  sample distribution function and of the classical multinomial estimator. The
  Annals of Mathematical Statistics pp. 642--669 (1956)

\bibitem{DworkR14}
Dwork, C., Roth, A.: The algorithmic foundations of differential privacy.
  Found. Trends Theor. Comput. Sci.  \textbf{9}(3-4),  211--407 (2014)

\bibitem{ehlers2017formal}
Ehlers, R.: Formal verification of piece-wise linear feed-forward neural
  networks. In: International Symposium on Automated Technology for
  Verification and Analysis. pp. 269--286. Springer (2017)

\bibitem{fischetti2018deep}
Fischetti, M., Jo, J.: Deep neural networks and mixed integer linear
  optimization. Constraints  \textbf{23}(3),  296--309 (2018)

\bibitem{gehr2018ai2}
Gehr, T., Mirman, M., Drachsler-Cohen, D., Tsankov, P., Chaudhuri, S., Vechev,
  M.: Ai2: Safety and robustness certification of neural networks with abstract
  interpretation. In: IEEE S \& P (2018)

\bibitem{gouk2021regularisation}
Gouk, H., Frank, E., Pfahringer, B., Cree, M.J.: Regularisation of neural
  networks by enforcing lipschitz continuity. Machine Learning
  \textbf{110}(2),  393--416 (2021)

\bibitem{DBLP:journals/corr/abs-1810-12715}
Gowal, S., Dvijotham, K., Stanforth, R., Bunel, R., Qin, C., Uesato, J.,
  Arandjelovic, R., Mann, T.A., Kohli, P.: On the effectiveness of interval
  bound propagation for training verifiably robust models. CoRR
  \textbf{abs/1810.12715} (2018), \url{http://arxiv.org/abs/1810.12715}

\bibitem{he2016deep}
He, K., Zhang, X., Ren, S., Sun, J.: Deep residual learning for image
  recognition. In: Proceedings of the IEEE conference on computer vision and
  pattern recognition. pp. 770--778 (2016)

\bibitem{DBLP:conf/nips/HeinA17}
Hein, M., Andriushchenko, M.: Formal guarantees on the robustness of a
  classifier against adversarial manipulation. In: Advances in Neural
  Information Processing Systems 30: Annual Conference on Neural Information
  Processing Systems 2017, December 4-9, 2017, Long Beach, CA, {USA}. pp.
  2266--2276 (2017)

\bibitem{hong2022eye}
Hong, H., Hong, Y., Kong, Y.: An eye for an eye: Defending against
  gradient-based attacks with gradients. arXiv preprint arXiv:2202.01117
  (2022)

\bibitem{jia2019certified}
Jia, J., Cao, X., Wang, B., Gong, N.Z.: Certified robustness for top-k
  predictions against adversarial perturbations via randomized smoothing. In:
  International Conference on Learning Representations (2019)

\bibitem{jia2020certified}
Jia, J., Wang, B., Cao, X., Gong, N.Z.: Certified robustness of community
  detection against adversarial structural perturbation via randomized
  smoothing. In: Proceedings of The Web Conference 2020. pp. 2718--2724 (2020)

\bibitem{jia2022almost}
Jia, J., Wang, B., Cao, X., Liu, H., Gong, N.Z.: Almost tight l0-norm certified
  robustness of top-k predictions against adversarial perturbations. In: ICLR
  (2022)

\bibitem{DBLP:conf/emnlp/JiaRGL19}
Jia, R., Raghunathan, A., G{\"{o}}ksel, K., Liang, P.: Certified robustness to
  adversarial word substitutions. In: {EMNLP/IJCNLP} (2019)

\bibitem{katz2017reluplex}
Katz, G., Barrett, C., Dill, D.L., Julian, K., Kochenderfer, M.J.: Reluplex: An
  efficient smt solver for verifying deep neural networks. In: International
  Conference on Computer Aided Verification. pp. 97--117. Springer (2017)

\bibitem{keahey2020lessons}
Keahey, K., Anderson, J., Zhen, Z., Riteau, P., Ruth, P., Stanzione, D., Cevik,
  M., Colleran, J., Gunawi, H.S., Hammock, C., Mambretti, J., Barnes, A.,
  Halbach, F., Rocha, A., Stubbs, J.: Lessons learned from the chameleon
  testbed. In: Proceedings of the 2020 USENIX Annual Technical Conference
  (USENIX ATC '20). USENIX Association (July 2020)

\bibitem{kennedy1995particle}
Kennedy, J., Eberhart, R.: Particle swarm optimization. In: Proceedings of
  ICNN'95-international conference on neural networks. IEEE (1995)

\bibitem{krizhevsky2009learning}
Krizhevsky, A., Hinton, G., et~al.: Learning multiple layers of features from
  tiny images  (2009)

\bibitem{lecun2010mnist}
LeCun, Y., Cortes, C., Burges, C.: Mnist handwritten digit database. ATT Labs
  [Online]. Available: http://yann.lecun.com/exdb/mnist  \textbf{2} (2010)

\bibitem{lecuyer2019certified}
Lecuyer, M., Atlidakis, V., Geambasu, R., Hsu, D., Jana, S.: Certified
  robustness to adversarial examples with differential privacy. In: 2019 IEEE
  Symposium on Security and Privacy (SP). pp. 656--672. IEEE (2019)

\bibitem{DBLP:conf/nips/LeeYCJ19}
Lee, G., Yuan, Y., Chang, S., Jaakkola, T.S.: Tight certificates of adversarial
  robustness for randomly smoothed classifiers. In: NeurIPS. pp. 4911--4922
  (2019)

\bibitem{DBLP:conf/aaai/LevineF20}
Levine, A., Feizi, S.: Robustness certificates for sparse adversarial attacks
  by randomized ablation. In: {AAAI}. pp. 4585--4593. {AAAI} Press (2020)

\bibitem{li2020second}
Li, B., Chen, C., Wang, W., Carin, L.: Second-order adversarial attack and
  certifiable robustness. arXiv preprint arXiv:2006.00731  (2020)

\bibitem{DBLP:conf/ndss/LiNPSKRS19}
Li, S., Neupane, A., Paul, S., Song, C., Krishnamurthy, S.V., Roy{-}Chowdhury,
  A.K., Swami, A.: Stealthy adversarial perturbations against real-time video
  classification systems. In: NDSS. The Internet Society (2019)

\bibitem{madry2018towards}
Madry, A., Makelov, A., Schmidt, L., Tsipras, D., Vladu, A.: Towards deep
  learning models resistant to adversarial attacks. In: International
  Conference on Learning Representations (2018)

\bibitem{mirman2018differentiable}
Mirman, M., Gehr, T., Vechev, M.: Differentiable abstract interpretation for
  provably robust neural networks. In: International Conference on Machine
  Learning (2018)

\bibitem{mohammady2020r2dp}
Mohammady, M., Xie, S., Hong, Y., Zhang, M., Wang, L., Pourzandi, M., Debbabi,
  M.: R2dp: A universal and automated approach to optimizing the randomization
  mechanisms of differential privacy for utility metrics with no known optimal
  distributions. In: Proceedings of the 2020 ACM SIGSAC Conference on Computer
  and Communications Security. pp. 677--696 (2020)

\bibitem{raghunathan2018certified}
Raghunathan, A., Steinhardt, J., Liang, P.: Certified defenses against
  adversarial examples. arXiv preprint arXiv:1801.09344  (2018)

\bibitem{raghunathan2018semidefinite}
Raghunathan, A., Steinhardt, J., Liang, P.S.: Semidefinite relaxations for
  certifying robustness to adversarial examples. In: NeurIPS (2018)

\bibitem{ILSVRC15}
Russakovsky, O., Deng, J., Su, H., Krause, J., Satheesh, S., Ma, S., Huang, Z.,
  Karpathy, A., Khosla, A., Bernstein, M., Berg, A.C., Fei-Fei, L.: {ImageNet
  Large Scale Visual Recognition Challenge}. International Journal of Computer
  Vision (IJCV)  \textbf{115}(3),  211--252 (2015).
  \doi{10.1007/s11263-015-0816-y}

\bibitem{scheibler2015towards}
Scheibler, K., Winterer, L., Wimmer, R., Becker, B.: Towards verification of
  artificial neural networks. In: MBMV. pp. 30--40 (2015)

\bibitem{singh2018fast}
Singh, G., Gehr, T., Mirman, M., P{\"u}schel, M., Vechev, M.: Fast and
  effective robustness certification. In: Proceedings of the 32nd International
  Conference on Neural Information Processing Systems. pp. 10825--10836 (2018)

\bibitem{teng2020ell}
Teng, J., Lee, G.H., Yuan, Y.: {\$}{\textbackslash}ell{\_}1{\$} adversarial
  robustness certificates: a randomized smoothing approach (2020)

\bibitem{tsuzuku2018lipschitz}
Tsuzuku, Y., Sato, I., Sugiyama, M.: Lipschitz-margin training: Scalable
  certification of perturbation invariance for deep neural networks. In:
  NeurIPS (2018)

\bibitem{wang2020certifying}
Wang, B., Cao, X., Gong, N.Z., et~al.: On certifying robustness against
  backdoor attacks via randomized smoothing. In: CVPR 2020 Workshop on
  Adversarial Machine Learning in Computer Vision (2020)

\bibitem{DBLP:conf/kdd/WangJCG21}
Wang, B., Jia, J., Cao, X., Gong, N.Z.: Certified robustness of graph neural
  networks against adversarial structural perturbation. In: {KDD}. {ACM} (2021)

\bibitem{DBLP:conf/icml/WengZCSHDBD18}
Weng, T., Zhang, H., Chen, H., Song, Z., Hsieh, C., Daniel, L., Boning, D.S.,
  Dhillon, I.S.: Towards fast computation of certified robustness for relu
  networks. In: Dy, J.G., Krause, A. (eds.) International Conference on Machine
  Learning (2018)

\bibitem{wong2018provable}
Wong, E., Kolter, J.Z.: Provable defenses against adversarial examples via the
  convex outer adversarial polytope. In: ICML (2018)

\bibitem{wong2019wasserstein}
Wong, E., Schmidt, F., Kolter, Z.: Wasserstein adversarial examples via
  projected sinkhorn iterations. In: International Conference on Machine
  Learning (2019)

\bibitem{wong2018scaling}
Wong, E., Schmidt, F.R., Metzen, J.H., Kolter, J.Z.: Scaling provable
  adversarial defenses. arXiv preprint arXiv:1805.12514  (2018)

\bibitem{wul1901frage}
Wul, G.: Zur frage der geschwindigkeit des wachstums und der auflosung der
  kristall achen. Z. Kristallogr  \textbf{34},  449--530 (1901)

\bibitem{xie2022universal}
Xie, S., Wang, H., Kong, Y., Hong, Y.: Universal 3-dimensional perturbations
  for black-box attacks on video recognition systems. In: In Proceedings of the
  43rd IEEE Symposium on Security and Privacy (Oakland'22) (2022)

\bibitem{yang2020randomized}
Yang, G., Duan, T., Hu, J.E., Salman, H., Razenshteyn, I., Li, J.: Randomized
  smoothing of all shapes and sizes. In: International Conference on Machine
  Learning. pp. 10693--10705. PMLR (2020)

\bibitem{zhang2020black}
Zhang, D., Ye, M., Gong, C., Zhu, Z., Liu, Q.: Black-box certification with
  randomized smoothing: A functional optimization based framework  (2020)

\bibitem{DBLP:conf/nips/ZhangWCHD18}
Zhang, H., Weng, T., Chen, P., Hsieh, C., Daniel, L.: Efficient neural network
  robustness certification with general activation functions. In: Neural
  Information Processing Systems (2018)

\end{thebibliography}

\newpage
\appendix
\label{apd:thm2}

\section{Preliminary}
\label{sec:preli}



We first briefly review the recent certified robustness scheme \cite{cohen2019certified} for a general classification problem by classifying data point in $\mathbb{R}^d$ to classes in $\mathcal{Y}$. Given an arbitrary base classifier $f$, it can be converted to a ``smoothed'' classifier \cite{cohen2019certified} $g$ by adding isotropic Gaussian noise to the input $x$:

\begin{equation}
    g(x)=\arg \max_{c\in \mathcal{Y}} \mathbb{P}(f(x+\epsilon)=c), where~~ \epsilon \sim \mathcal{N}(0,\sigma^2 I)
\label{equation1}
\end{equation}

\begin{Lem}{(\textbf{Neyman-Pearson Lemma})}
Let $X$ and $Y$ be random variables in $\mathbb{R}^d$ with densities $\mu_X$ and $\mu_Y$. Let $f:\mathbb{R}^d \rightarrow \{0,1\}$ be a random or deterministic function. Then:

(1) If $S=\{z\in \mathbb{R}^d: \frac{\mu_Y(z)}{\mu_X(z)}\leq{t}\}$ for some $t>0$ and $\mathbb{P}(f(X)=1)\ge \mathbb{P}(X\in S)$, then $\mathbb{P}(f(Y)=1) \ge \mathbb{P}(Y\in S)$;

(2) If $S=\{z\in \mathbb{R}^d: \frac{\mu_Y(z)}{\mu_X(z)}\ge {t}\}$ for some $t>0$ and $\mathbb{P}(f(X)=1)\leq \mathbb{P}(X\in S)$, then $\mathbb{P}(f(Y)=1) \leq \mathbb{P}(Y\in S)$.
\label{NP lemma}
\end{Lem}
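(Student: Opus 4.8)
The plan is to reduce both parts to a single pointwise inequality between integrands and then integrate. First I would introduce the acceptance function $h(z) := \mathbb{P}(f(z)=1) \in [0,1]$, which is well-defined whether $f$ is deterministic or randomized; treating the internal randomness of $f$ as independent of the input and applying Fubini gives $\mathbb{P}(f(X)=1) = \int h(z)\,\mu_X(z)\,dz$ and $\mathbb{P}(f(Y)=1)=\int h(z)\,\mu_Y(z)\,dz$, while $\mathbb{P}(X\in S)=\int \mathbf{1}_S(z)\,\mu_X(z)\,dz$ and similarly for $Y$. The whole statement then becomes a comparison of $\int (h-\mathbf{1}_S)\,\mu_Y$ against $\int(h-\mathbf{1}_S)\,\mu_X$.

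For part (1), I would avoid the likelihood ratio (and its division-by-zero issues) by rewriting the defining condition of $S$ as the product inequalities $\mu_Y(z)\le t\,\mu_X(z)$ on $S$ and $\mu_Y(z)\ge t\,\mu_X(z)$ on $S^c$. The key observation is a sign alignment: on $S$ we have $\mathbf{1}_S=1\ge h$, so $h-\mathbf{1}_S\le 0$, which combined with $\mu_Y\le t\mu_X$ yields $(h-\mathbf{1}_S)\mu_Y \ge t\,(h-\mathbf{1}_S)\mu_X$, since multiplying the density inequality by a nonpositive factor flips it; on $S^c$ we have $h-\mathbf{1}_S\ge 0$ and $\mu_Y\ge t\mu_X$, giving the same inequality $(h-\mathbf{1}_S)\mu_Y\ge t\,(h-\mathbf{1}_S)\mu_X$. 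As this holds pointwise everywhere, integrating gives
\[
\mathbb{P}(f(Y)=1)-\mathbb{P}(Y\in S)\ \ge\ t\big(\mathbb{P}(f(X)=1)-\mathbb{P}(X\in S)\big).
\]
The hypothesis $\mathbb{P}(f(X)=1)\ge\mathbb{P}(X\in S)$ together with $t>0$ makes the right-hand side nonnegative, which is exactly the claim.

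Part (2) I expect to follow by the same template with every inequality reversed: now $S=\{z:\mu_Y(z)\ge t\mu_X(z)\}$, the hypothesis is $\mathbb{P}(f(X)=1)\le\mathbb{P}(X\in S)$, and the pointwise sign check gives $(h-\mathbf{1}_S)\mu_Y\le t\,(h-\mathbf{1}_S)\mu_X$ on both $S$ and $S^c$, so integration yields
\[
\mathbb{P}(f(Y)=1)-\mathbb{P}(Y\in S)\ \le\ t\big(\mathbb{P}(f(X)=1)-\mathbb{P}(X\in S)\big)\ \le\ 0.
\]

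The main obstacle is not the algebra but the care needed in two places: verifying the sign of $h-\mathbf{1}_S$ on each region (which hinges on $h\in[0,1]$, so that $h\le 1=\mathbf{1}_S$ on $S$ and $h\ge 0=\mathbf{1}_S$ on $S^c$), and cleanly justifying the integral representation of $\mathbb{P}(f(X)=1)$ for a randomized $f$. Phrasing the likelihood-ratio set $S$ through the product inequalities $\mu_Y\le t\mu_X$ or $\mu_Y\ge t\mu_X$ neatly sidesteps the measure-zero subtleties on $\{\mu_X=0\}$ where the ratio is undefined, since the density-level inequality remains valid there regardless.
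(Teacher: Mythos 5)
Your proof is correct, and it is essentially the canonical argument: the paper itself states this lemma without proof (importing it from Cohen et al., whose appendix proof is exactly your computation), namely writing both probabilities as integrals of the acceptance function $h$ and establishing the pointwise inequality $(h-\mathbf{1}_S)\,\mu_Y \ge t\,(h-\mathbf{1}_S)\,\mu_X$ (resp.\ $\le$ for part (2)) separately on $S$ and $S^c$ before integrating. Your replacement of the likelihood-ratio condition by the product inequalities $\mu_Y \le t\,\mu_X$ on $S$ and $\mu_Y \ge t\,\mu_X$ on $S^c$ is a sound and slightly more careful handling of the set $\{\mu_X = 0\}$ than the usual presentation.
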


With Lemma \ref{NP lemma}, Cohen \cite{cohen2019certified} derives the certified radius when the classifier is smoothed with the Gaussian noise. As shown in Theorem \ref{cohen thm}, when the smoothed classifier's prediction probabilities satisfy Equation (\ref{thm1: eq2}), the prediction result is guaranteed to be the most probable class $c_A$ when the perturbation is limited within a radius $R$ in $\ell_2$-norm.

\begin{thm}{(\textbf{Randomized Smoothing with Gaussian Noise  \cite{cohen2019certified}})}
Let $f:\mathbb{R}^d \rightarrow \mathcal{Y}$ be any deterministic or random function, and let $\epsilon \sim \mathcal{N}(0, \sigma^2 I)$. Denote $g$ as the smoothed classifier in Equation (\ref{equation1}), and the most probable and the second probable classes as $c_A, c_B \in \mathcal{Y}$, respectively. If the lower bound of the class $c_A$'s prediction probability $\underline{p_A} \in [0, 1]$, and the upper bound of the class $c_B$'s prediction probability $\overline{p_B}\in [0, 1] $ satisfy:
\begin{equation}
    \mathbb{P}(f(x+\epsilon)=c_A) \ge \underline{p_A} \ge \overline{p_B} \ge \max_{c \neq c_A} \mathbb{P}(f(x+\epsilon)=c)
\label{thm1: eq2}
\end{equation}
Then $g(x+\delta)=c_A$ for all $||\delta||_2 \leq R$, where
\begin{equation}
    R=\frac{\sigma}{2}(\Phi^{-1}(\underline{p_A})-\Phi^{-1}(\overline{p_B}))
\end{equation}
where $\Phi^{-1}$ is the inverse of the standard Gaussian CDF.
\label{cohen thm}
\end{thm}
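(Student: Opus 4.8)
The plan is to reduce the multiclass guarantee to a binary comparison and then invoke the Neyman--Pearson Lemma (Lemma \ref{NP lemma}) twice. Fix any perturbation $\delta$ with $\|\delta\|_2 \le R$ and write $X = x + \epsilon \sim \mathcal{N}(x, \sigma^2 I)$ and $Y = x + \delta + \epsilon \sim \mathcal{N}(x+\delta, \sigma^2 I)$, so that $g(x+\delta) = c_A$ is equivalent to $\mathbb{P}(f(Y) = c_A) > \mathbb{P}(f(Y) = c)$ for every $c \ne c_A$. It therefore suffices to lower-bound $\mathbb{P}(f(Y) = c_A)$ and upper-bound $\mathbb{P}(f(Y) = c)$, each in terms of $\|\delta\|_2/\sigma$, and to check that the two bounds separate exactly when $\|\delta\|_2 < R$.

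For the lower bound I would first compute the likelihood ratio of the two Gaussians,
\[
\frac{\mu_Y(z)}{\mu_X(z)} = \exp\!\left(\frac{\langle z - x, \delta\rangle}{\sigma^2} - \frac{\|\delta\|_2^2}{2\sigma^2}\right),
\]
which is a strictly increasing function of $\langle z - x, \delta\rangle$. Hence every level set $\{z : \mu_Y(z)/\mu_X(z) \le t\}$ is a half-space $\{z : \langle z - x, \delta\rangle \le \beta\}$. I choose $\beta$ so that the half-space $A$ has mass exactly $\mathbb{P}(X \in A) = \underline{p_A}$; since $\langle X - x, \delta\rangle/(\sigma\|\delta\|_2) \sim \mathcal{N}(0,1)$, this forces $\beta = \sigma\|\delta\|_2\,\Phi^{-1}(\underline{p_A})$. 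Because $\mathbb{P}(f(X) = c_A) \ge \underline{p_A} = \mathbb{P}(X \in A)$, part (1) of the Lemma gives $\mathbb{P}(f(Y) = c_A) \ge \mathbb{P}(Y \in A)$, and a direct computation using $\langle Y - x, \delta\rangle \sim \mathcal{N}(\|\delta\|_2^2, \sigma^2\|\delta\|_2^2)$ evaluates this to $\Phi\!\big(\Phi^{-1}(\underline{p_A}) - \|\delta\|_2/\sigma\big)$.

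The upper bound is symmetric. For any fixed $c \ne c_A$ I would take a half-space $B = \{z : \langle z - x, \delta\rangle \ge \beta'\}$ with $\mathbb{P}(X \in B) = \overline{p_B}$, giving $\beta' = -\sigma\|\delta\|_2\,\Phi^{-1}(\overline{p_B})$; since $\mathbb{P}(f(X) = c) \le \overline{p_B} = \mathbb{P}(X \in B)$, part (2) of the Lemma yields $\mathbb{P}(f(Y) = c) \le \mathbb{P}(Y \in B) = \Phi\!\big(\Phi^{-1}(\overline{p_B}) + \|\delta\|_2/\sigma\big)$. Finally, because $\Phi$ is strictly increasing, the required separation $\Phi(\Phi^{-1}(\underline{p_A}) - \|\delta\|_2/\sigma) > \Phi(\Phi^{-1}(\overline{p_B}) + \|\delta\|_2/\sigma)$ collapses to the scalar inequality $2\|\delta\|_2/\sigma < \Phi^{-1}(\underline{p_A}) - \Phi^{-1}(\overline{p_B})$, i.e. $\|\delta\|_2 < R$, which proves the claim. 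I expect the main obstacle to be the first conceptual step: recognizing that the Gaussian likelihood-ratio level sets are exactly the half-spaces orthogonal to $\delta$. This is what turns the abstract Neyman--Pearson optimality into a concrete statement and lets both half-space masses be expressed as one-dimensional Gaussian CDFs; once it is in place, the two CDF evaluations and the final monotonicity argument are routine.
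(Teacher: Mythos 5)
Your proposal is correct and is essentially the argument the paper relies on: for this theorem the paper gives no proof of its own but defers to Cohen et al.\ \cite{cohen2019certified}, and your reconstruction --- identifying the Gaussian likelihood-ratio level sets as half-spaces orthogonal to $\delta$, applying both parts of Lemma~\ref{NP lemma} to get $\mathbb{P}(f(Y)=c_A)\ge\Phi(\Phi^{-1}(\underline{p_A})-\|\delta\|_2/\sigma)$ and $\mathbb{P}(f(Y)=c)\le\Phi(\Phi^{-1}(\overline{p_B})+\|\delta\|_2/\sigma)$, and comparing --- is precisely that proof, and is also the Gaussian specialization of the paper's own Neyman--Pearson argument for Theorem~\ref{thm2} in Appendix~\ref{apd: thm2 proof}. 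The only nit is that your argument (like Cohen et al.'s original) yields the guarantee for $\|\delta\|_2 < R$, whereas the paper's restatement writes $\|\delta\|_2 \le R$; at $\|\delta\|_2 = R$ the two bounds coincide and only a tie is excluded up to the tie-breaking convention, so the discrepancy lies in the paper's restatement rather than in your proof.
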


\begin{proof}
See detailed proof in \cite{cohen2019certified}. 
\end{proof}



\section{Proofs}
\subsection{Proof of Theorem \ref{thm2}}
\label{apd: thm2 proof}

\begin{proof}

We prove the theorem based on Neyman-Pearson Lemma (Lemma \ref{NP lemma}).






\vspace{0.05in}

Let $x:=x_0+\epsilon$ be the random variable that follows any continuous distribution. $\delta$ be the perturbation added to the input image. $y=x_0+\epsilon+\delta$ is the perturbed random variable. Thus, $x$ and $y$ are random variables with densities $\mu_x$ and $\mu_y$. Define sets:
\begin{equation}
    A:=\{z:\frac{\mu_y(z)}{\mu_x(z)}\leq t_A\}
\label{eq ap 1}
\end{equation}
\begin{equation}
    B:=\{z:\frac{\mu_y(z)}{\mu_x(z)}\ge t_B\}
\label{eq ap 2}
\end{equation} where $t_A$ and $t_B$ are picked to suffice:

\begin{equation}
    \mathbb{P}(x\in A)=\underline{p_A}
\label{eq ap 3}
\end{equation}
\begin{equation}
    \mathbb{P}(x\in B)=\overline{p_B}
\label{eq ap 4}
\end{equation}

Suppose $c_A\in \mathcal{Y}$ and $\underline{p_A},\overline{p_B} \in [0,1]$ satisfy:

\begin{equation}
\mathbb{P}(f(x+\epsilon)=c_A)\ge \underline{p_A}\ge \overline{p_B}\ge \max_{c\neq c_A}\mathbb{P}(f(x+\epsilon)=c)
\label{eq ap 5}
\end{equation}

Since $\mathbb{P}(f(x+\epsilon)=c_A) \ge \underline{p_A}=\mathbb{P}(x\in A)$ and $A=\{z:\frac{\mu_Y(z)}{\mu_X(z)}\leq t_A\}$, using Neyman-Pearson Lemma (Lemma \ref{NP lemma}), we have:
\begin{equation}
    \mathbb{P}(f(y)=c_A)\ge \mathbb{P}(y\in A)
\label{eq ap 6}
\end{equation}

Similarly, we have:
\begin{equation}
    \mathbb{P}(f(y)=c_B)\leq \mathbb{P}(y\in B)
\label{eq ap 7}
\end{equation}

To guarantee $\mathbb{P}(f(y)=c_A)\ge \mathbb{P}(f(y)=c_B)$, we need
\begin{equation}
    \mathbb{P}(f(y)=c_A)\ge \mathbb{P}(y\in A) \ge \mathbb{P}(y\in B) \ge \mathbb{P}(f(y)=c_B)
\label{eq ap 8}
\end{equation}

In summary, to guarantee the certified robustness on class $A$, Equation (\ref{eq ap 1}), (\ref{eq ap 2}), (\ref{eq ap 3}), (\ref{eq ap 4}), (\ref{eq ap 8}) must be satisfied. The conditions can be rewritten as:

\begin{equation}
    \mathbb{P}(\frac{\mu_y(x)}{\mu_x(x)}\leq t_A)=\underline{p_A}
\label{eq ap 9}
\end{equation}
\begin{equation}
        \mathbb{P}(\frac{\mu_y(x)}{\mu_x(x)}\ge t_B)=\overline{p_B}
\label{eq ap 10}
\end{equation}
\begin{equation}
    \mathbb{P}(\frac{\mu_y(y)}{\mu_x(y)}\leq t_A)\ge \mathbb{P}(\frac{\mu_y(y)}{\mu_x(y)}\ge t_B)
\label{eq ap 11}
\end{equation} 

\vspace{0.05in}
where Equation (\ref{eq ap 9}) is from Equation (\ref{eq ap 1}) and Equation (\ref{eq ap 3}), Equation (\ref{eq ap 10}) is from Equation (\ref{eq ap 2}) and Equation (\ref{eq ap 4}), and Equation (\ref{eq ap 11}) is from Equation (\ref{eq ap 8}).

Considering the relationship $y=x+\delta$, we can derive:
\begin{equation}
    \mu_y(x)=\mu_x(x-\delta)
\label{eq ap 12}
\end{equation}
\begin{equation}
    \mu_x(y)=\mu_x(x+\delta)
\label{eq ap 13}
\end{equation}
\begin{equation}
    \mu_y(y)=\mu_x(y-\delta)=\mu_x(x)
\label{eq ap 14}
\end{equation}

Thus, the conditions (11), (12) and (13) can be rewritten as:
\begin{equation}
    \mathbb{P}(\frac{\mu_x(x-\delta)}{\mu_x(x)}\leq t_A)=\underline{p_A}
\label{eq ap 15}
\end{equation}
\begin{equation}
        \mathbb{P}(\frac{\mu_x(x-\delta)}{\mu_x(x)}\ge t_B)=\overline{p_B}
\label{eq ap 16}
\end{equation}
\begin{equation}
    \mathbb{P}(\frac{\mu_x(x)}{\mu_x(x+\delta)}\leq t_A)\ge \mathbb{P}(\frac{\mu_x(x)}{\mu_x(x+\delta)}\ge t_B)
\label{eq ap 17}
\end{equation}

Any perturbation $\delta$ satisfying these conditions will not fool the smoothed classifier. In this case, these conditions construct a robustness area in $\delta$ space. If we want to find a $\ell_p$ ball within which the prediction is constant, the $l_p$ ball should be in this robustness area. Therefore, the certified radii is the minimum $||\delta||_p$ on the boundary of this robustness area. In this case, the $\ell_p$ ball is exactly the maximum inscribed ball in the robustness area. Also, $x$ can be replace by $\epsilon$ in these conditions since it is in the fraction, which means the optimization is independent to the input if given $\underline{p_A}$ and $\overline{p_B}$. Therefore, the whole optimization problem is summarized as:
\begin{mini*}|l|
  {\delta}{R=||\delta||_p}{}{}
  \addConstraint{    \mathbb{P}(\frac{\mu_x(x-\delta)}{\mu_x(x)}\leq t_A)=\underline{p_A}}
  \addConstraint{    \mathbb{P}(\frac{\mu_x(x-\delta)}{\mu_x(x)}\ge t_B)=\overline{p_B}}
  \addConstraint{    \mathbb{P}(\frac{\mu_x(x)}{\mu_x(x+\delta)}\leq t_A)= \mathbb{P}(\frac{\mu_x(x)}{\mu_x(x+\delta)}\ge t_B)}
\end{mini*}

If the noise is isotropic, each dimension is independent,
\begin{equation}
    \mu_x(x)=\prod_{i=1}^{d}\mu_x(x_j)
\label{eq ap 18}
\end{equation}

Thus, conditions for the isotropic noise can be rewritten as:

\begin{equation}
    \mathbb{P}(\prod_{j=1}^{d}\frac{\mu_x(x_j-\delta_j)}{\mu_x(x_j)}\leq t_A)=\underline{p_A}
\label{eq ap 19}
\end{equation}
\begin{equation}
    \mathbb{P}(\prod_{j=1}^{d}\frac{\mu_x(x_j-\delta_j)}{\mu_x(x_j)}\ge t_B)=\overline{p_B}
\label{eq ap 20}
\end{equation}
\begin{equation}
    \mathbb{P}(\prod_{j=1}^{d}\frac{\mu_x(x_j)}{\mu_x(x_j+\delta_j)}\leq t_A)= \mathbb{P}(\prod_{j=1}^{d}\frac{\mu_x(x_j)}{\mu_x(x_j+\delta_j)}\ge t_B)
\label{eq ap 21}
\end{equation}

Thus, this completes the proof. 
\end{proof}

\subsection{Binary Case for Theorem 2}
\label{apd thm2 binary}

\begin{thm}{(\textbf{Universal Certified Robustness (Binary Case)})}
Let $f:\mathbb{R}^d \rightarrow \mathcal{Y}$ be any deterministic or random function, and let $\epsilon$ follows any continuous distribution. Let $g$ be defined as in (\ref{general randomized smoothing}). Suppose the most probable class $c_A \in \mathcal{Y}$ and the lower bound of the probability $\underline{p_A}$ satisfy:
\begin{equation}
    \mathbb{P}(f+\epsilon)=c_A \ge \underline{p_A} \ge \frac{1}{2}
\end{equation}

Then $g(x+\delta)=c_A$ for all $||\delta||_p \leq R$, where R is given by the optimization:

\begin{mini*}|l|
  {\delta}{R=||\delta||_p}{}{}
  \addConstraint{    \mathbb{P}(\frac{\mu_x(x-\delta)}{\mu_x(x)}\leq t_A)=\underline{p_A}}
  \addConstraint{    \mathbb{P}(\frac{\mu_x(x)}{\mu_x(x+\delta)}\leq t_A)=\frac{1}{2}}
\end{mini*}

\label{thm2 binary}
\end{thm}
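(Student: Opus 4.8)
The plan is to derive Theorem \ref{thm2 binary} as a direct specialization of Theorem \ref{thm2} by collapsing the multi-class structure into the binary setting. In the binary case the second most probable class $c_B$ is simply the complement of $c_A$, so its prediction probability satisfies $\overline{p_B} = 1 - \underline{p_A}$. First I would observe that the three general robustness boundary conditions in Equation (\ref{thm2: conditions}) can be reduced: since there are only two classes, the events $\{\mu_x(x-\delta)/\mu_x(x) \le t_A\}$ and $\{\mu_x(x-\delta)/\mu_x(x) \ge t_B\}$ can be made complementary by taking $t_A = t_B = t$ for a single threshold $t$. Under this identification, the condition $\mathbb{P}(\mu_x(x-\delta)/\mu_x(x) \ge t_B) = \overline{p_B}$ becomes automatically equivalent to $\mathbb{P}(\mu_x(x-\delta)/\mu_x(x) \le t_A) = \underline{p_A}$, because the two probabilities sum to one (up to the measure-zero boundary set where the likelihood ratio equals $t$). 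This eliminates the second constraint entirely.

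Next I would handle the third (symmetry) condition. In Theorem \ref{thm2} the condition reads $\mathbb{P}(\mu_x(x)/\mu_x(x+\delta) \le t_A) = \mathbb{P}(\mu_x(x)/\mu_x(x+\delta) \ge t_B)$. With $t_A = t_B = t$ in the binary case, the right-hand side is the complement of the left-hand side, so the equality forces each probability to equal $\tfrac{1}{2}$. This is exactly the second constraint $\mathbb{P}(\mu_x(x)/\mu_x(x+\delta) \le t_A) = \tfrac{1}{2}$ in the binary statement. Thus the two remaining constraints in Theorem \ref{thm2 binary} are precisely the surviving conditions after specializing the three general conditions to the binary setting. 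I would then invoke the Neyman-Pearson argument already established in the proof of Theorem \ref{thm2} (Appendix \ref{apd: thm2 proof}): applying Lemma \ref{NP lemma} to the set $A = \{z : \mu_y(z)/\mu_x(z) \le t\}$ still gives $\mathbb{P}(f(y)=c_A) \ge \mathbb{P}(y \in A)$, and in the binary case bounding $c_A$ from below automatically bounds the single competing class from above, so no separate argument for $c_B$ is needed.

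The main obstacle, and the step requiring the most care, is justifying that the single threshold $t$ can simultaneously satisfy the perturbed-input constraint at level $\underline{p_A}$ and the symmetry constraint at level $\tfrac12$ without contradiction, and that collapsing $t_A$ and $t_B$ into one value does not lose any robustness that the general three-condition formulation captured. Concretely, I would verify that the boundary set $\{z : \mu_x(x-\delta)/\mu_x(x) = t\}$ has measure zero under the continuous PDF $\mu_x$ (which holds for any continuous distribution away from degenerate cases), so that the $\le$ and $\ge$ events genuinely partition the probability space and the complementarity $\overline{p_B} = 1 - \underline{p_A}$ is exact rather than approximate. Once this measure-zero technicality is dispatched, the reduction is mechanical and the conclusion $g(x+\delta) = c_A$ for all $\|\delta\|_p \le R$ follows immediately, with $R$ given by the minimum $\ell_p$-norm perturbation satisfying the two binary conditions, exactly mirroring the optimization displayed at the end of the proof of Theorem \ref{thm2}.
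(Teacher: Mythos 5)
Your derivation is correct and matches what the paper intends: the paper states the binary case in Appendix B.2 without any explicit proof, treating it as an immediate specialization of Theorem \ref{thm2}, and your reduction (setting $\overline{p_B}=1-\underline{p_A}$, collapsing $t_A=t_B$ so the second condition becomes redundant and the symmetry condition forces the value $\tfrac{1}{2}$, then reusing the Neyman--Pearson argument from Appendix \ref{apd: thm2 proof}) is exactly that specialization, spelled out. The measure-zero caveat you flag is real (e.g., the Laplace likelihood ratio has atoms), but it is a gap shared with the paper's own general statement rather than one introduced by your argument.
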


\subsection{UniCR (Binary Case)}

Similar to the binary case of Theorem \ref{thm2}, the binary case of the two-phase optimization can be easily derived:

\begin{align}
    &R=||\lambda \bm{\delta}||_p, where \ \bm{\delta} \in \argmin_{\delta} ||\lambda \bm{\delta} ||_p  \nonumber\\
    &s.t.~~ \ \lambda = \argmin_{\lambda} |K| \nonumber\\
    & \qquad~{\mathbb{P}(\frac{\mu_x(x-\lambda\bm{\delta})}{\mu_x(x)}\leq t_A)=\underline{p_A}} \nonumber\\
    &\qquad ~ K=\mathbb{P}(\frac{\mu_x(x)}{\mu_x(x+\lambda\bm{\delta})}\leq t_A)-\frac{1}{2} \nonumber\\
    &\qquad ~\underline{p_A} \ge \frac{1}{2} \nonumber
\label{opt 2 binary}
\end{align}

\subsection{UniCR Bound}
\label{apd: confidence}
The certified radius $R$ approximated by the two-phase optimization is tight if achieving the optimality. Under this assumption, we analysis the confidence bound for the certification. We follow \cite{cohen2019certified} to compute the probabilities $\underline{p_A}$ and $\overline{p_B}$ using Monte Carlo method with sample number n. The confidence is $1-\alpha_0$, where $p_A>=\alpha_0^{1/n}$. To estimate the auxiliary parameters $t_A$ and $t_B$, we use Dvoretzky–Kiefer–Wolfowitz inequality \cite{dvoretzky1956asymptotic} to bound the CDFs of the random variables $\frac{\mu_x(x-\lambda\delta)}{\mu_x(x)}$ and $\frac{\mu_x(x)}{\mu_x(x+\lambda\delta)}$, then determine the $t_A$ and $t_B$ using Algorithm \ref{alg1}.

\begin{Lem}{(\textbf{Dvoretzky–Kiefer–Wolfowitz inequality}(restate))}
Let $X_1, X_2, ..., X_n$ be real-valued independent and identically distributed random variables with cumulative distribution function $F(\cdot)$, where $n \in \mathbb{N}$.Let $F_n$ denotes the associated empirical distribution function defined by
\begin{equation}
    F_n(x)=\frac{1}{n} \sum_{i=1}^{n} \mathbf{1}_{\{X_i<=x\}}, x \in \mathbb{R}
\end{equation}

The Dvoretzky–Kiefer–Wolfowitz inequality bounds the probability that the random function $F_n$ differs from $F$ by more than a given constant $\Delta \in \mathbb{R}^+$ :

\begin{equation}
    \mathbb{P}(\sup_{x\in\mathbb{R}}|F_n(x)-F(x)|>\Delta) \leq 2e^{-2n\Delta^2}
\end{equation}
\label{DKW lemma restate}
\end{Lem}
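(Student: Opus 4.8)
The plan is to prove the two-sided DKW inequality with the sharp constant $2$ by reducing it, in two steps, to a one-sided maximal deviation bound for the uniform empirical process, and then establishing that one-sided bound with the Hoeffding-type exponent $e^{-2n\Delta^2}$.

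First I would reduce to the uniform case. Since $F$ is nondecreasing, the quantile (probability integral) transform lets me replace $X_i$ by $U_i := F(X_i)$; when $F$ is continuous these $U_i$ are exactly Uniform$[0,1]$ and $\sup_x |F_n(x)-F(x)| = \sup_{0\le u\le 1}|G_n(u)-u|$, where $G_n$ is the empirical CDF of $U_1,\dots,U_n$. For discontinuous $F$ the supremum can only decrease, so it suffices to treat the uniform case. Next I would pass from the two-sided to the one-sided statement: by the symmetry $U_i \mapsto 1-U_i$, the quantity $\sup_u (u-G_n(u))$ has the same distribution as $\sup_u (G_n(u)-u)$, so a union bound over the two one-sided events produces exactly the factor $2$. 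It therefore remains to prove
\begin{equation}
\mathbb{P}\left(\sup_{0\le u\le 1}\bigl(G_n(u)-u\bigr) > \Delta\right) \le e^{-2n\Delta^2}. \nonumber
\end{equation}

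For a single fixed $u$, $nG_n(u)$ is Binomial$(n,u)$, and Hoeffding's inequality for an average of $n$ i.i.d.\ $[0,1]$-valued indicators gives exactly $\mathbb{P}(G_n(u)-u>\Delta)\le e^{-2n\Delta^2}$; the whole difficulty is that taking the supremum over the continuum of thresholds $u$ must \emph{not} inflate this constant. Here I would exploit the monotone, piecewise-constant structure of $G_n$: writing $U_{(1)}\le\cdots\le U_{(n)}$ for the order statistics, the function $G_n(u)-u = k/n - u$ on $[U_{(k)},U_{(k+1)})$ is decreasing in $u$, so the supremum is attained at a jump point and the event is equivalent to $\{\exists k:\ U_{(k)} < k/n - \Delta\}$. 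The maximal deviation is thus governed by the first index at which the empirical process crosses the level $\Delta$.

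The main obstacle is controlling this first-passage event while retaining the sharp constant $2$, which is precisely the content of Massart's refinement of the original Dvoretzky--Kiefer--Wolfowitz argument. The plan here is to condition on the first crossing index and apply a reflection/ballot-type argument to the remaining exchangeable increments, collapsing the maximal bound to a single one-dimensional binomial tail to which the Hoeffding exponent applies. The delicate point is that a naive union bound over the $n$ candidate crossing indices would cost a polynomial factor, so the argument must instead bound the crossing probabilities by a telescoping (martingale) sum whose total is still dominated by $e^{-2n\Delta^2}$, verifying the constant via a Gaussian approximation of the binomial tails. This last estimate---showing that the aggregate first-passage probability does not exceed the single-point Hoeffding bound---is the crux and the only genuinely hard step; everything preceding it is routine reduction.
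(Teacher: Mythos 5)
The paper never proves this lemma: it is quoted as a known external result (the DKW inequality with the sharp constant $2$, which is due to Massart rather than to the original 1956 paper, where the constant was unspecified), so there is no internal proof to compare against and your proposal must be judged as a self-contained argument. Judged that way, it has a genuine gap. Your reductions are correct and standard: the probability integral transform reduces to uniform variables (with the supremum only decreasing when $F$ has jumps), the symmetry $U_i\mapsto 1-U_i$ plus a union bound passes from the two-sided to the one-sided statement at the cost of the factor $2$, and the piecewise-constant structure of $G_n$ reduces the supremum to the event $\{\exists k:\,U_{(k)}<k/n-\Delta\}$. But everything that makes the lemma true with the exponent $-2n\Delta^2$ lives in the step you explicitly defer: showing $\mathbb{P}(D_n^+>\Delta)\le e^{-2n\Delta^2}$ for $D_n^+:=\sup_u(G_n(u)-u)$. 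Your sketch for it --- condition on the first crossing index, a ``reflection/ballot-type argument,'' a ``telescoping (martingale) sum,'' and ``verifying the constant via a Gaussian approximation of the binomial tails'' --- is a heuristic, not an argument. A ballot-type computation does yield an exact expression for the one-sided statistic, the Smirnov/Birnbaum--Tingey series $\mathbb{P}(D_n^+>\Delta)=\Delta\sum_{0\le j\le n(1-\Delta)}\binom{n}{j}\bigl(\Delta+\tfrac{j}{n}\bigr)^{j-1}\bigl(1-\Delta-\tfrac{j}{n}\bigr)^{n-j}$, but bounding that series by $e^{-2n\Delta^2}$ is precisely Massart's contribution and consists of pages of delicate term-by-term non-asymptotic estimates; in particular, a Gaussian approximation can never certify an inequality required to hold for every finite $n$ and every $\Delta>0$, because the limit of $\mathbb{P}(\sqrt{n}\,D_n^+>\lambda)$ is \emph{exactly} $e^{-2\lambda^2}$, so there is no asymptotic slack to absorb approximation error. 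As written, you have correctly reduced the lemma to Massart's theorem, not proved it.

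There is also a concrete flaw in the reduction as stated: the intermediate one-sided inequality you set out to prove for \emph{all} $\Delta$ is stronger than what is available --- Massart's one-sided bound carries the restriction $e^{-2n\Delta^2}\le 1/2$, so your union-bound step cannot be run uniformly in $\Delta$. The repair is one line: when $2e^{-2n\Delta^2}\ge 1$ the two-sided claim is vacuous since probabilities are at most $1$, and in the complementary regime the restricted one-sided bound suffices; but this case split must be stated, since otherwise your intermediate target is a claim not established (and not asserted by the literature) in the small-$\Delta$ regime. Given that the paper treats the lemma purely as a citation, a blind proof attempt only adds value if it supplies the hard estimate; yours correctly maps the route but leaves that crux as a black box.
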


We use the Lemma \ref{DKW lemma restate} to estimate the CDFs in algorithm \ref{alg1}. In condition \ref{opt 2}, we need to estimate $4$ probabilities with confidence $1-2e^{-2n\Delta^2}$ as well as the $p_A$ and $p_B$ with confidence $1-\alpha_0$. Therefore, the confidence that deriving the correct radius is at least $(1-\alpha_0)^2 (1-2e^{-2n\Delta^2})^4 $. In Figure \ref{fig:confidence}, we show the confidence on a varying number of samples when $\Delta=0.1$ and $\alpha_0=0.999$. As the number of samples increases to around $400$ (all our experiments use more than 400 samples), our confidence is very close to Cohen's confidence \cite{cohen2019certified}. Thus, the confidence is nearly 1 in all our experiments. 


\begin{figure}[!h]
    \centering
    \includegraphics[width=60mm]{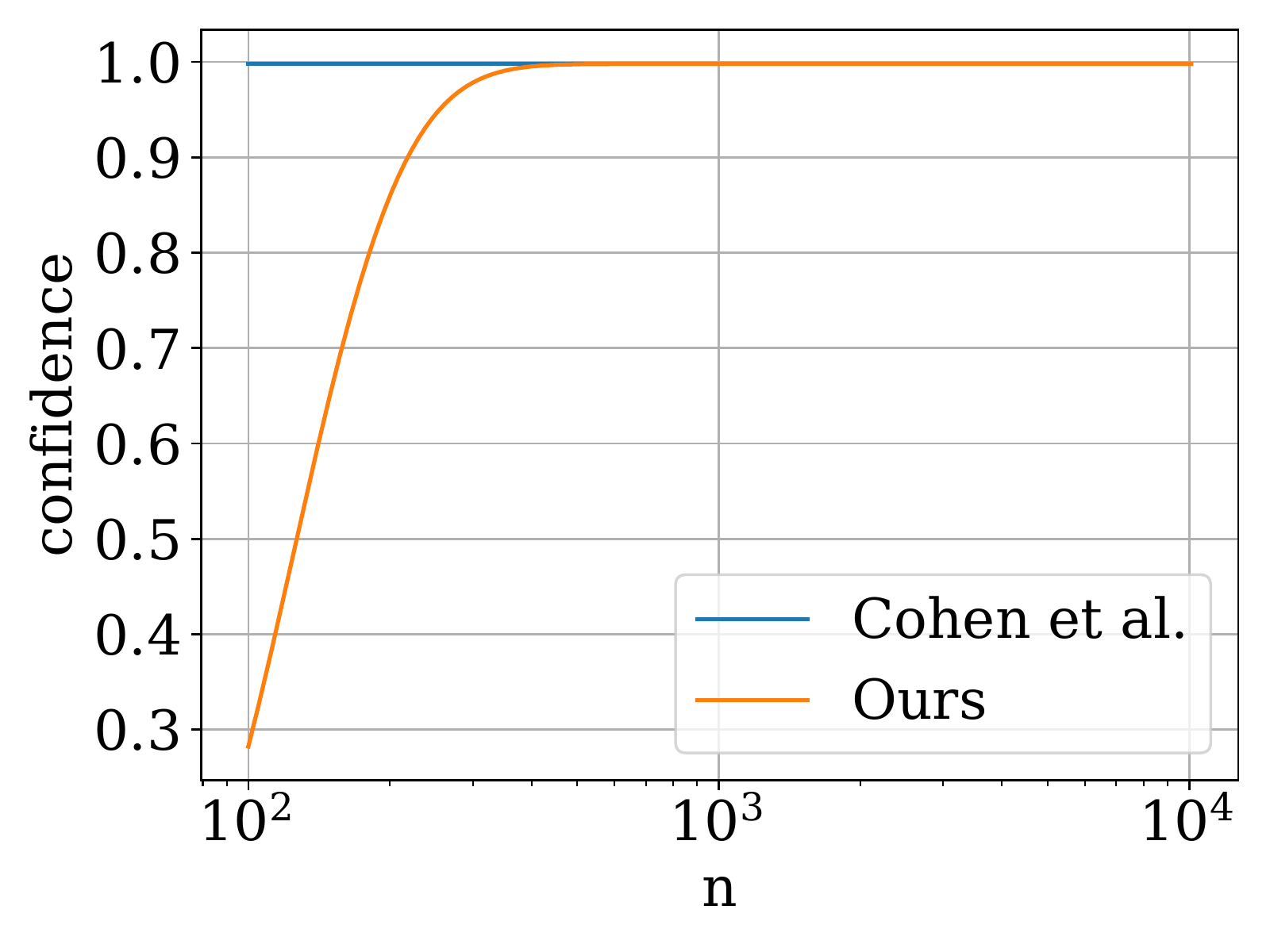}
    \caption{Confidence vs. number of Monte Carlo samples.}
    \label{fig:confidence}
\end{figure}

\subsection{Optimization Convergence}
\label{apd:convergence}
We analysis the convergence of the two-phase optimization and the certification accuracy in this section. On one hand, the optimality of the scalar optimization can be asymptotically achieved by binary search. On the other hand, it is hard to find the minimum $||\lambda \delta||_p$ in the highly-dimensional space, but some special symmetry in the direction of $\delta$ (e.g., spherical symmetry that is also found in \cite{zhang2020black,yang2020randomized}), can help approximate the certified radius. The detailed algorithms are presented in Section \ref{sec:two-phase algorithms}. The defense performance of such universally approximated certified robustness against different real-world attacks is the same as certified robustness (as shown in Appendix \ref{apd: realattack}). Thus, such negligible approximation error is close to 0, but result in many significant new benefits in return.

\section{Algorithms}
\label{apd: algs}
\subsection{Computing $t_A$ and $t_B$}
We present the algorithm to compute the $p_A$ and $p_B$ in Algorithm \ref{alg1}.
\begin{algorithm}
\caption{Computing $t_A$ and $t_B$}
\begin{algorithmic}[1]
\REQUIRE Lower bound of the probabilities, $\underline{p_A}$; upper bound of the probabilities, $\overline{p_B}$; 
perturbation scalar, $\lambda$; 
perturbation, $\delta$; 
noise PDF, $\mu_x$; 
number of samples in the Monte Carlo method, $n$
\ENSURE The auxiliary parameters, $t_A$ and $t_B$
\STATE Sample n noise $\epsilon \in \mathbb{R}^{n \times d}$ from a discrete version of PDF.
\STATE Calculate $\frac{\mu_x(x-\lambda\delta)}{\mu_x(x)}$ using these n samples of noise, $\mu_x$, $\lambda$ and $\delta$
\STATE Estimate the CDF $\Phi$ of $\frac{\mu_x(x-\lambda\delta)}{\mu_x(x)}$ using Monte Carlo method
\RETURN $t_A=\Phi^{-1}(p_A)$ and $t_B=\Phi^{-1}(p_B)$, with inverse CDF $\Phi$
\end{algorithmic}
\label{alg1}
\end{algorithm}

\vspace{-2mm}
\subsection{Scalar Optimization} 
\label{apd: scalarAlg}
We use the binary search to find a scale factor that minimizes $|K|$ (\emph{the distance between $\delta$ and the robustness boundary}). When $K=0$, the perturbation $\delta$ is exactly on the robustness boundary. Fixing the direction of $\delta$, we find two scalars such that $K>0$ and $K<0$. Specifically, we start from a scalar $\lambda_a$ and compute $K$. If $K>0$, then the scaled perturbation $\lambda_a \delta$ is within the robustness boundary, thus we enlarge the scalar to find a $\lambda_b$ such that $K<0$ and vice versa. After that, we iteratively compute the $K$ using $\lambda=\frac{1}{2}(\lambda_a+\lambda_b)$: if $K>0$, we let $\lambda_a=\lambda$; otherwise, we let $\lambda_b=\lambda$. We repeat this iteration until $K$ is less than a threshold or the number of iterations is sufficiently large. The procedures are summarized in Algorithm \ref{alg2}.

\begin{algorithm}
\caption{Scalar Optimization}
\begin{algorithmic}[1]
\REQUIRE Lower bound of the probabilities, $\underline{p_A}$;
upper bound of the probabilities, $\overline{p_B}$;
perturbation scalar, $\lambda$;
perturbation, $\delta$;
noise PDF, $\mu_x$;
number of samples in Monte Carlo method, $n$;
threshold for $K$, $K_m$;
number of iterations for binary search, $N$
\ENSURE The scalar $\lambda$ that minimizes $|K|$

\STATE Find initial scalar $\lambda_a$ and $\lambda_b$ such that $K>0$ and $K<0$
\STATE $\lambda=(\lambda_a+\lambda_b)/2$
\STATE Compute $K$ using $\lambda$
\WHILE{$N>0$ and $|K|>K_m$}
\IF{$K>0$}\STATE $\lambda_a=\lambda$ \ELSE \STATE $\lambda_b=\lambda$\ENDIF
\STATE 
$\lambda=(\lambda_a+\lambda_b)/2$
\STATE 
Compute $K$ using $\lambda$
\STATE N=N-1
\ENDWHILE
\RETURN $\lambda$

\end{algorithmic}
\label{alg2}
\end{algorithm}

\subsection{Direction Optimization}
\label{apd: DirectAlg}

We show how to initialize the positions for different $\ell_p$ norms in PSO. Since some noise follows PDFs with symmetry \cite{zhang2020black,yang2020randomized}, we set the initial position of particles by considering this, e.g., setting the initial positions w.r.t. $\ell_p$ for $p\in \mathbb{R}^+$ as $[0,..., 0,a ,0,...,0]$ and the initial positions w.r.t. $\ell_\infty$ as $[a,a,a,...,a]$, where $a$ is a small random number. Although the search space is highly-dimensional, empirical results show that the radius given by PSO can accurately approximate the theoretical radius given by other methods, e.g., Cohen's \cite{cohen2019certified} (see Figure \ref{fig:P-R curve}). Notice that, for more complicated PDFs without symmetry (which is indeed difficult for deriving the certified radius), PSO can also approximate the certified radius with more particles and iterations. 

\subsection{Hill-climbing algorithm for I-OPT}
\label{apd: I-OPT alg}

The Hill-climbing algorithm is summarized in Algorithm \ref{alg3}.

\begin{algorithm}
\caption{I-OPT with Hill Climbing}
\begin{algorithmic}[1]
\REQUIRE Input data, $x$;
PDF of noise distribution, $\mu_x$;
universally approximated certified robustness, UniCR($\cdot$);
initial hyper-parameters $\bm{\alpha}$;
optimization range of hyper-parameters, $[\bm{L}, \bm{H}]$;
optimization step of hyper-parameters, $\bm{S}$
\ENSURE The optimal hyper-parameters, $\bm{\alpha}_{optimal}$

\STATE Initialize the certified radius $R_0=\text{UniCR}(x, \mu(\bm{\alpha}))$
\STATE For each hyper-parameter $\alpha_i$ in $\bm{\alpha}$:
\IF{$L_i<\alpha_i+S_i<H_i$} \STATE
    $R'=\text{UniCR}(x,\mu(\bm{\alpha}|\alpha_i=\alpha_i+S_i))$ 
    \IF{$R'>R_0$} \STATE $\bm{\alpha}$ is updated with $\alpha_i=\alpha_i+S_i$ \STATE $R_0=R'$
    \ELSIF{$L_i<\alpha_i-S_i<H_i$} \STATE           $R'=\text{UniCR}(x,\mu(\bm{\alpha}|\alpha_i=\alpha_i-S_i))$
    \STATE$R_0=R'$
        \IF {$R'>R$} \STATE $\bm{\alpha}$ is updated with $\alpha_i=\alpha_i-S_i$ \STATE $R_0=R'$ \ENDIF
    \ELSE \STATE break
    \ENDIF
\ELSE \STATE break
\ENDIF
\RETURN $\bm{\alpha}_{optimal}=\bm{\alpha}$

\end{algorithmic}
\label{alg3}
\end{algorithm}

\section{More Experimental Results}
\subsection{Metrics}
\label{apd: Metrics}
We show the illustration of Robustness Score in Figure. \ref{fig:Rscore}

\begin{figure}
    \centering
    \includegraphics[width=55mm]{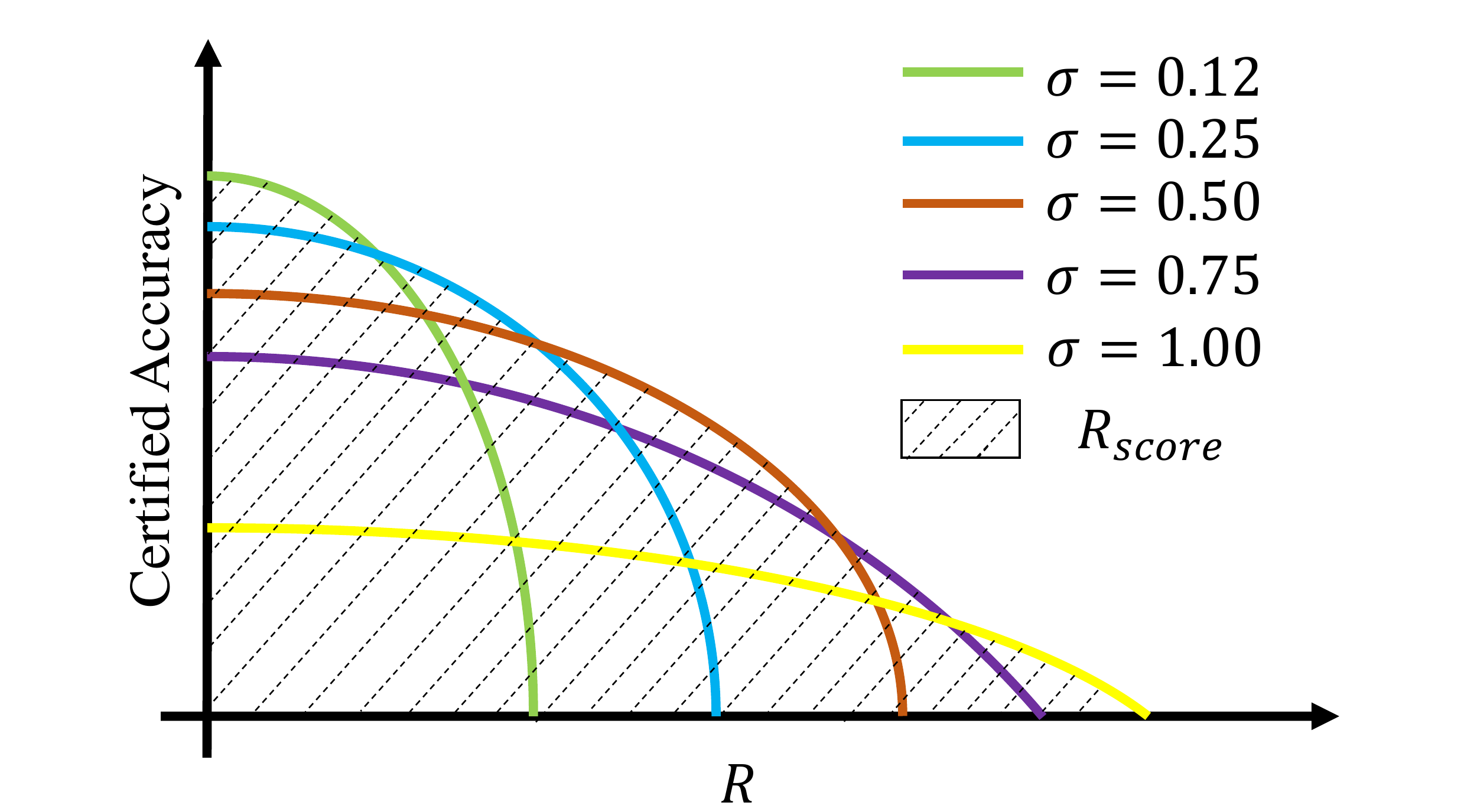}
    \caption{An example of the Robustness Score.}
    \label{fig:Rscore}
\end{figure}

\subsection{Defense against Real Attacks}
\label{apd: realattack}
We evaluate our UniCR's defense accuracy against a diverse set of state-of-the-art attacks, including universal attacks \cite{co2019procedural}, white-box attacks \cite{croce2020reliable,wong2019wasserstein}, and black-box attacks \cite{andriushchenko2020square,chen2020hopskipjumpattack}. We compare UniCR with other state-of-the-art certified schemes \cite{yang2020randomized,cohen2019certified,teng2020ell} against $\ell_1,\ell_2$ and $\ell_\infty$ perturbations. The certified radius $R$ for each image in the test set ($10,000$ images in total) are computed beforehand, and the perturbation generation is constrained by $||\delta||_p=R$ for all the attack methods. 
We define the defense accuracy as the rate that the smoothed classifier can successfully defend against the perturbations with the $\ell_p$ size identical to the the certified radius:



\begin{equation}
    acc_d=\mathbb{E}_{||\delta||_p = R }[\frac{\sum{g(x+\delta)=c_A}}{N}]
\end{equation} where $c_A=g(x)$, $N$ is the total test number. In this defense study, we use 
$500$ samples for both Monte Carlo method and testing.


Table \ref{tab:real attack cifar10} shows the defense accuracy on the smoothed classifier. The attack with "$*$" 
is re-scaled to the required norm (perturbation size $R$) based on their perturbation formats. UniCR 
universally provides a 100\% defense accuracy against all the $\ell_1$, $\ell_2$ and $\ell_\infty$ perturbations generated by all the state-of-the-art attacks. These results validate 
our universally approximated certified robustness ensures the same defense 
performance as certified robustness in practice. 

\begin{table*}[!h]
\vspace{-2mm}
    \centering
    \normalsize
    \caption{Defense against real attacks on CIFAR10 (results on MNIST \& ImageNet are similar and not included due to space limit).}\vspace{-0.1in}
    \resizebox{\linewidth}{!}{
    \begin{tabular}{c c c c c c c}
    \hline

        Defense Accuracy ($\%$)   & Gaussian*     &  Procedural*  \cite{co2019procedural} & Auto-PGD \cite{croce2020reliable} & Wasserstein* \cite{wong2019wasserstein} & Square* \cite{andriushchenko2020square} & HSJ* \cite{chen2020hopskipjumpattack}  \\ \hline
    
      Teng's \cite{teng2020ell} $\ell_1$-norm R                             & 100.00    & 100.00    & 100.00    &100.00     &100.00     &100.00  \\
        Our $\ell_1$-norm R                                                 & 100.00    & 100.00    & 100.00    &100.00     &100.00     &100.00  \\
        Cohen's \cite{cohen2019certified} $\ell_2$-norm R                   & 100.00    & 100.00    & 100.00    &100.00     &100.00     &100.00  \\
        Our $\ell_2$-norm R                                                 & 100.00    & 100.00    & 100.00    &100.00     &100.00     &100.00  \\
        Yang's \cite{yang2020randomized} $\ell_\infty$-norm R               & 100.00    & 100.00    & 100.00    &100.00     &100.00     &100.00  \\
        Our $\ell_\infty$-norm R                                            & 100.00    & 100.00    & 100.00    &100.00     &100.00     &100.00  \\
    \hline
    
      
    \end{tabular}
    }
    \label{tab:real attack cifar10}
\end{table*}
\subsection{List of PDFs}
The PDFs used in our experimental are summarized in Table \ref{tab:pdfs}.
\label{apd: pdflist}
\begin{table}[!h]
    \centering
    \scriptsize
    \caption{List of noise distributions.}
    \begin{tabular}{p{6cm} p{6cm}}
    \hline
        Distribution & Probability Density Function\\
    \hline
         Gaussian & $\propto e^{-|x/\alpha|^2}$ \\
         Laplace & $\propto e^{-|x/\alpha|}$\\
         Hyperbolic Secant & $\propto sech(|x/\alpha|)$\\
         General Normal & $\propto e^{-|x/\alpha|^\beta}$\\
         Cauthy & $\propto \frac{\alpha^2}{x^2+\alpha^2}$\\
         Pareto & $\propto \frac{1}{(1+|x/\alpha|)^{\beta+1}}$\\
         Laplace-Gaussian Mix. & $\propto \beta e^{-|x/\alpha|^1}+ (1-\beta) e^{-|x/\alpha|^2}$\\
         Exponential Mix. & $\propto e^{-\beta |x/\alpha|^1-(1-\beta) |x/\alpha|^2}$\\
    \hline
    \end{tabular}
    \label{tab:pdfs}
\end{table}

\subsection{Efficiency for Radius Derivation}
\label{sec:eff}

We show the runime of our algorithms on deriving the certified radius for the inputs with various input dimensions in Figure \ref{fig:runtime}. For the common input dimensions, e.g., $24\times 24$ for MNIST, $3 \times 32 \times 32$ for CIFAR10, and $3\times 224 \times 224$ for ImageNet, it takes less than 10 seconds for certifying an image on average.   Comparing with the theoretical certified radius deriving, our method's running time is undoubtedly larger since their radius is pre-derived. However, with the significant benefits on the universality and the automatically deriving, we believe the cost of the extra running time is worthwhile and acceptable in practice.

\begin{figure}[!h]
\vspace{-4mm}
    \centering
    \includegraphics[width=70mm]{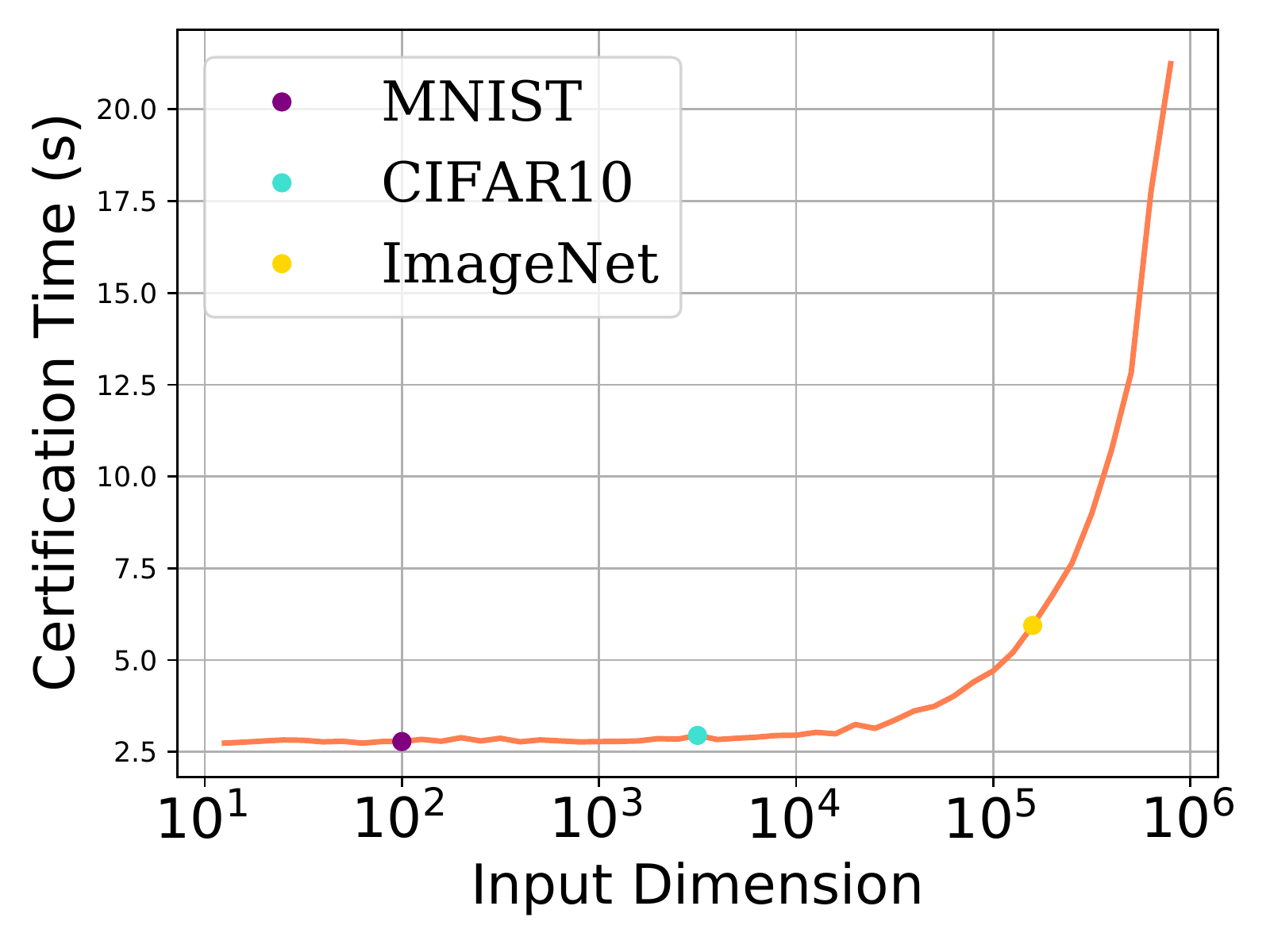}
    \vspace{-0.15in}
    \caption{Runtime of UniCR vs. input sizes (with RTX3080 GPU).}
    \label{fig:runtime}
\end{figure}

\subsection{Any $p$ (besides 1, 2, $\infty$)}
\label{sec: dif p}

\begin{figure}
    \centering
    \includegraphics[width=70mm]{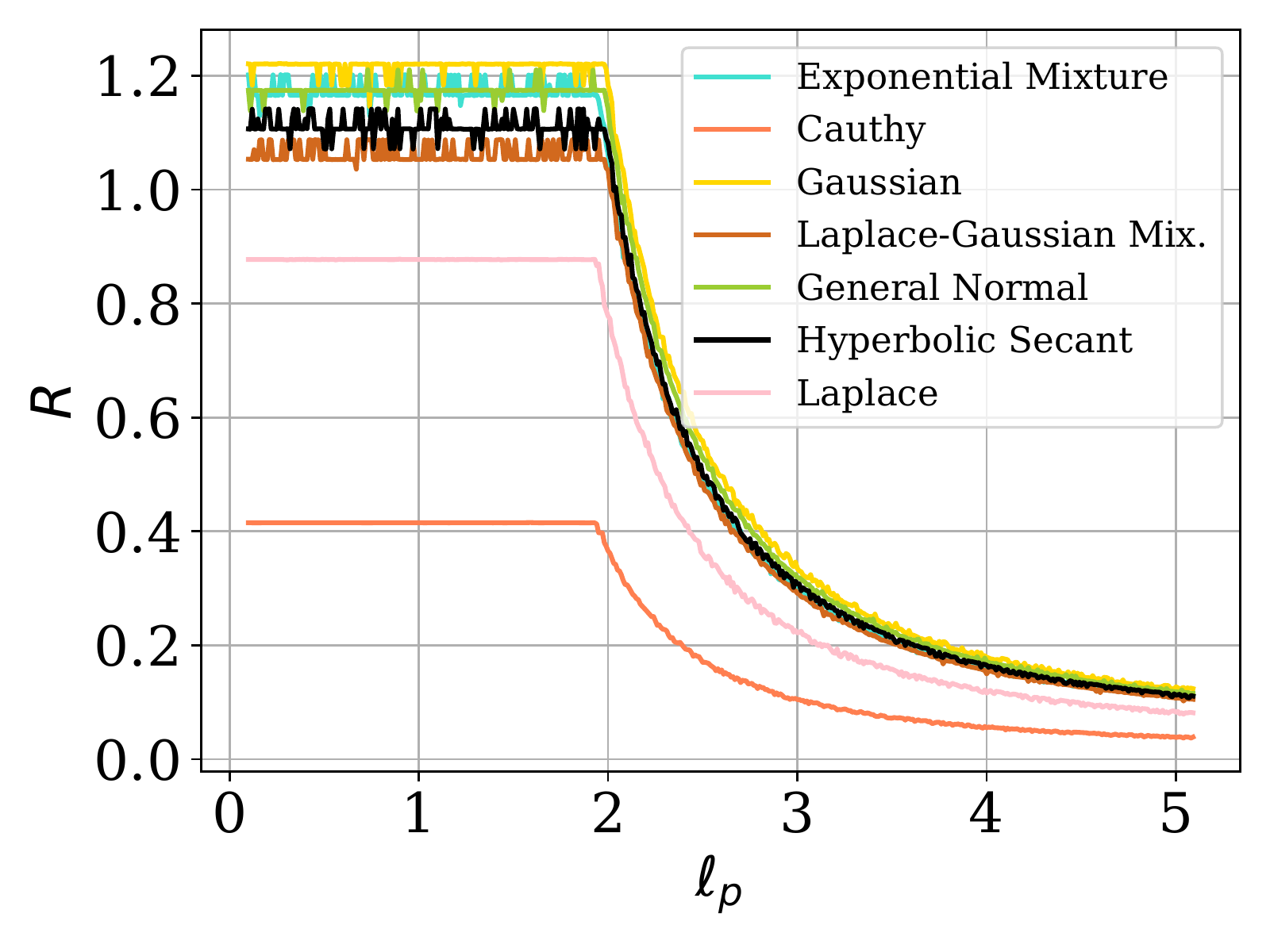}
    \caption{Radius vs. various $\ell_p$ pert.}
    \label{fig:dif p}
    
\end{figure}

Existing methods \cite{cohen2019certified,teng2020ell} usually focus on the certified radius in a specific norm, e.g., $\ell_1$, $\ell_2$ or $\ell_\infty$ norms. Some methods \cite{zhang2020black,yang2020randomized} provide certified robustness theories for multiple norms but specific settings are usually needed for deriving the certified radii in different norms. None of the existing methods can automatically compute the certified radius in any $\ell_p$ norm. In this section, we show our UniCR can automatically approximate the certified radii for various $p$, in which $p$ is a real number greater than 0. 

In the experiments, we set the probability $\underline{p_A}=0.9$ and draw the lines of certified radius w.r.t. different $p$ for $p>0$. We show the results computed with different noise distributions in Figure \ref{fig:dif p}. We observe that when $p \in (0,2]$, the certified radius for different $p$ are approximately identical. This finding also matches the theoretical results in Yang et al. \cite{yang2020randomized}, in which the certified radii in $\ell_1$ and $\ell_2$ norm are exactly the same for multiple distributions. When $p>2$, we observe that the certified radius decreases as $p$ increases.

\subsection{Evaluations on Complicated PDFs}
\label{apd:fine-grain evaluation}

\begin{figure*}[t]
    \centering
    \subfigure[General Normal vs. $\ell_1$ perturbations]{
    \begin{minipage}[t]{0.3\linewidth}
    \centering
    \includegraphics[width=38 mm]{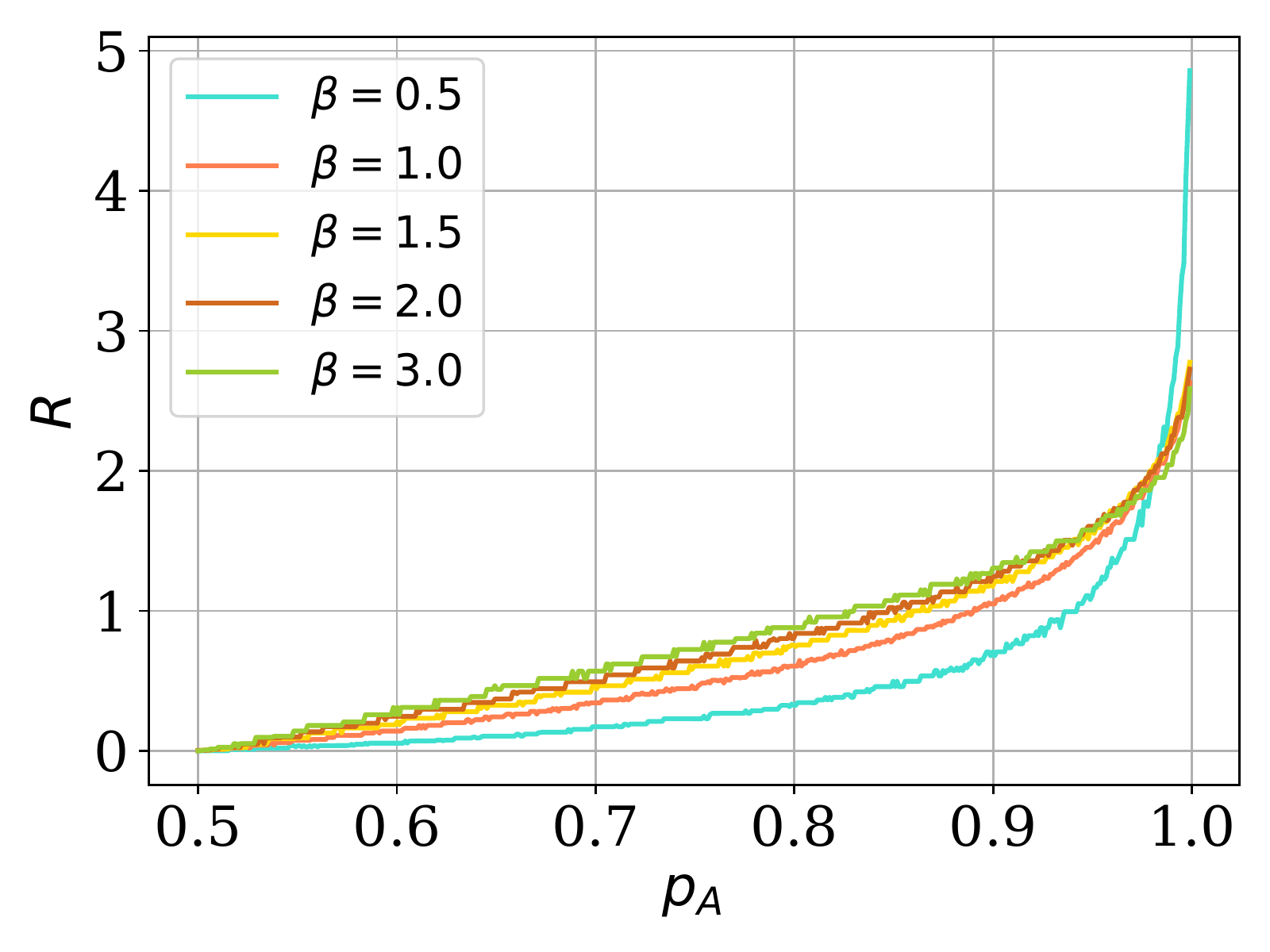}
    \end{minipage}
    }
    \subfigure[General Normal vs. $\ell_2$ perturbations]{
    \begin{minipage}[t]{0.3\linewidth}
    \centering
    \includegraphics[width=38 mm]{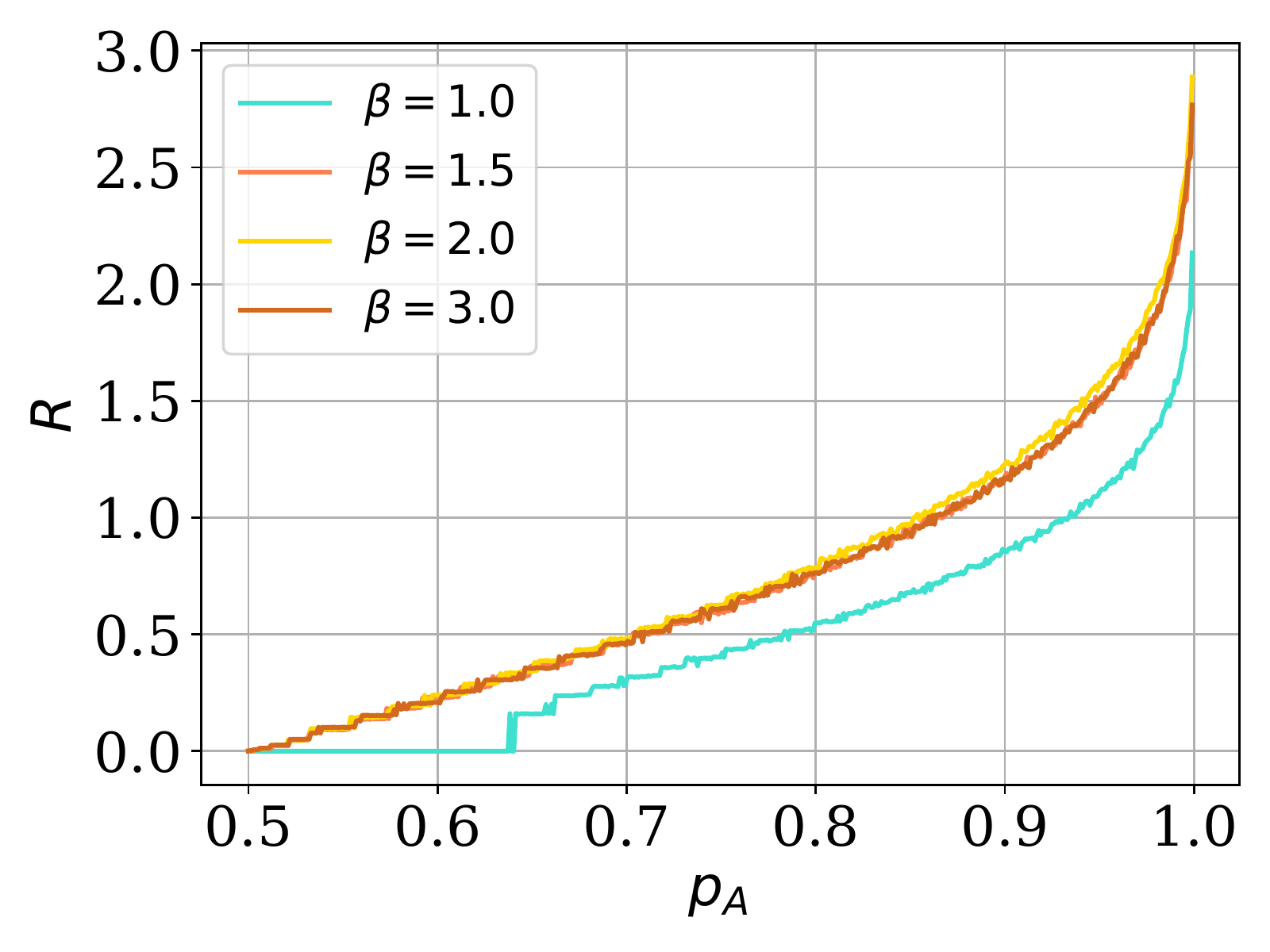}
    \end{minipage}
    }
    \subfigure[General Normal vs. $\ell_\infty$ perturbations]{
    \begin{minipage}[t]{0.3\linewidth}
    \centering
    \includegraphics[width=38 mm]{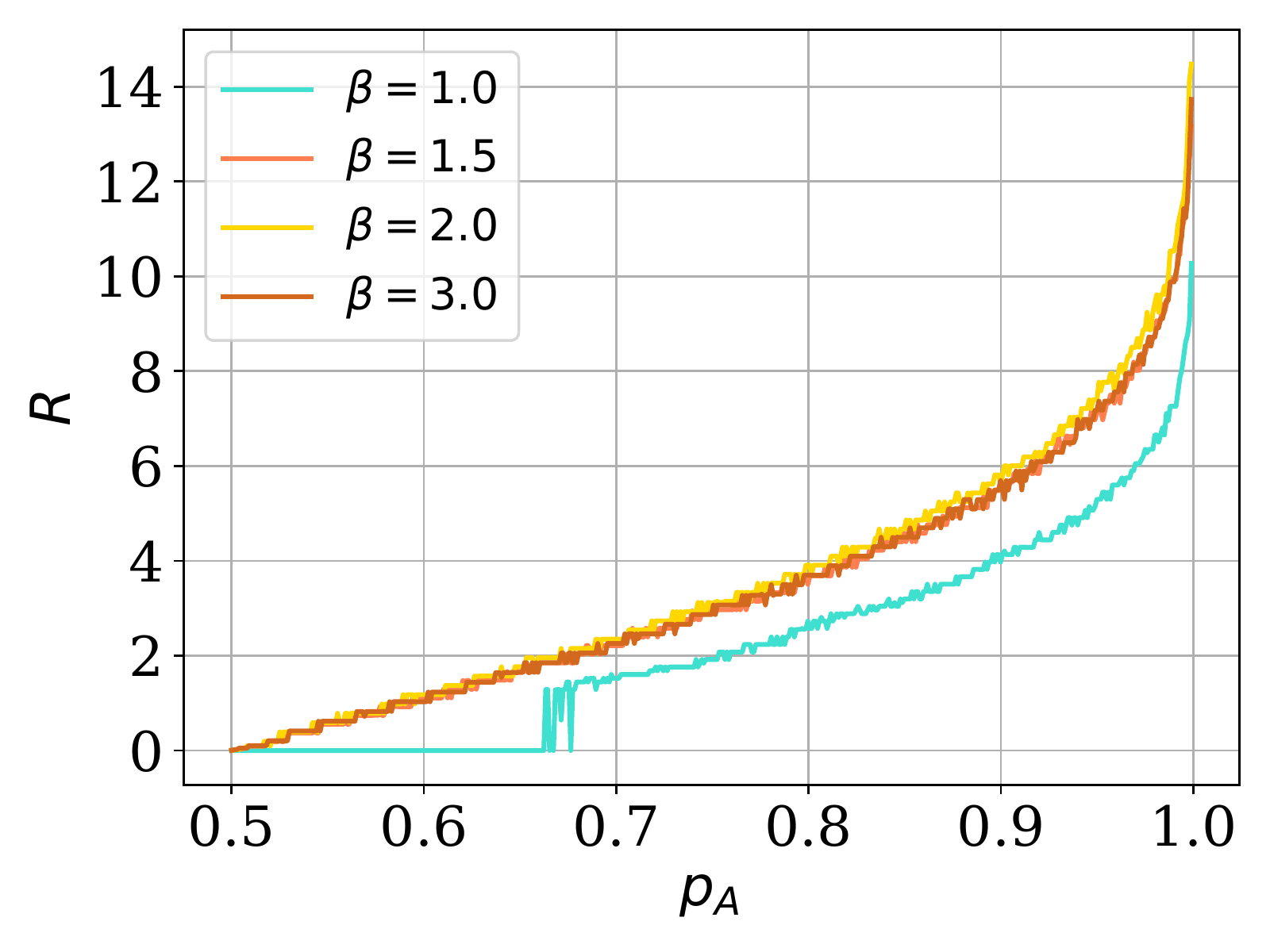}
    \end{minipage}
    }
    \subfigure[Laplace-Gaussian Mixture vs. $\ell_\infty$ perturbations]{
    \begin{minipage}[t]{0.3\linewidth}
    \centering
    \includegraphics[width=38 mm]{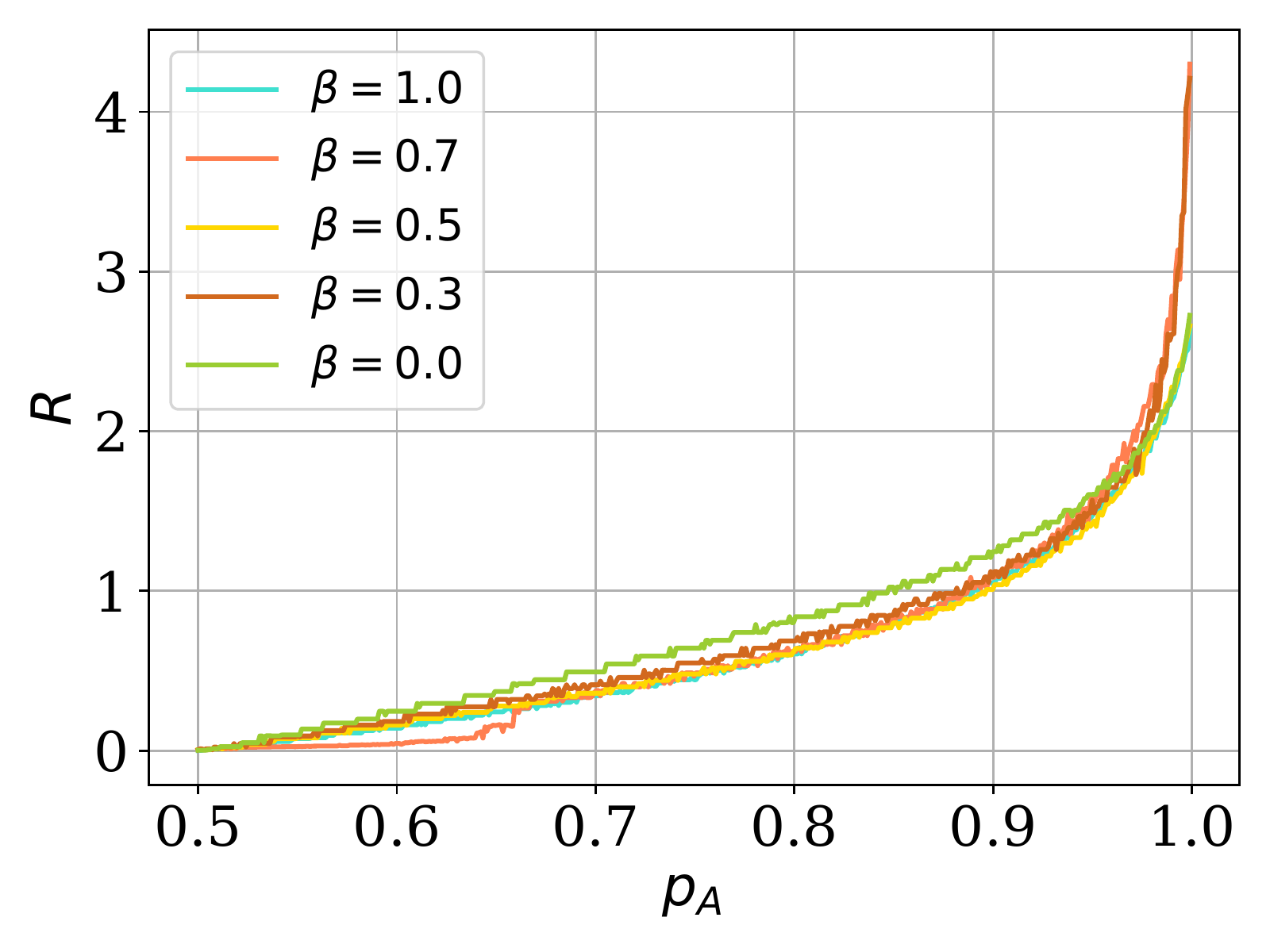}
    \end{minipage}
    }
    \subfigure[Laplace-Gaussian Mixture vs. $\ell_\infty$ perturbations]{
    \begin{minipage}[t]{0.3\linewidth}
    \centering
    \includegraphics[width=38 mm]{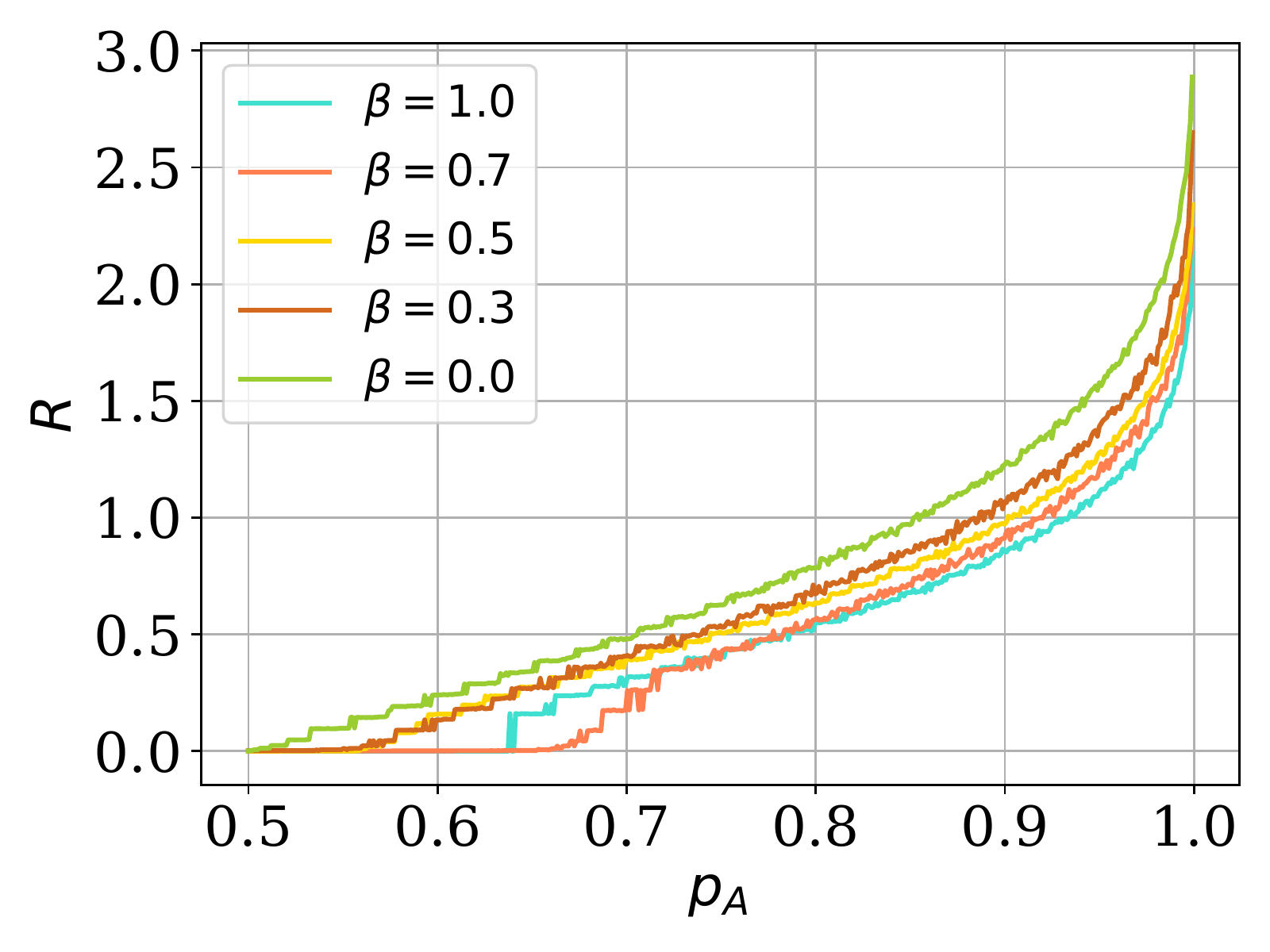}
    \end{minipage}
    }
    \subfigure[Laplace-Gaussian Mixture vs. $\ell_\infty$ perturbations]{
    \begin{minipage}[t]{0.3\linewidth}
    \centering
    \includegraphics[width=38 mm]{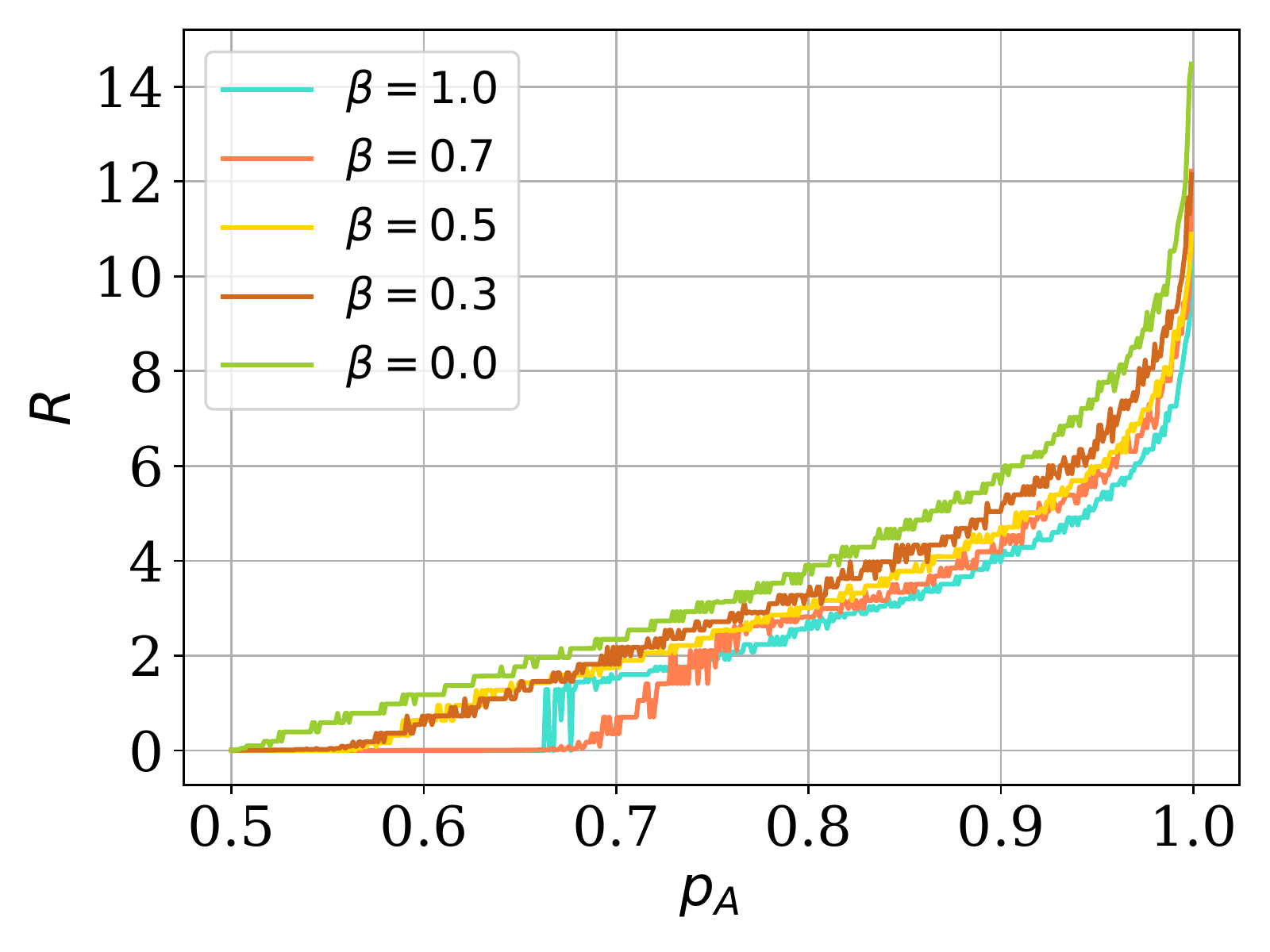}
    \end{minipage}
    }
    
    \subfigure[Exponential Mixture vs. $\ell_\infty$ perturbations]{
    \begin{minipage}[t]{0.3\linewidth}
    \centering
    \includegraphics[width=38 mm]{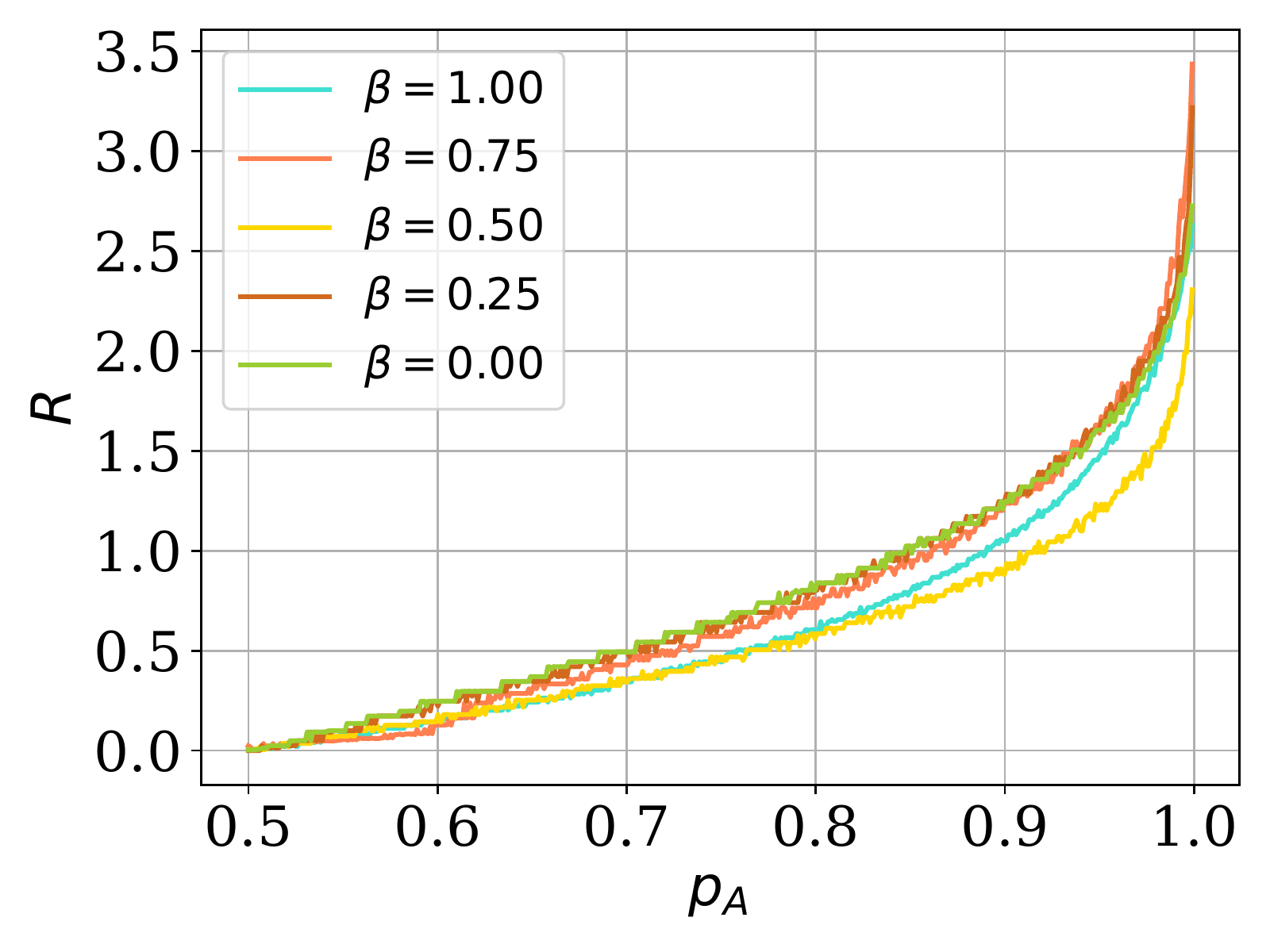}
    \end{minipage}
    }
    \subfigure[Exponential Mixture vs. $\ell_\infty$ perturbations]{
    \begin{minipage}[t]{0.3\linewidth}
    \centering
    \includegraphics[width=38 mm]{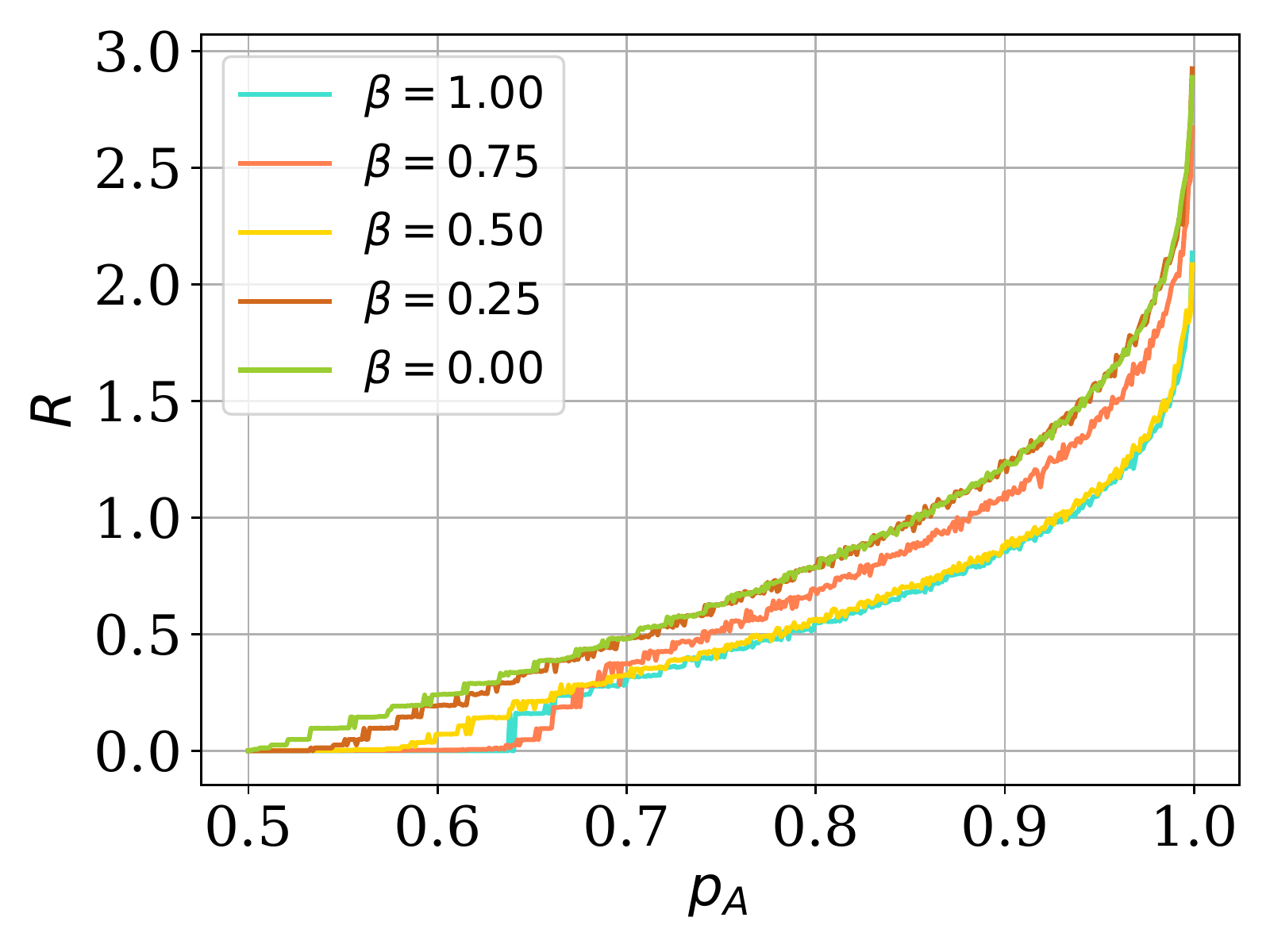}
    \end{minipage}
    }
    \subfigure[Exponential Mixture vs. $\ell_\infty$ perturbations]{
    \begin{minipage}[t]{0.3\linewidth}
    \centering
    \includegraphics[width=38 mm]{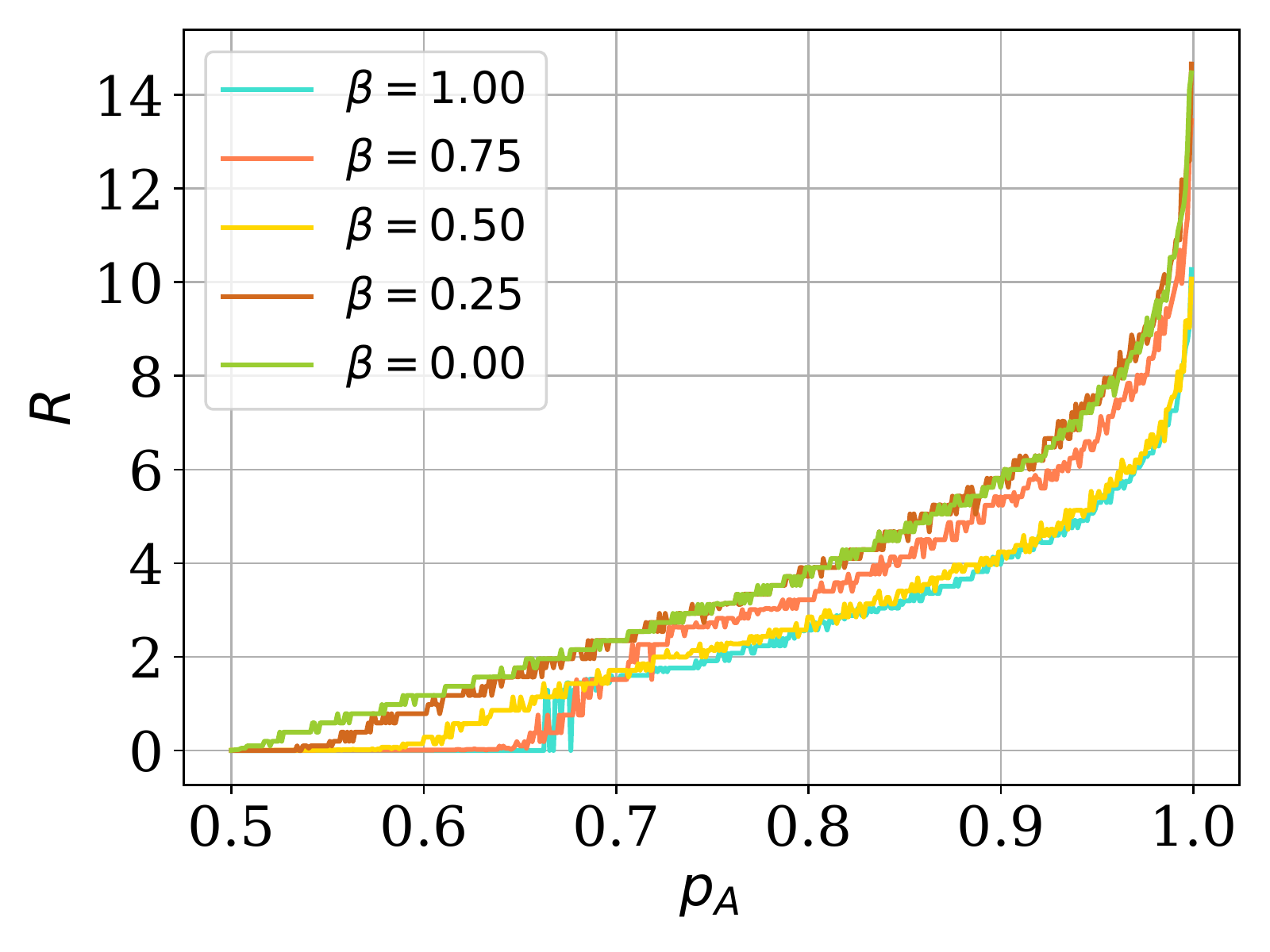}
    \end{minipage}
    }
    
    \centering
    \caption{$p_A$-$R$ curves of General Normal, Laplace-Gaussian Mixture, and Exponential Mixture noise with a varying $\beta$. }
    \label{fig:P-R galary}
\end{figure*}

We provide a fine-grained evaluation on the complicated distributions \cite{mohammady2020r2dp}, e.g., General Normal, Laplace-Gaussian Mixture, and Exponential Mixture noises with various $\beta$. It shows that the Gaussian (i.e., $\beta=2$ for General Normal, $\beta=0$ for Laplace-Gaussian Mixture and Exponential Mixture) is the optimal noise in these $\beta$ setting. We also observe the ``crash'' on Laplace-based distributions when $p_A$ is small.

\subsection{Certification on Non-Smoothed Classifier}

\begin{table*}[!h]
    \centering
    \normalsize
    \caption{Certified accuracy on standard classifier.}
    \resizebox{\linewidth}{!}{
    \begin{tabular}{c c c c c c c c c c}
    \hline
         radius $R$ & 0.0 & 0.1 & 0.2 & 0.3 & 0.4 & 0.5 &0.6 & 0.7 &0.8 \\
    \hline
         Yang's \cite{yang2020randomized} vs. $\ell_1$-norm  & 10.6 & 10.4 & 10.4 &9.8 &8.8 & 8.2 &5.4 &2.2 &1.0 \\
         Ours vs. $\ell_1$-norm & \textbf{98.8} & \textbf{47.0} & \textbf{22.4 }&\textbf{17.8} &\textbf{13.8} & \textbf{10.2} &\textbf{7.0} &\textbf{3.8} &1.0   \\
         \hline
         Cohen's  \cite{cohen2019certified} vs. $\ell_2$-norm  &10.6 &10.4 &10.4 &9.6 &8.8 &8.2 &5.6 &2.2 &1.2 \\
         Ours vs. $\ell_2$-norm  &\textbf{98.8} &\textbf{46.0} &\textbf{22.4} &\textbf{17.6} &\textbf{13.8} &\textbf{9.8} &\textbf{7.0} &\textbf{3.8} &1.2 \\
         \hline
         Yang's \cite{yang2020randomized} vs. $\ell_\infty$-norm (at $R/255$) &10.6 &10.6 &10.6 &10.4 &10.4 &10.4 &10.4 &10.4 &10.4 \\
         Ours vs. $\ell_\infty$-norm (at $R/255$) &\textbf{98.6} &\textbf{92.4} &\textbf{69.4} &\textbf{61.6} &\textbf{53.6} &\textbf{46.0} &\textbf{37.8} &\textbf{27.4} &\textbf{24.4} \\
    \hline
    \end{tabular}}
    \label{tab:std classifier}
\end{table*}

Besides certifying inputs with the smoothed classifier, our input noise optimization (I-OPT) can certify input with a standard classifier without degrading the classifier accuracy on clean data (on the contrary, existing works have to trade off such accuracy for certified defenses). 


Specifically, since our I-OPT allows the noise for the input certification to be different from the noise used in training, a special case of the training noise is no noise ($\sigma=0$). This means that we can certify a naturally-trained classifier (standard classifier). This provides an obvious benefit that the classifier can still execute normal classification on clean data with high accuracy since the standard classifier is trained without noise. Also, with I-OPT, we can tune the noise for the input to maintain the prediction accuracy. Thus, any classifier can be certifiably protected against perturbations without degrading the general performance on clean data. 

To maintain the performance on standard classification, we add a condition while performing I-OPT:

\begin{equation}
    g(x+\delta)=f(x)
\end{equation}

We show this application on a standard ResNet110 classifier trained on CIFAR10 (see Table \ref{tab:std classifier}). For the baselines, we use Gaussian noise ($\sigma=0.35$) and its corresponding theoretical radius \cite{yang2020randomized,cohen2019certified} for certification. Our method uses I-OPT with General Normal noise and initializes it with the same $\sigma$. While approximating the certified radius with UniCR, we generate $4,000$ samples with the Monte Carlo method on CIFAR10. 

The table shows that over $98.6\%$ of the inputs are certified by our method with a radius $R>0$. This means that over $98.6\%$ of the samples are certifiably protected while only $10.6\%$ of inputs are certified by the baselines, which is nearly the accuracy by random guessing. This significant improvement emerges since the I-OPT could optimize the noise PDF for each input even though the classifier is not trained with noise (non-smoothed classifier). Although the certified radii are low compared to smoothly-trained classifiers, it provides a certifiable protection on perturbed data while maintaining the high accuracy for classifying clean data.

\section{Visual Examples of I-OPT}
\label{apd:examples}

We present some examples of I-OPT on the ImageNet dataset against $\ell_1$, $\ell_2$ and $\ell_\infty$ perturbations, respectively (see Figure \ref{fig:an example}). In the first case ($\ell_1$ perturbations), without executing the I-OPT, our UniCR certifies the input with a radius $R=1.24$. Our I-OPT optimizes the distribution as the right-most figure shows, then the certified radius is improved to $1.48$ with our UniCR. Similarly, in the rest cases, we show I-OPT can improve the certified radius significantly by optimizing the noise distribution. Especially, we improve the radius from $0.35$ to $1.30$ in the second case.

\begin{figure*}[!h]
    \centering
    \includegraphics[width=\linewidth]{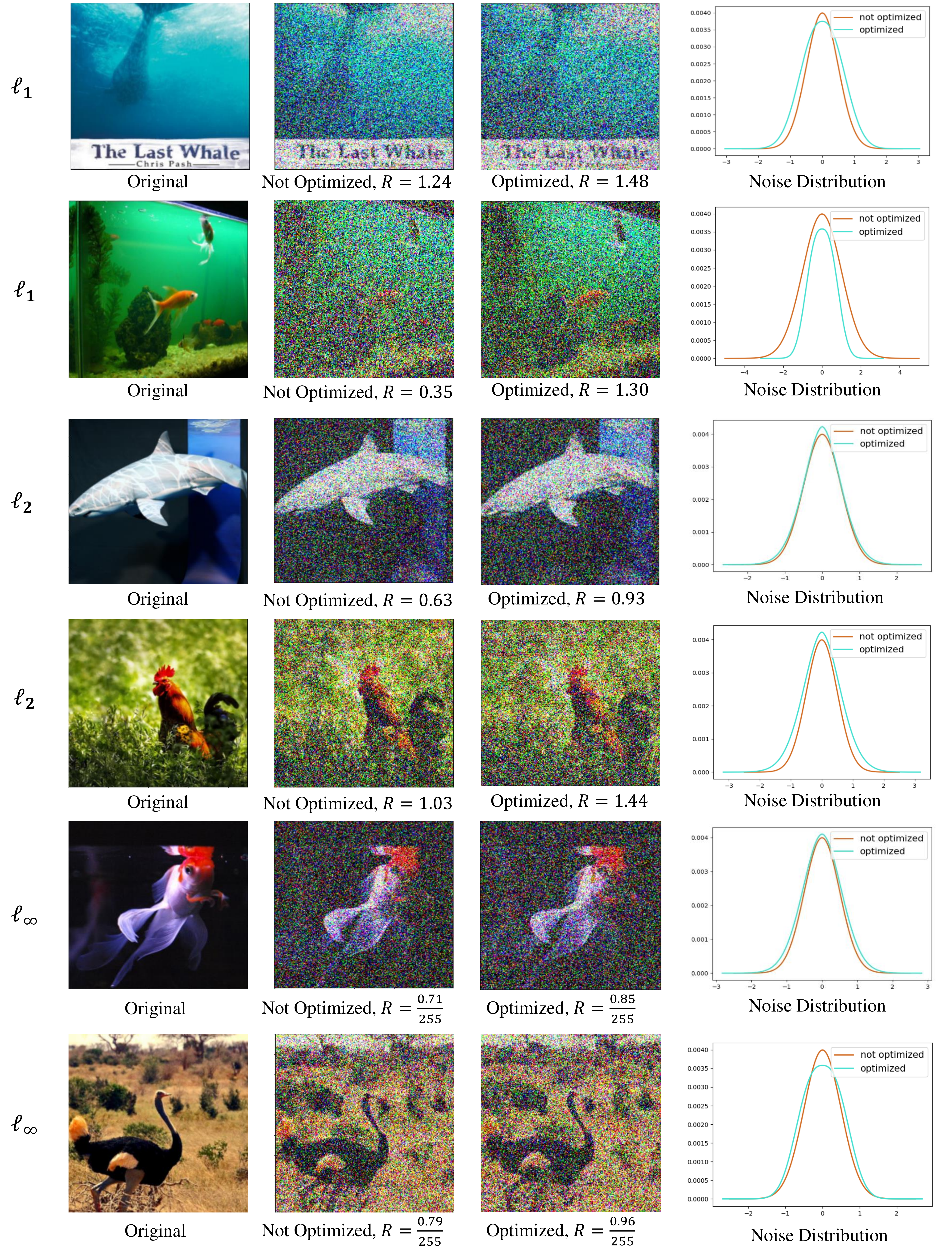}
    \caption{\textmd{Example images of applying I-OPT (based on UniCR) for smoothed classifier against different $\ell_p$ perturbations on the ImageNet dataset. From the left to right, the first figure shows the original image. The second and third figures show the smoothed image without I-OPT and with I-OPT, respectively. The fourth figure shows the corresponding distributions before/after I-OPT.}}
    \label{fig:an example}
\end{figure*}

\section{Discussions}
\label{sec:dis}

\noindent\textbf{Universal Certified Robustness}. 
It might be impractical to make a universal framework satisfy all the theoretical conditions w.r.t. all $\ell_p$ perturbations, especially $p$ can be any positive real number. Thus, we admit that UniCR may not strictly satisfy certified robustness all the time due to the approximated optimization. However, extensive empirical results confirm that our derived radii highly approximate the theoretical certified radii against different $\ell_p$ perturbations. In addition, the defense performance against real attacks also illustrate that our method is as reliable as different theoretical certified radii. We believe that with the negligible error in practice, UniCR can be deployed as a universal framework to significantly ease the process of achieving certified robustness in different scenarios.






\vspace{0.05in}

\noindent\textbf{Certifying Perturbed Data with Randomized Smoothing}. Traditional randomized smoothing usually assumes that the input is clean and empirical defenses \cite{madry2018towards,hong2022eye} are not applied, if the input data is perturbed before certification, then certification in I-OPT might be inaccurate. Indeed, the certification in traditional randomized smoothing (e.g., \cite{cohen2019certified}) methods also depend on the inputs (since $p_A$ is different for different inputs), they might be inaccurate if the input data is perturbed, either. Thus, randomized smoothing based approaches focus on certifying clean inputs rather than correcting perturbed inputs. We will study this interesting problem on certifying both clean and perturbed inputs in the future.




\vspace{+2mm}

\noindent\textbf{Can existing methods adopt noise optimization?} A question here is that if the noise optimization can improve the certified radius, can the theoretical methods provide personalized randomization for each input? The personalized randomization is actually not adaptable in the theoretical methods since they cannot automatically derive the certified radius for different noise distributions, especially for uncommon distributions, e.g., $e^{-|x/0.5|^{1.5}}$. Instead, our UniCR can automatically derive the certified radius for any distribution within the continuous parameter space. 
\vspace{+2mm}

\noindent\textbf{Extensions}. We evaluate our UniCR on the image classification. Indeed, our UniCR is a general method that can be directly applied to other tasks, e.g., video classification \cite{DBLP:conf/ndss/LiNPSKRS19,xie2022universal}, graph learning (e.g., node/graph classification \cite{DBLP:conf/kdd/WangJCG21} and community detection~\cite{jia2020certified}), and natural language processing \cite{DBLP:conf/emnlp/JiaRGL19}.

\end{document}